\providecommand{\customgenericname}{}
\newcommand{\newcustomtheorem}[2]{%
  \newenvironment{#1}[1]
  {%
  \renewcommand\customgenericname{#2}%
  \renewcommand\theinnercustomgeneric{##1}%
  \innercustomgeneric
  }
  {\endinnercustomgeneric}
}
\newtheorem{assump}{Assumption}
\begin{document}

\title{Phase diagram for two-layer ReLU neural networks at infinite-width limit}

\author{%
    \name Tao\ Luo\footnotemark[1] \email luotao41@sjtu.edu.cn \\
    \name Zhi-Qin John Xu\footnotemark[1] \email xuzhiqin@sjtu.edu.cn \\
    \name Zheng Ma \email zhengma@sjtu.edu.cn \\
    \name Yaoyu Zhang\footnotemark[2] \email zhyy.sjtu@sjtu.edu.cn \\
    \addr School of Mathematical Sciences, Institute of Natural Sciences, MOE-LSC and Qing Yuan Research Institute, \\
    Shanghai Jiao Tong University, Shanghai, 200240, P.R. China}

\editor{}
\date{\today}

\footnotetext[1]{The first two authors contributed equally.}
\footnotetext[2]{Corresponding author.}

\maketitle

\begin{keywords}
two-layer ReLU neural network, infinite-width limit, phase diagram, dynamical regime, condensation
\end{keywords}

\begin{abstract}




    How neural network behaves during the training over different choices of hyperparameters is an important question in the study of neural networks. In this work, inspired by the phase diagram in statistical mechanics, we draw the phase diagram for the two-layer ReLU neural network at the infinite-width limit for a complete characterization of its dynamical regimes and their dependence on hyperparameters related to initialization. Through both experimental and theoretical approaches, we identify three regimes in the phase diagram, i.e., \emph{linear} regime, \emph{critical} regime and \emph{condensed} regime, based on the relative change of input weights as the width approaches infinity, which tends to $0$, $O(1)$ and $+\infty$, respectively. In the linear regime, NN training dynamics is approximately linear similar to a random feature model with an exponential loss decay. In the condensed regime, we demonstrate through experiments that active neurons are condensed at several discrete orientations. The critical regime serves as the boundary between above two regimes, which exhibits an intermediate nonlinear behavior with the mean-field model as a typical example. Overall, our phase diagram for the two-layer ReLU NN serves as a map for the future studies and is a first step towards a more systematical investigation of the training behavior and the implicit regularization of NNs of different structures.

\end{abstract}

\section{Introduction}

It has been widely observed that, given training data, neural networks~(NNs) may exhibit distinctive dynamical behaviors during the training, depending on the choices of hyperparameters. As an example, we consider a two-layer NN with $m$  hidden neurons
\begin{equation}\label{eq: 2LNN}
    f^{\alpha}_{\vtheta}(\vx) = \frac{1}{\alpha}\sum_{k=1}^{m}a_k\sigma(\vw_k^{\T}\vx),
\end{equation}
where $\vx\in\sR^{d}$, $\alpha$ is the scaling factor, $\vtheta=\mathrm{vec}(\vtheta_a,\vtheta_{\vw})$ with $\vtheta_a=\mathrm{vec}(\{a_k\}_{k=1}^{m})$, $\vtheta_{\vw}=\mathrm{vec}(\{\vw_k\}_{k=1}^{m})$ is the set of parameters initialized by $a_k^0\sim N(0, \beta_1^2)$, $\vw_k^0\sim N(0, \beta_2^2 \mI_d)$. The bias term $b_k$ can be incorporated by expanding $\vx$ and $\vw_k$ to $(\vx^\T,1)^\T$ and $(\vw_k^\T,b_k)^\T$. At the infinite-width limit $m\to\infty$, given $\beta_{1},\beta_{2}\sim O(1)$, for $\alpha\sim\sqrt{m}$, the gradient flow of NN can be approximated by a linear dynamics of neural tangent kernel (NTK) ~\citep{jacot_neural_2018,arora2019exact,zhang_type_2019}, whereas for $\alpha\sim m$, gradient flow of NN exhibits highly nonlinear mean-field dynamics~\citep{mei_mean_2018,rotskoff_parameters_2018,chizat_global_2018,sirignano_mean_2020}. The current situation of NN study is similar to an early era of statistical mechanics, when we observe different states of a matter at several discrete conditions without the guidance of a unified phase diagram.

In this work, we present the first phase diagram for the two-layer neural networks with rectified linear units~(ReLU NN). To this end, two difficulties need to be overcome. The first difficulty is that one can not identify sharply distinctive regimes/states required for a phase diagram with finite neurons. This situation is similar to the analysis in statistical mechanics, e.g., Ising model, where phase transition can not happen with finite particles. Therefore, in analogy to the thermodynamic limit, we take the infinite-width limit $m\to\infty$ as our starting point and successfully identify three dynamical regimes of NNs, i.e., \emph{linear} regime, \emph{critical} regime, and \emph{condensed} regime. In the linear regime, $\vtheta_{\vw}$ almost does not change and NN training dynamics can be linearized around the initialization similar to an NTK or a random feature model. In the condensed regime, the relative change of $\vtheta_{\vw}$ tends to infinity and is condensed at several discrete directions in the feature space. In the critical regime, which serves as the boundary between above two regimes, relative change of $\vtheta_{\vw}$ is $O(1)$ with the mean-field model as an example. The second difficulty is the identification of phase diagram coordinates. For the vanilla gradient flow training dynamics of NN in Eq.~\eqref{eq: 2LNN}, there are three hyperparameters $\alpha$, $\beta_1$ and $\beta_2$, which in general are functions of $m$. However, through appropriate rescaling and normalization of the gradient flow dynamics, which accounts for the dynamical similarity up to a time scaling, we arrive at two independent coordinates
\begin{equation}
    \gamma=\lim\limits_{m\to\infty}-\frac{\log\beta_1\beta_2/\alpha}{\log m}, \quad \gamma'=\lim\limits_{m\to\infty}-\frac{\log\beta_1/\beta_2}{\log m}.
\end{equation}
The resulting phase diagram is shown in Fig.~\ref{fig:phase-diagram}. Examples studied in previous literature are also marked, for example, Ref.~\citet{e2020comparative} studied NNs with settings represented by the red dashed line.

This phase diagram is obtained through experimental and theoretical approaches. We first present an intuitive scaling analysis to provide a rationale for the boundary that separates the linear regime and the condensed regime. Then, we experimentally demonstrate the transition across this boundary in the phase diagram for an $1$-d dataset. Finally, we establish a rigorous theory for general datasets.


Our work is a first step towards a systematical effort in drawing the phase diagrams for NNs of different structures. With the guidance of these phase diagrams, detailed experimental and theoretical works can be done to further characterize the dynamical behavior and the corresponding implicit regularization effect at each of the identified regime.

\begin{figure}
    \centering
    \includegraphics[width=\textwidth]{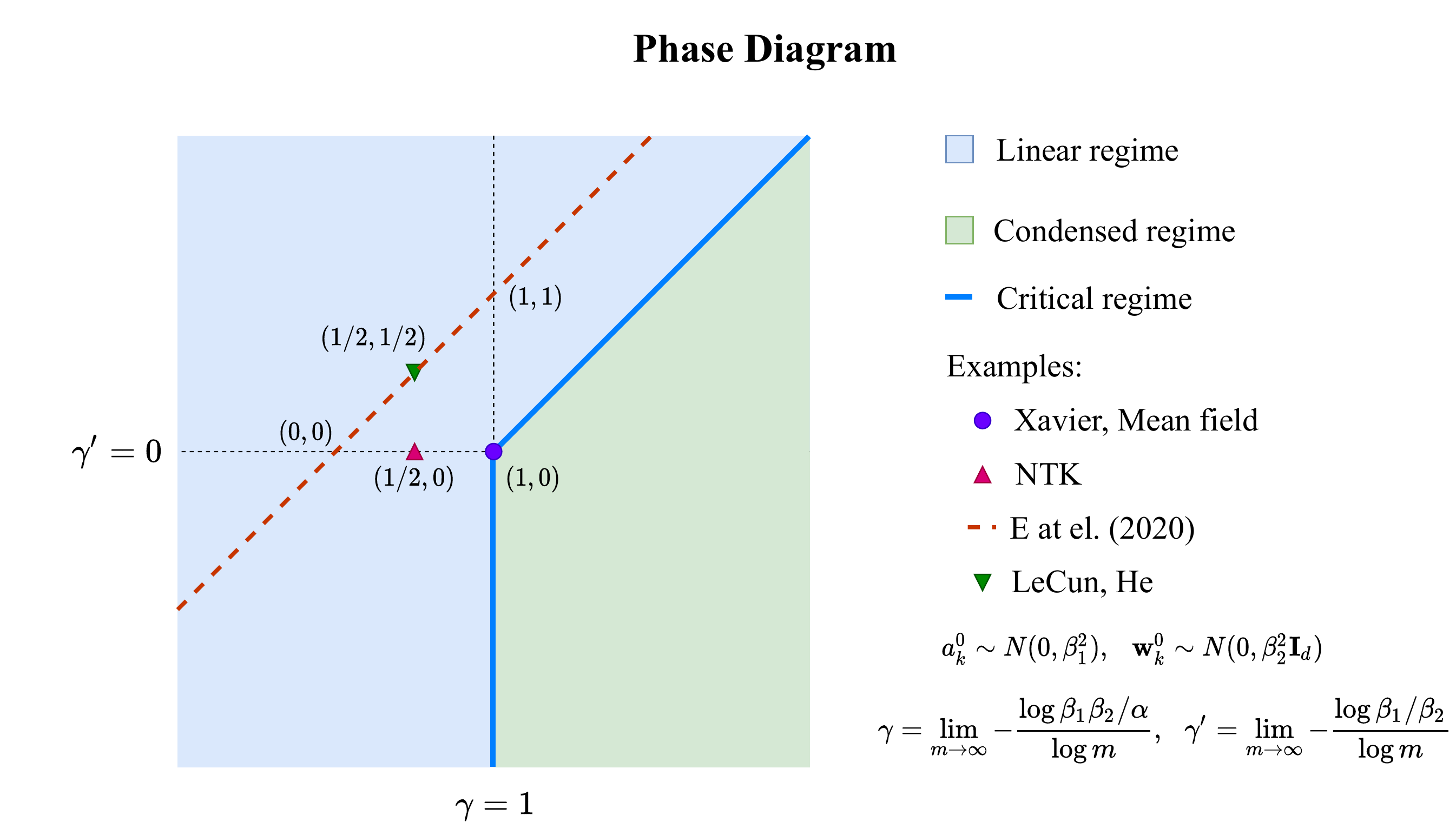}
    \caption{Phase diagram of two-layer ReLU NNs at infinite-width limit. The marked examples are studied in existing literature (see Table \ref{tab..InitializationMethods} for details.) }
    \label{fig:phase-diagram}
\end{figure}

\section{Related works}
The study of regimes in the literature usually revolves around the choice of scaling factor $\alpha$ in specific power-law relations to the width $m$. For example, the NTK scaling $\alpha\sim\sqrt{m}$ ~\citep{jacot_neural_2018,arora2019exact,zhang_type_2019} and the mean-field scaling $\alpha\sim m$~\citep{mei_mean_2018,rotskoff_parameters_2018,chizat_global_2018,sirignano_mean_2020} has been studied extensively. In~\citet{chizat2019lazy}, the authors identify the lazy training behavior for $\lim_{m\to \infty}m/\alpha =\infty$, by which NN parameters stay close to initialization during the training. In~\citet{williams_gradient_2019}, for two-layer ReLU network, lazy and active regimes and their corresponding regularization effect are studied for $1$-d problems. Their analysis uses different quantities for regime separation, which cannot serve as coordinates for a phase diagram. All above works do not account for the effect of specific power-law scaling of initialization over different layers used in practice.




In~\citet{e2020comparative}, for two-layer NNs with $\alpha=1$, $\beta_2\sim O(1)$, the authors study the effect of $\beta$ ($\sim \beta_1$) in relation to $m$. Specifically, they prove that NN training dynamics can be linearized for $\beta=o(m^{-1/6})$ as $m\to\infty$, which constitutes a line in Fig.~\ref{fig:phase-diagram}. In~\citet{ma2020quenching}, they further study such cases in the under-parameterized and mildly over-parameterized settings and experimentally identified the quenching-activation behavior for finite $m$, which phenomenologically is closely related to the condensed regime we identified at $m\to\infty$.

Another work related to the condensed regime is~\citet{maennel2018gradient}. The authors study the two-layer ReLU NNs and prove that, as the initialization of parameters goes to zero, a quantization effect emerges, that is, the weight vectors tend to concentrate at a small number of orientations determined by the input data at an early stage of training. However, the limit of $m\to \infty$ is not considered in their work.


\section{Rescaling and the normalized model}\label{sec..Rescaling}
Identification of the coordinates is important for drawing the phase diagram. Unlike in some thermodynamic systems where temperature and pressure are natural choices, for NNs, it is not obvious which quantities of hyperparameters are keys to the regime separation. However, there are some guiding principles for finding the coordinates of a phase diagram at $m\to\infty$:
\begin{enumerate}[label=(\roman*)]
    \item They should be effectively independent.
    \item Given a specific coordinate in the phase diagram, the learning dynamics of all the corresponding NNs statistically should be similar up to a time scaling.
    \item They should well differentiate dynamical differences except for the time scaling.
\end{enumerate}

Guided by above principles, in this section, we perform the following rescaling procedure for a fair comparison between different choices of hyperparameters and obtain a normalized model with two independent quantities irrespective of the time scaling of the gradient flow dynamics.
We start with the original model~\eqref{eq: 2LNN}
\begin{equation}
    f^{\alpha}_{\vtheta}(\vx) = \frac{1}{\alpha}\sum_{k=1}^{m}a_k\sigma(\vw_k^{\T}\vx),
\end{equation}
defined on a given sample set $S=\{(\vx_i,y_i)\}_{i=1}^n$ where $\vx_i\in\sR^d$, $i\in[n]$, network width $m$ and a scaling parameter $1/\alpha$ and $\sigma=\ReLU$. The parameters are initialized by
\begin{equation}
    a_k^0\sim N(0, \beta_1^2), \quad \vw_k^0\sim N(0, \beta_2^2 \mI_d),
\end{equation}
where $a_k$ and $\vw_k$ are separated into to different scales $\beta_1$ and $\beta_2$. The empirical risk is
\begin{equation}
    \RS(\vtheta)=\frac{1}{2n}\sum_{i=1}^n {(f^{\alpha}_{\vtheta}(\vx_i)-y_i)}^2.
\end{equation}
Then the training dynamics based on gradient descent~(GD) at the continuous limit obeys the following gradient flow of $\vtheta$,
\begin{equation}
    \frac{\D \vtheta}{\D t}=-\nabla_{\vtheta}\RS(\vtheta).
\end{equation}
More precisely, $\vtheta=\mathrm{vec}(\{\vq_k\}_{k=1}^m)$ with $\vq_k=(a_k,\vw_k^{\T})^\T$, $k\in[m]$ solves
\begin{align*}
    \frac{\D a_k}{\D t}
     & = -\frac{1}{n}\sum_{i=1}^n\frac{1}{\alpha}\sigma(\vw_k^{\T}\vx_i) \left(\frac{1}{\alpha}\sum_{k=1}^{m}a_k\sigma(\vw_k^{\T}\vx_i)-y_i\right)            \\
    \frac{\D \vw_k}{\D t}
     & = -\frac{1}{n}\sum_{i=1}^n\frac{1}{\alpha} a_k\sigma'(\vw_k^{\T}\vx_i)\vx_i \left(\frac{1}{\alpha}\sum_{k=1}^{m}a_k\sigma(\vw_k^{\T}\vx_i)-y_i\right).
\end{align*}
Let
\begin{equation}
    \bar{a}_k=\beta_1^{-1}a_k, \quad \bar{\vw}_k=\beta_2^{-1}\vw_k,\quad \bar{t}=\frac{1}{\beta_1\beta_2}t,
\end{equation}
then
\begin{align*}
    \frac{\D \bar{a}_k}{\D \bar{t}}
     & = -\frac{\beta_2}{\beta_1}\frac{1}{n}\sum_{i=1}^n \frac{\beta_1\beta_2}{\alpha}\sigma(\bar{\vw}_k^\T\vx_i) \left(\frac{\beta_1\beta_2}{\alpha}\sum_{k=1}^{m}\bar{a}_k\sigma(\bar{\vw}_k^\T\vx_i)-y_i\right),               \\
    \frac{\D \bar{\vw}_k}{\D \bar{t}}
     & = -\frac{\beta_1}{\beta_2}\frac{1}{n}\sum_{i=1}^n\frac{\beta_1\beta_2}{\alpha}\bar{a}_k\sigma'(\bar{\vw}_j^\T\vx_i)\vx_i \left(\frac{\beta_1\beta_2}{\alpha}\sum_{k=1}^{m}\bar{a}_k\sigma(\bar{\vw}_k^\T\vx_i)-y_i\right).
\end{align*}
We introduce two scaling parameters
\begin{equation}
    \kappa := \frac{\beta_1\beta_2}{\alpha}, \quad \kappa' :=\frac{\beta_1}{\beta_2},
\end{equation}
where $\kappa$ and $\kappa'$ are called the energetic scaling parameter and the dynamical scaling parameter, respectively. Then the above dynamics can be written as
\begin{align*}
    \frac{\D \bar{a}_k}{\D \bar{t}}
     & = -\frac{1}{\kappa'}\frac{1}{n}\sum_{i=1}^n \kappa\sigma(\bar{\vw}_k^\T\vx_i) \left(\kappa\sum_{k=1}^{m}\bar{a}_k\sigma(\bar{\vw}_k^\T\vx_i)-y_i\right),      \\
    \frac{\D \bar{\vw}_k}{\D \bar{t}}
     & = -\kappa'\frac{1}{n}\sum_{i=1}^n\kappa \bar{a}_k\sigma'(\bar{\vw}_k^\T\vx_i)\vx_i \left(\kappa\sum_{k=1}^{m}\bar{a}_k\sigma(\bar{\vw}_k^\T\vx_i)-y_i\right).
\end{align*}
The above recaled dynamics can be treated as a weighted gradient flow of NN scaled by $\kappa$ equipped with the empirical risk
\begin{align}\label{eq:normalized-model}
    f^{\kappa}_{\vtheta}(\vx)
     & = \kappa\sum_{k=1}^{m}\bar{a}_k\sigma(\bar{\vw}_k^\T\vx),       \\
    R_{S, \kappa}(\vtheta)
     & =\frac{1}{2n}\sum_{i=1}^n{(f^{\kappa}_{\vtheta}(\vx_i)-y_i)}^2,
\end{align}
with the following initialization
\begin{equation}
    \bar{a}_j^0\sim N(0,1), \quad \bar{\vw}_j^0\sim N(0,\mI_d),
\end{equation}
where we can see they are of standard normal distributions. The weighted GD dynamics then can be written simply as
\begin{equation} \label{eq:qdynamics}
    \frac{\D\bar{\vq}_j}{\D \bar{t}} = -\mM_{\kappa'}\nabla_{\vq_k}R_{S,\kappa}(\bar{\vtheta}),
\end{equation}
where the mobility matrix
\begin{equation}
    \mM_{\kappa'} =
    \begin{pmatrix}
        1/\kappa' &              \\
                  & \kappa'\mI_d
    \end{pmatrix}.
\end{equation}
In the following discussion throughout this paper, we will refer to this rescaled model~\eqref{eq:normalized-model} as \emph{normalized} model and drop superscript $\kappa$ and all the ``bar''s of $a_k$, $\vw_k$, $t$ for simplicity.
Note that $\kappa$ and $\kappa'$ do not follow principle (ii) and (iii) above at infinite-width limit. They are in general functions of $m$, which attains $0$, $O(1)$, $+\infty$ at $m\to\infty$. For example, $\kappa=0$ and $\kappa'=1$ for both the NTK and mean-field model, however, they are known to have distinctive training behaviors. To account for such dynamical difference under different widely considered power-law scalings of $\alpha$, $\beta_1$ and $\beta_2$ shown in Table. \ref{tab..InitializationMethods}, we arrive at
\begin{equation}
    \gamma=\lim_{m\to\infty}-\frac{\log \kappa}{\log m}, \quad \gamma'=\lim_{m\to\infty}-\frac{\log\kappa'}{\log m},
\end{equation}
which meets all above principles as demonstrated later by theory and experiments.

\begin{remark}
    We remark that the above rescaling technique can be viewed in analogy to the nondimensionalization in physics, which is the partial or full removal of physical dimensions from an equation involving physical quantities by a suitable substitution of variables. In more general point of view, nondimensionalization can also recover characteristic properties of a system, which in our case recovers the different behaviors of training dynamics for different regimes.

    More specifically, we can view, in the original model~\eqref{eq: 2LNN}, $\vq_k=(a_k,\vw_k^{\T})^\T$, $k\in[m]$ as the generalized coordinates which have the unit of ``length'' denoted as $[\mathrm{L}]$.\@ Then in the two-layer NN~\eqref{eq: 2LNN}, $\alpha$ should have the unit of ``volume'' as a normalization factor depending on $m$ to avoid blowing up of the model. Particularly, if $\sigma$ is $\ReLU$ then we can think $\alpha$'s unit is $[\mathrm{L}]^2$ (unit of area on a plane).

    Finally, following above analysis, $\kappa=\frac{\beta_1\beta_2}{\alpha}$ and $\kappa'=\frac{\beta_1}{\beta_2}$ are two \emph{nondimensional} parameters (without unit) so as for $\gamma$ and $\gamma'$, which are suitable to serve as the coordinations of our phase diagram.
\end{remark}
\begin{remark}
    Here we list some commonly-used initialization methods and/or related works with their scaling parameters as shown in Table~\ref{tab..InitializationMethods}.
\end{remark}

\begin{table}[ht]
    \centering
    \resizebox{\textwidth}{!}{
        \begin{tabularx}{1.2\textwidth}{cccccccc}
            \toprule
            Name                                          & \multirow{2}{*}{$\alpha$}   & \multirow{2}{*}{$\beta_1$}              & \multirow{2}{*}{$\beta_2$}              & $\kappa$                                            & $\kappa'$                                     & $\gamma$                                                                                 & $\gamma'$                                                                                \\
            \scriptsize{({related works})}                &                             &                                         &                                         & ($\scriptscriptstyle\frac{\beta_1\beta_2}{\alpha}$) & ($\scriptscriptstyle\frac{\beta_1}{\beta_2}$) & ($\scriptscriptstyle\lim\limits_{m\to\infty}\frac{\log1/\kappa}{\log m}$)                & ($\scriptscriptstyle\lim\limits_{m\to\infty}\frac{\log 1/\kappa'}{\log m}$)              \\
            \midrule
            LeCun                                         & \multirow{2}{*}{$1$ }       & \multirow{2}{*}{$\sqrt{\frac{1}{m}}$}   & \multirow{2}{*}{$\sqrt{\frac{1}{d}}$}   & \multirow{2}{*}{$\sqrt{\frac{1}{md}}$}              & \multirow{2}{*}{$\sqrt{\frac{d}{m}}$}         & \multirow{2}{*}{$\frac{1}{2}$}                                                           & \multirow{2}{*}{$\frac{1}{2}$}                                                           \\
            \scriptsize{\citep{lecun2012efficient}}       &                             &                                         &                                         &                                                     &                                               &                                                                                          &                                                                                          \\
            He                                            & \multirow{2}{*}{$1$}        & \multirow{2}{*}{$\sqrt{\frac{2}{m}}$}   & \multirow{2}{*}{$\sqrt{\frac{2}{d}}$}   & \multirow{2}{*}{$\sqrt{\frac{4}{md}}$}              & \multirow{2}{*}{$\sqrt{\frac{d}{m}}$}         & \multirow{2}{*}{$\frac{1}{2}$}                                                           & \multirow{2}{*}{$\frac{1}{2}$}                                                           \\
            \scriptsize{\citep{he2015delving}}            &                             &                                         &                                         &                                                     &                                               &                                                                                          &                                                                                          \\
            Xavier                                        & \multirow{2}{*}{$1$}        & \multirow{2}{*}{$\sqrt{\frac{2}{m+1}}$} & \multirow{2}{*}{$\sqrt{\frac{2}{m+d}}$} & \multirow{2}{*}{$\sqrt{\frac{4}{(m+1)(m+d)}}$}      & \multirow{2}{*}{$\sqrt{\frac{m+d}{m+1}}$}     & \multirow{2}{*}{$1$}                                                                     & \multirow{2}{*}{$0$}                                                                     \\
            \scriptsize{\citep{glorot2010understanding}}  &                             &                                         &                                         &                                                     &                                               &                                                                                          &                                                                                          \\
            NTK                                           & \multirow{2}{*}{$\sqrt{m}$} & \multirow{2}{*}{$1$}                    & \multirow{2}{*}{$1$}                    & \multirow{2}{*}{$\sqrt{\frac{1}{m}}$}               & \multirow{2}{*}{$1$}                          & \multirow{2}{*}{$\frac{1}{2}$}                                                           & \multirow{2}{*}{$0$}                                                                     \\
            \scriptsize{\citep{jacot_neural_2018}}        &                             &                                         &                                         &                                                     &                                               &                                                                                          &                                                                                          \\
            Mean-field                                    & \multirow{3}{*}{$m$}        & \multirow{3}{*}{$1$}                    & \multirow{3}{*}{$1$}                    & \multirow{3}{*}{$\frac{1}{m}$}                      & \multirow{3}{*}{$1$}                          & \multirow{3}{*}{$1$}                                                                     & \multirow{3}{*}{$0$}                                                                     \\
            \scriptsize{\citep{mei_mean_2018}}            &                             &                                         &                                         &                                                     &                                               &                                                                                          &                                                                                          \\
            \scriptsize{\citep{sirignano_mean_2020}}      &                             &                                         &                                         &                                                     &                                               &                                                                                          &                                                                                          \\
            \scriptsize{\citep{rotskoff_parameters_2018}} &                             &                                         &                                         &                                                     &                                               &                                                                                          &                                                                                          \\
            E et al.                                      & \multirow{2}{*}{$1$}        & \multirow{2}{*}{$\beta$}                & \multirow{2}{*}{$1$}                    & \multirow{2}{*}{$\beta$}                            & \multirow{2}{*}{$\beta$}                      & \multirow{2}{*}{$\scriptscriptstyle\lim\limits_{m\to\infty}\frac{\log 1/\beta}{\log m}$} & \multirow{2}{*}{$\scriptscriptstyle\lim\limits_{m\to\infty}\frac{\log 1/\beta}{\log m}$} \\
            \scriptsize{\citep{e2020comparative}}         &                             &                                         &                                         &                                                     &                                               &                                                                                          &                                                                                          \\
            \bottomrule
        \end{tabularx}
    }
    \caption{Initialization methods with their scaling parameters}
    \label{tab..InitializationMethods}
\end{table}

\subsection{Typical cases over the phase diagram}

With $\gamma$ and $\gamma'$ as coordinates, in this subsection, we illustrate through experiments the behavior of a diversity of typical cases over the phase diagram using a simple $1$-d problem of $4$ training points, which allows easy visualization.

The first row in Fig.~\ref{fig:targetfunc} shows typical learning results over different $\gamma$'s, from a relatively jagged interpolation (NTK scaling) to a smooth cubic-spline-like interpolation (mean-field scaling) and further to a linear spline interpolation. To probe into details of their parameter space representation, we notice for the ReLU activation that the parameter pair $(a_k,\vw_k)$ of each neuron can be separated into a unit orientation feature $\hat{\vw}=\vw/\norm{\vw}_{2}$ and an amplitude $A=|a|\norm{\vw}_{2}$ indicating its contribution to the output, that is, $(A,\hat{\vw})$. For the one-dimensional input, $\vw$ is two dimensional due to the incorporation of bias. Therefore, we use the angle to the $x$-axis in $[-\pi,\pi)$ to indicate the orientation of each $\hat{\vw}$. The scatter plot of $\{(A_k,\hat{\vw}_k)\}_{k=1}^{m}$ is shown in the second row in Fig.~\ref{fig:targetfunc}. Clearly, the evolution of the parameters of the examples in the first row of Fig.~\ref{fig:targetfunc} are different. For $\gamma=0.5$, the initial scatter plot is very close to the one after training. However, for $\gamma=1.75$, active neurons (i.e., neurons with significant amplitude $A$) are condensed at a few orientations, which strongly deviates from the initial scatter plot.
\begin{figure}
    \centering
    \subfloat[$\gamma=0.5$]{
        \includegraphics[width=0.25\textwidth]{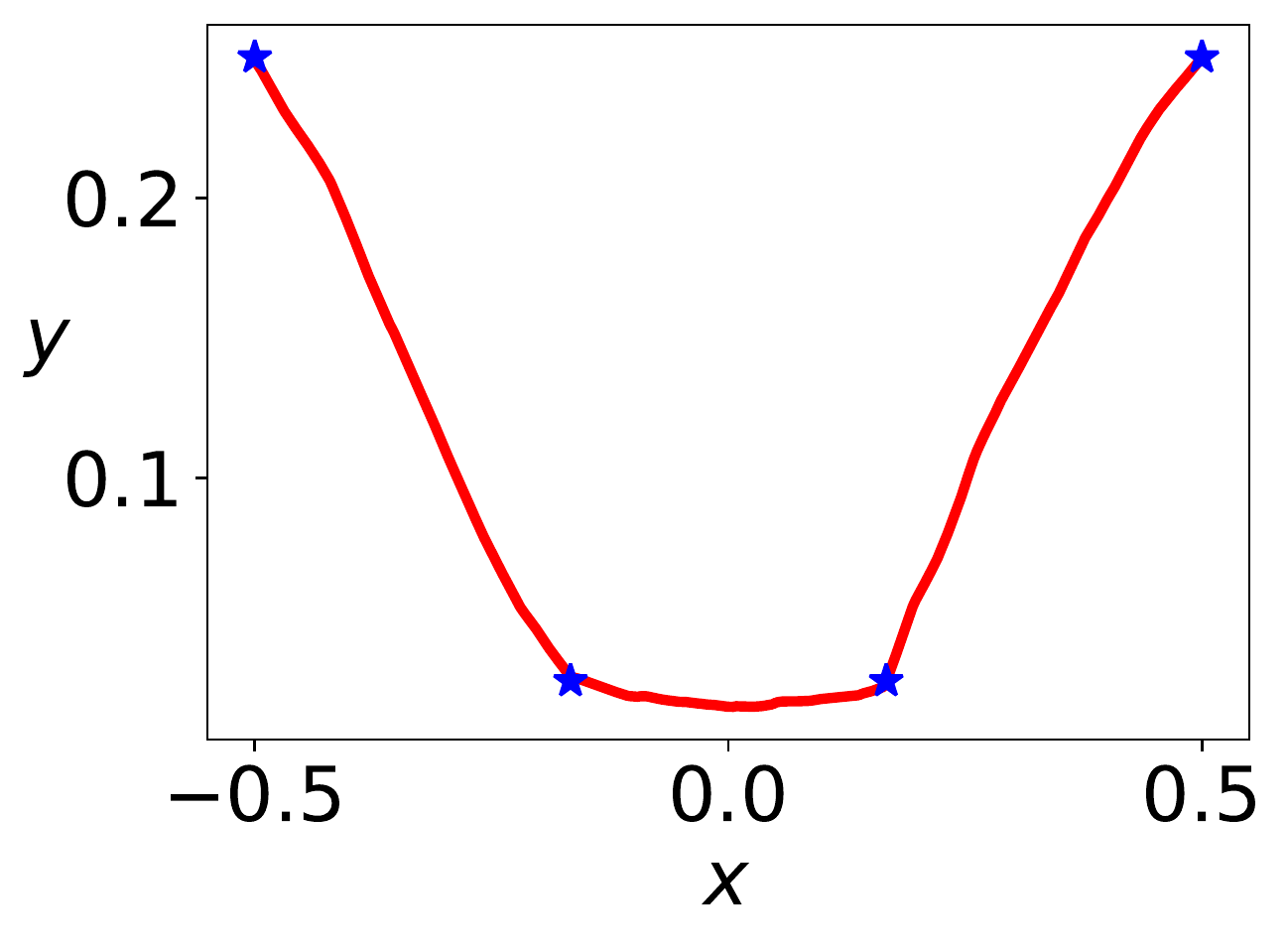}
    }
    \subfloat[$\gamma=1$]{
        \includegraphics[width=0.25\textwidth]{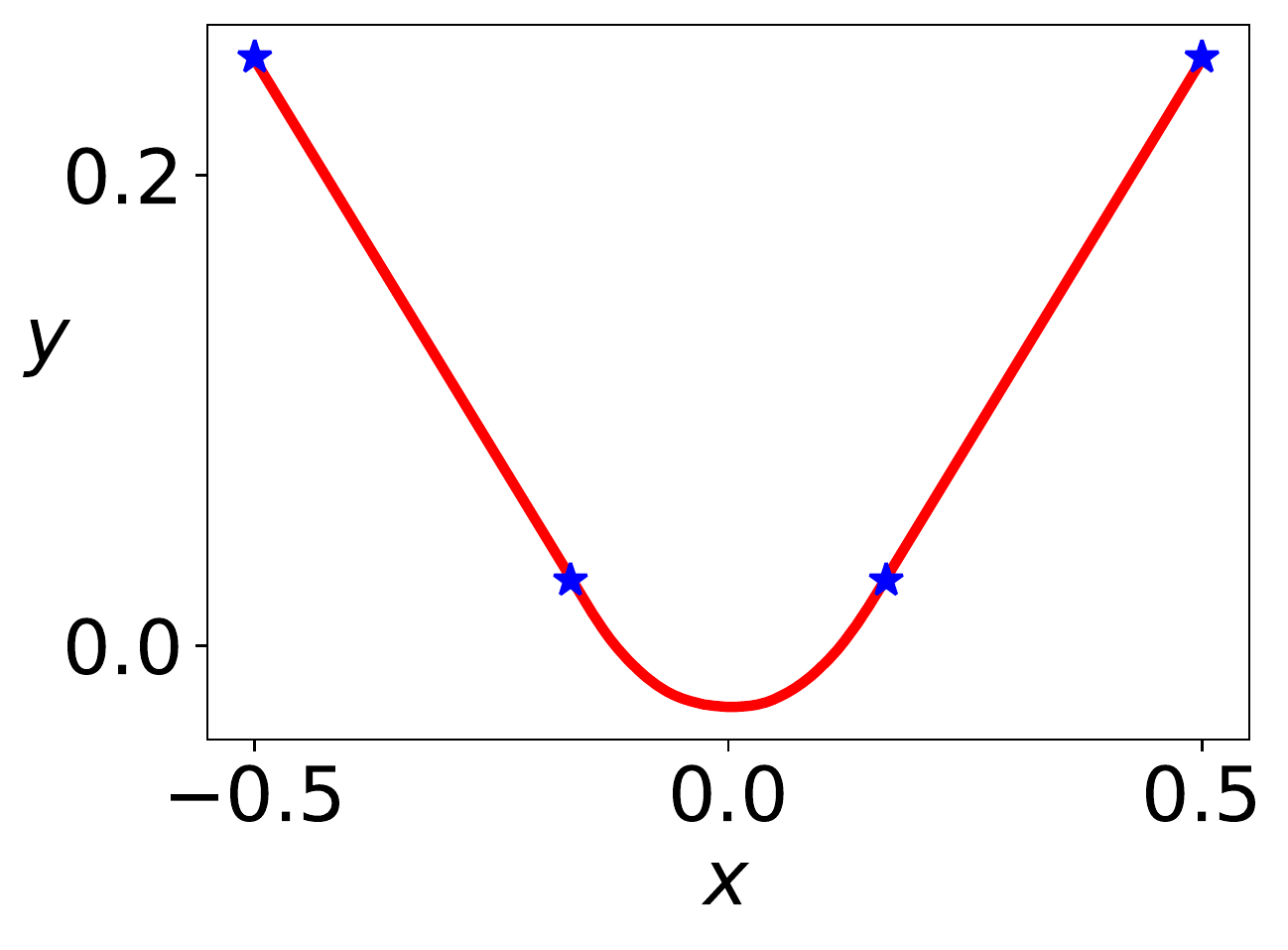}
    }
    \subfloat[$\gamma=1.75$]{
        \includegraphics[width=0.25\textwidth]{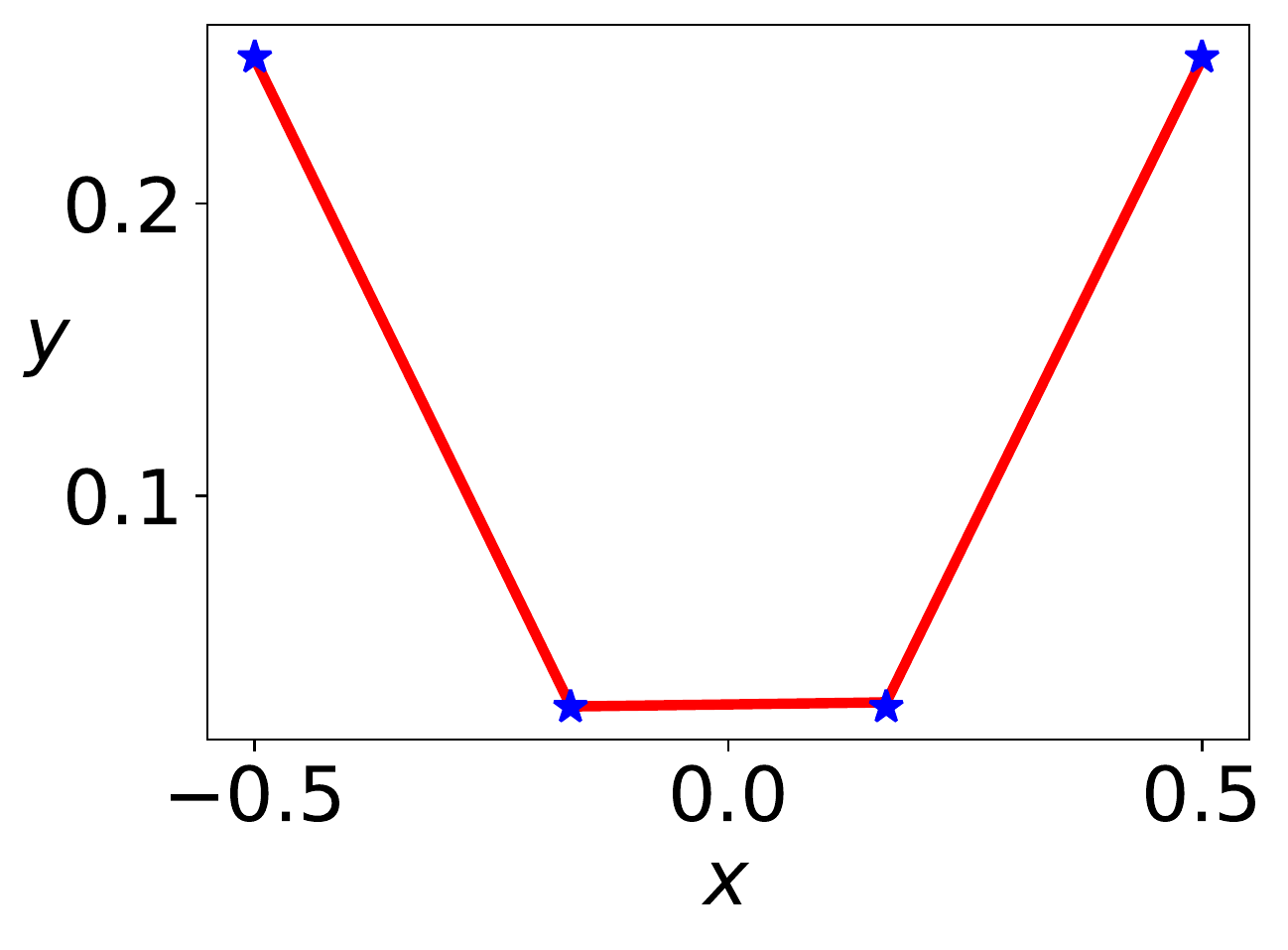}
    }
    \\
    \subfloat[$\gamma=0.5$]{
        \includegraphics[width=0.25\textwidth]{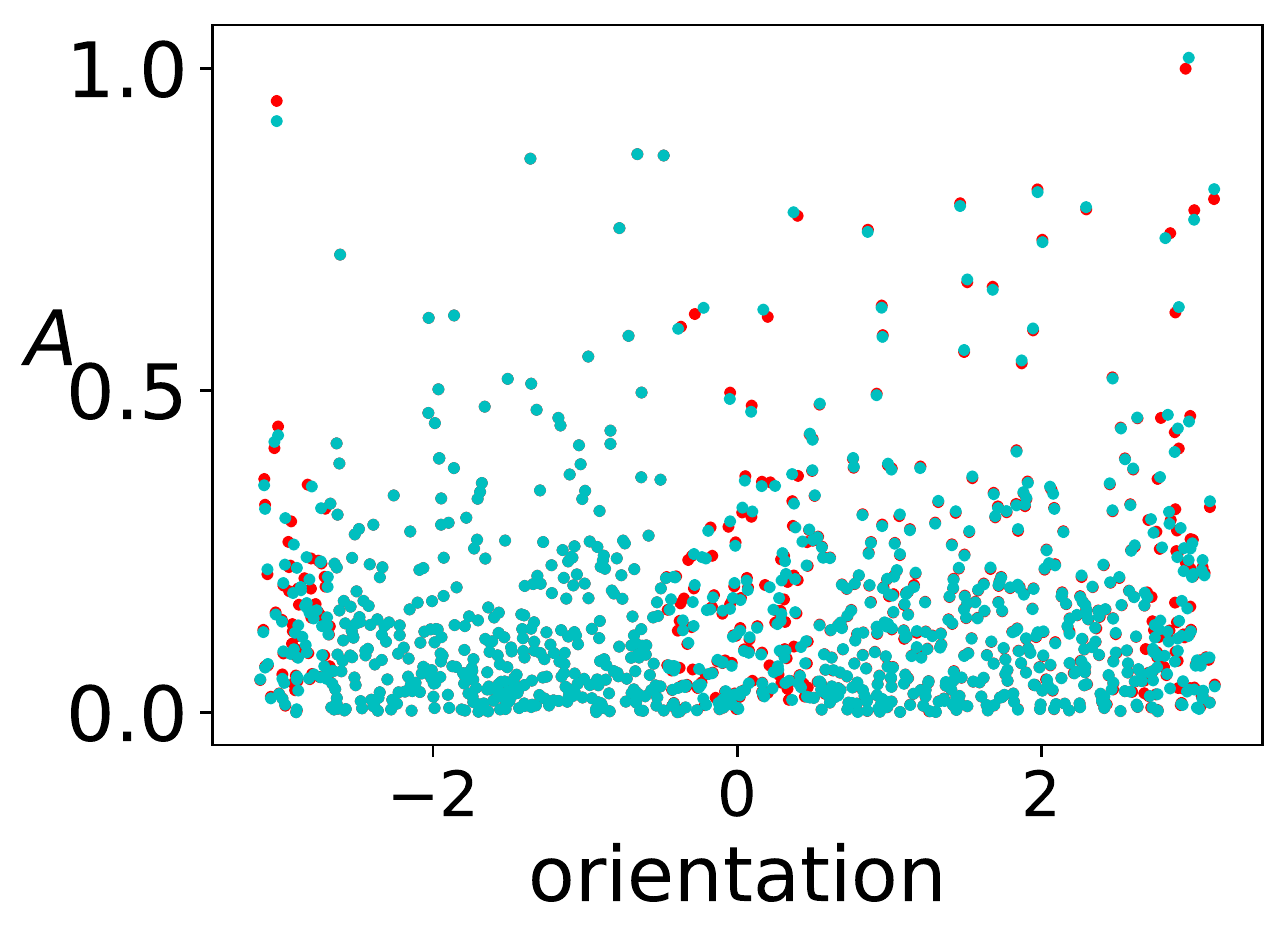}
    }
    \subfloat[$\gamma=1$]{
        \includegraphics[width=0.25\textwidth]{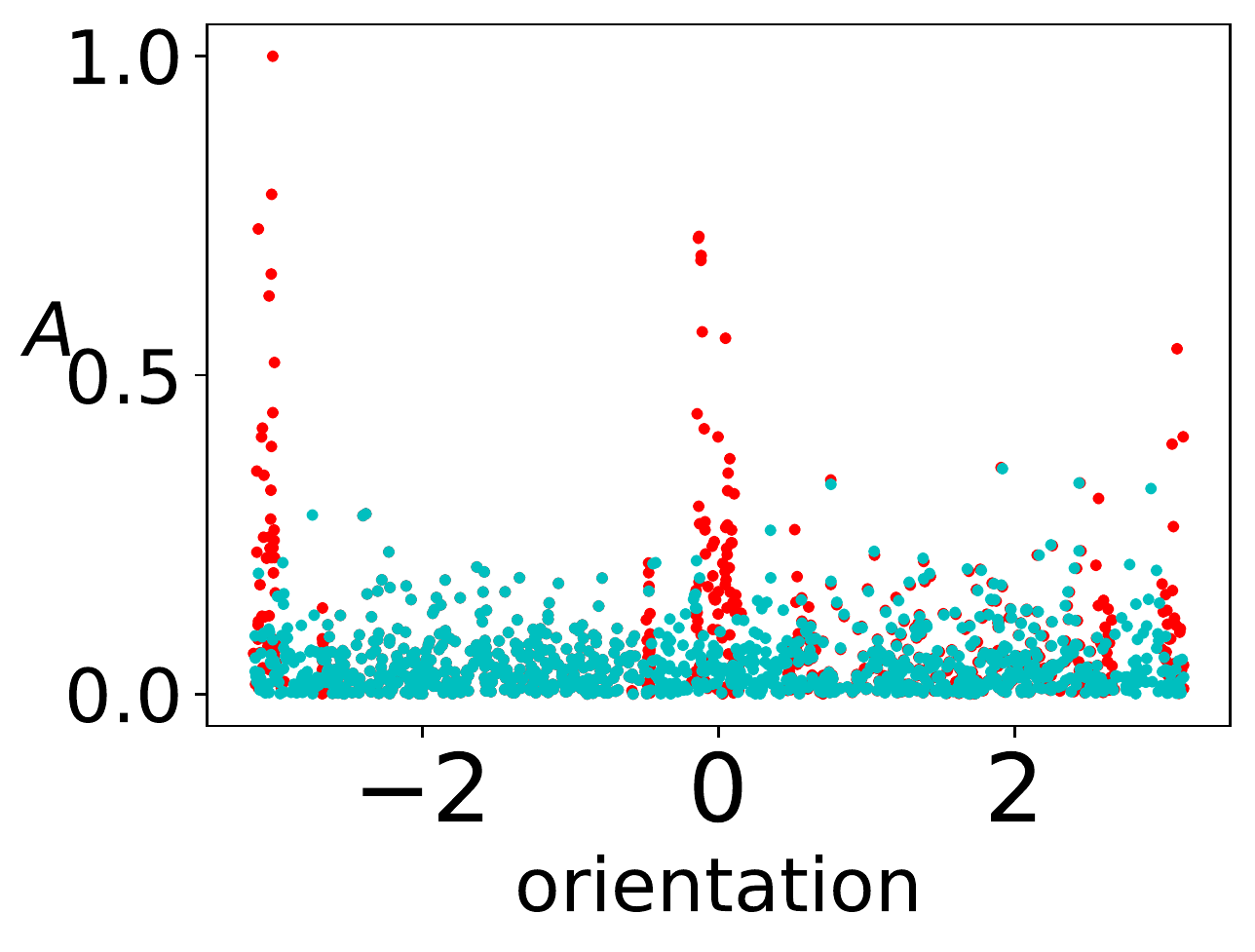}
    }
    \subfloat[$\gamma=1.75$]{
        \includegraphics[width=0.25\textwidth]{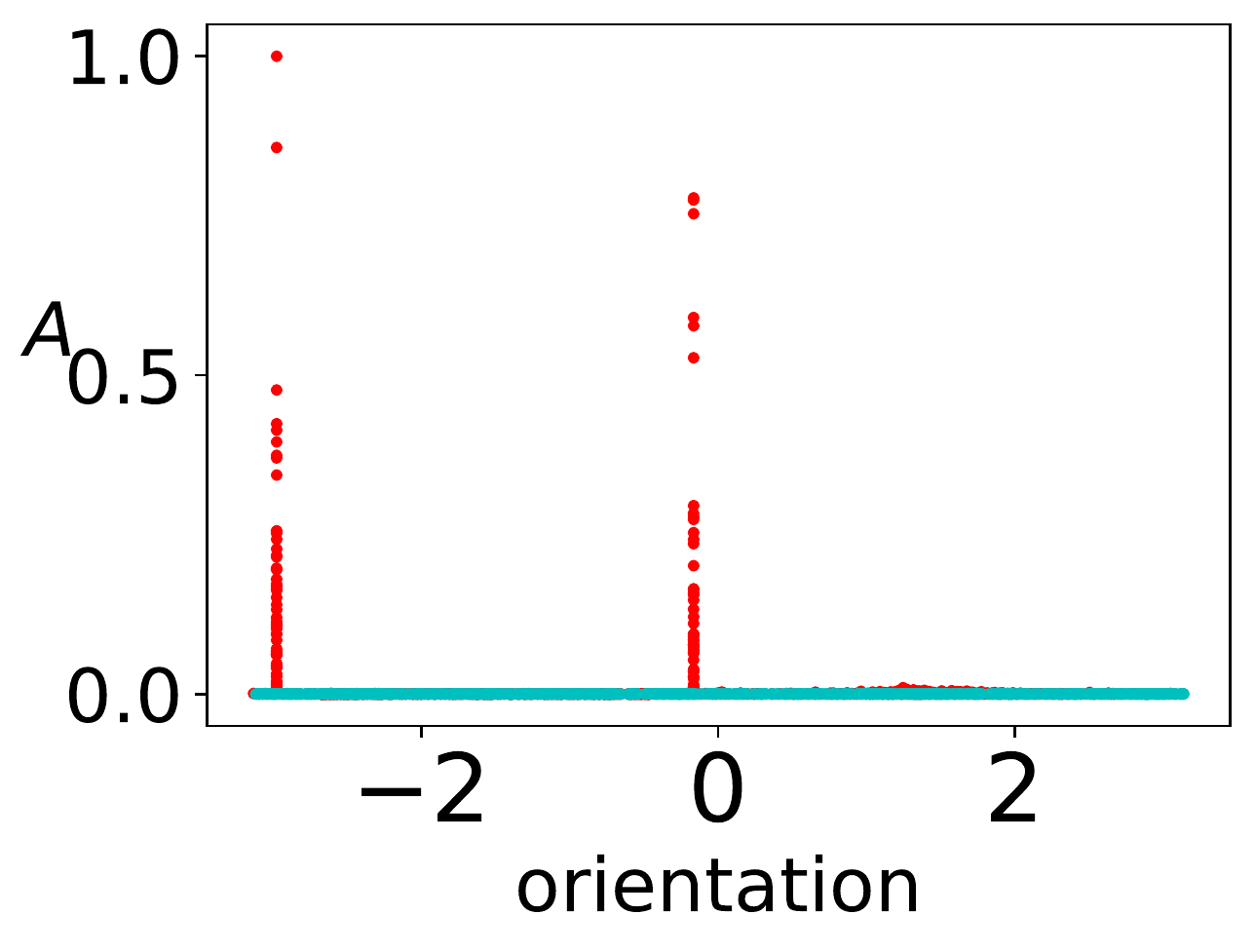}
    }
    \caption{Learning four data points by two-layer ReLU NNs with different $\gamma$'s are shown in the first row. The corresponding scatter plots of initial~(cyan) and final~(red) $\{(A_k,\hat{\vw}_k)\}_{k=1}^{m}$ are shown in the second row. $\gamma'=0$ ($\beta_1=\beta_2=1$), hidden neuron number $m=1000$.}
    \label{fig:targetfunc}
\end{figure}

\section{Phase diagram}
In this section, with $\gamma$ and $\gamma'$ as coordinates, we characterize at $m\to\infty$ the dynamical regimes of NNs and identify their boundaries in the phase diagram through experimental and theoretical approaches.
How to characterize and classify different types of training behaviors of NNs is an important open question.  Currently, a behavior of NN dynamics, by which gradient flow of the NN can be effectively linearized around initialization during the training, has been extensively studied both empirically and theoretically~\citep{jacot_neural_2018,lee_wide_2019,arora2019exact,e2020comparative}. We refer to the regime with this behavior as the linear regime. As shown in Fig.~\ref{fig:phase-diagram}, many works have proved that a specific point or line in the phase diagram belong to the linear regime. However, its exact range in the phase diagram remains unclear. On the other hand, NN training dynamics can also be highly nonlinear at $m\to\infty$ as widely studied for the mean-field model as a point shown in the phase diagram~\citep{mei_mean_2018,sirignano_mean_2020,rotskoff_parameters_2018}. However, whether there are other points in the phase diagram that has similar training behavior is not well understood. In addition, it is not clear if there are other regimes in the phase diagram that are nonlinear but behaves distinctively comparing to the mean-field model. In the following, we will address these problems and draw the phase diagram.

\subsection{Regime identification and separation}
The linear regime refers to the set of coordinates with which the gradient flow of $f_{\vtheta}$ at any $t$ is well approximated by gradient flow of its linearized model, i.e.,
\begin{equation}
    f^{\mathrm{lin}}_{\vtheta}=\nabla_{\vtheta}f_{\vtheta(0)}\cdot(\vtheta(t)-\vtheta(0)).
\end{equation}
Note that, the zeroth order term $f_{\vtheta(0)}$ does not appear because, without loss of generality, it is always offset to $0$ by the ASI trick to eliminate the extra generalization error induced by a random initial function as studied in~\citep{zhang_type_2019}. In general, this linear behavior only happens when $\vtheta(t)$ always stays within a small neighbourhood of $\vtheta(0)$ such that the first order Taylor expansion is a good approximation. For a two-layer NN, because its output layer is always linear w.r.t.\  output weights, this requirement of small neighbourhood is reduced to the one for the input weights, that is, $\vtheta_{\vw}(t)$ always stays within a neighbourhood of $\vtheta_{\vw}(0)$. Since the size of this neighbourhood of good linear approximation scales with $\norm{\vtheta_{\vw}(0)}_{2}$, therefore we use the following relative distance as an indicator of how far $\vtheta_{\vw}(t)$ deviates from $\vtheta_{\vw}(0)$ during the training
\begin{equation}
    \mathrm{RD}(\vtheta_{\vw}(t))=\frac{\norm{\theta_{\vw}(t)-\theta_{\vw}(0)}_{2}}{\norm{\theta_{\vw}(0)}_{2}}.
\end{equation}
Specifically, we focus on quantity $\sup\limits_{t\in[0,+\infty)}\mathrm{RD}(\vtheta_{\vw}(t))$, which is the maximum deviation of $\vtheta_{\vw}(t)$ from initialization during the training. As $m\to\infty$, if $\sup\limits_{t\in[0,+\infty)}\mathrm{RD}(\vtheta_{\vw}(t))\to 0$, then the NN training dynamics falls into the linear regime. Otherwise, if it approaches $O(1)$ or $+\infty$, then NN training dynamics is nonlinear. Note that, for the latter case, in which $\vtheta_{\vw}$ deviates infinitely far away from initialization, a very strong nonlinear dynamical behavior of condensation in feature space can be observed as illustrated in Fig. \ref{fig:targetfunc}f. We refer to the regime of $\sup\limits_{t\in[0,+\infty)}\mathrm{RD}(\vtheta_{\vw}(t))\to +\infty$ the condensed regime, which is justified latter in Sec. \ref{sec:ccr} by detailed experiments. For $\sup\limits_{t\in[0,+\infty)}\mathrm{RD}(\vtheta_{\vw}(t))\to O(1)$, NNs exhibit an intermediate level of nonlinear behavior. We refer to this regime as the critical regime.

In the following, we will separate exactly the linear regime and the condensed regime in the phase diagram through experimental and theoretical approaches. We first present an intuitive scaling analysis to provide a rationale for the boundary that separates these two regimes in the phase diagram. Then, we experimentally demonstrate the validity of this boundary in regime separation in the phase diagram for an $1$-d dataset. Finally, we establish a rigorous theory which proves the transition across this boundary for two-layer ReLU NNs at $m\to\infty$ for general datasets.

\subsubsection{Intuitive scaling analysis}
Before we jump into a detailed analysis, through an intuitive scaling analysis, we first illustrate the separation between the linear regime and the condensed regime. The capability, i.e., the magnitude of target function that can be fitted, of the two-layer ReLU NN around initialization can be roughly estimated as
\[
    C = m \beta_1 \beta_2/\alpha=m\kappa.
\]
Without loss of generality, the target function is always $O(1)$. Therefore, a necessary condition for the linear regime is that NN has the capability of fitting the target in the vicinity of initialization, i.e., $C \gtrsim O(1)$. Therefore,
\[
    \kappa \gtrsim 1/m,
\]
yielding $\gamma\leq1$ at $m\to\infty$. We further notice that, the output layer is always linear. Therefore, even when the output weight $\vtheta_{a}$ changes significantly, the dynamics can still be linearized if the input layer weight $\vtheta_{\vw}$ stays in the vicinity of its initialization. As indicated by the dynamics Eq. (\ref{eq:qdynamics}), this is possible when (i) $\kappa' \ll 1$ at initialization and (ii) the scale of $a$, say quantified by expectation $\Exp(|a|)$, satisfies $\Exp(|a|) \ll \beta_2$ throughout the training.  In this case, at the end of the training,
\begin{equation}
    C = m\beta_2 \Exp(|a|)/\alpha  \ll m\beta_2^2/\alpha=m\kappa/\kappa'.
\end{equation}
Because $C \gtrsim O(1)$, we got
\begin{equation}
    1/\kappa'\gg 1/m\kappa,
\end{equation}
which yields the condition $\gamma'>\gamma-1$ for $\gamma'>0$ at $m\to\infty$.

In contrary, if $\gamma'<\gamma-1$ and $\gamma>1$, i.e., $m\kappa\ll 1$ and $m\kappa/\kappa'\ll 1$ as $m\to\infty$, then the NN has no capability in fitting a $O(1)$ target when $\vtheta_{\vw}$ stays at the vicinity of its initialization. The capability of NN must undergo a magnificent increase to be able to fit the data, which is a feature of the condensed regime.

Above scaling analysis provides an intuitive argument about the separation of linear and condensed regimes by the boundary $\gamma=1$ for $\gamma'\leq0$ and $\gamma'=\gamma-1$ for $\gamma'>0$ in the phase diagram. To further demonstrate the criticality of this boundary, we sort to the following experimental studies for a specific case.

\subsubsection{Experimental demonstration}
To experimentally distinguish the linear and nonlinear regimes, we need to estimate 
\[
\sup\limits_{t\in[0,+\infty)}\mathrm{RD}(\vtheta_{\vw}(t)),
\]
which empirically can be approximated by $\mathrm{RD}(\vtheta_{\vw}^*)$ ($\vtheta_{\vw}^*:=\vtheta_{\vw}(\infty)$) without loss of generality. Next, because we can never run experiments at $m\to\infty$, we alternatively quantify the growth of $\mathrm{RD}(\vtheta^*_{\vw})$ as $m\to\infty$. By Fig.~\ref{fig:diffbeta} (a-c), they approximately have a power-law relation. Therefore we define
\begin{equation}
    S_{\vw}=\lim_{m\to\infty}\frac{\log \mathrm{RD}(\vtheta^*_{\vw})}{\log m},
\end{equation}
which is empirically obtained by estimating the slope in the log-log plot like in Fig.~\ref{fig:diffbeta}. As shown in Fig.~\ref{fig:diffbeta} (d), NNs with the same pair of $\gamma$ and $\gamma'$, but different $\alpha$, $\beta_1$, and $\beta_2$, have very similar $S_{w}$, which validates the effectiveness of the normalized model. In the following experiments, we only show result of one combination of $\alpha$, $\beta_1$, and $\beta_2$ for a pair of $\gamma$ and $\gamma'$.
\begin{figure}
    \centering
    \subfloat[$\gamma=0.5$]{
        \includegraphics[width=0.23\textwidth]{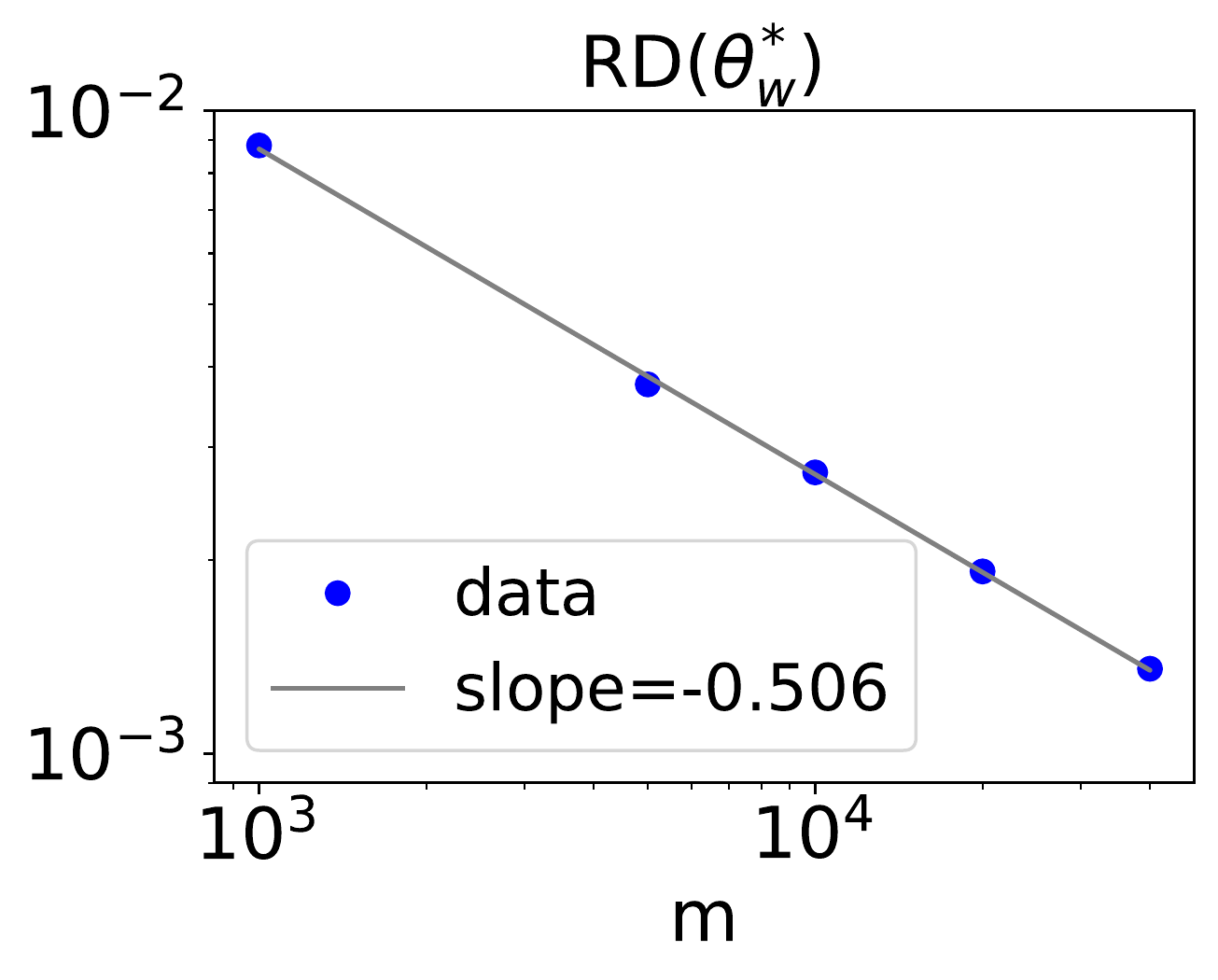}
    }
    \subfloat[$\gamma=1$]{
        \includegraphics[width=0.23\textwidth]{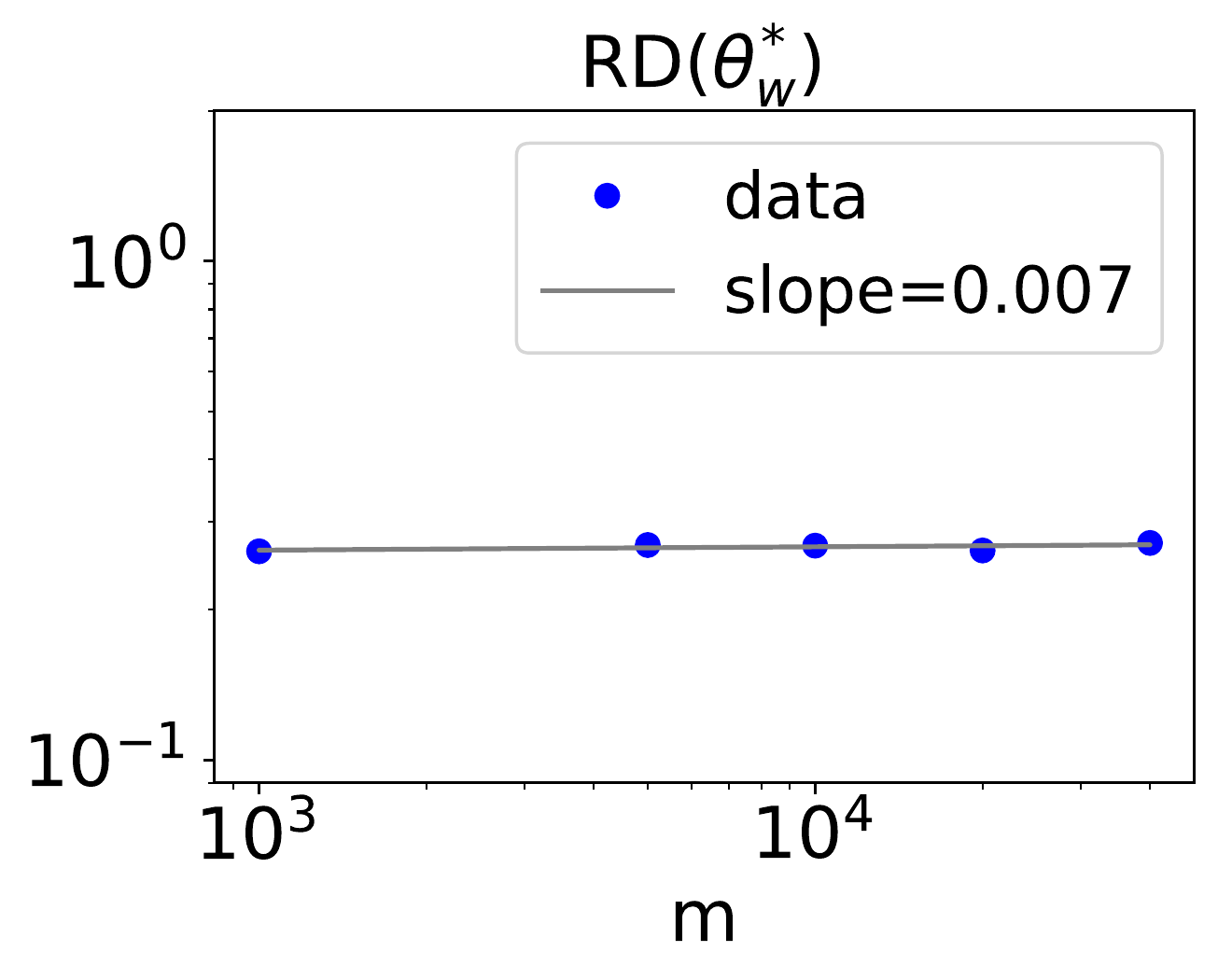}
    }
    \subfloat[$\gamma=1.75$]{
        \includegraphics[width=0.23\textwidth]{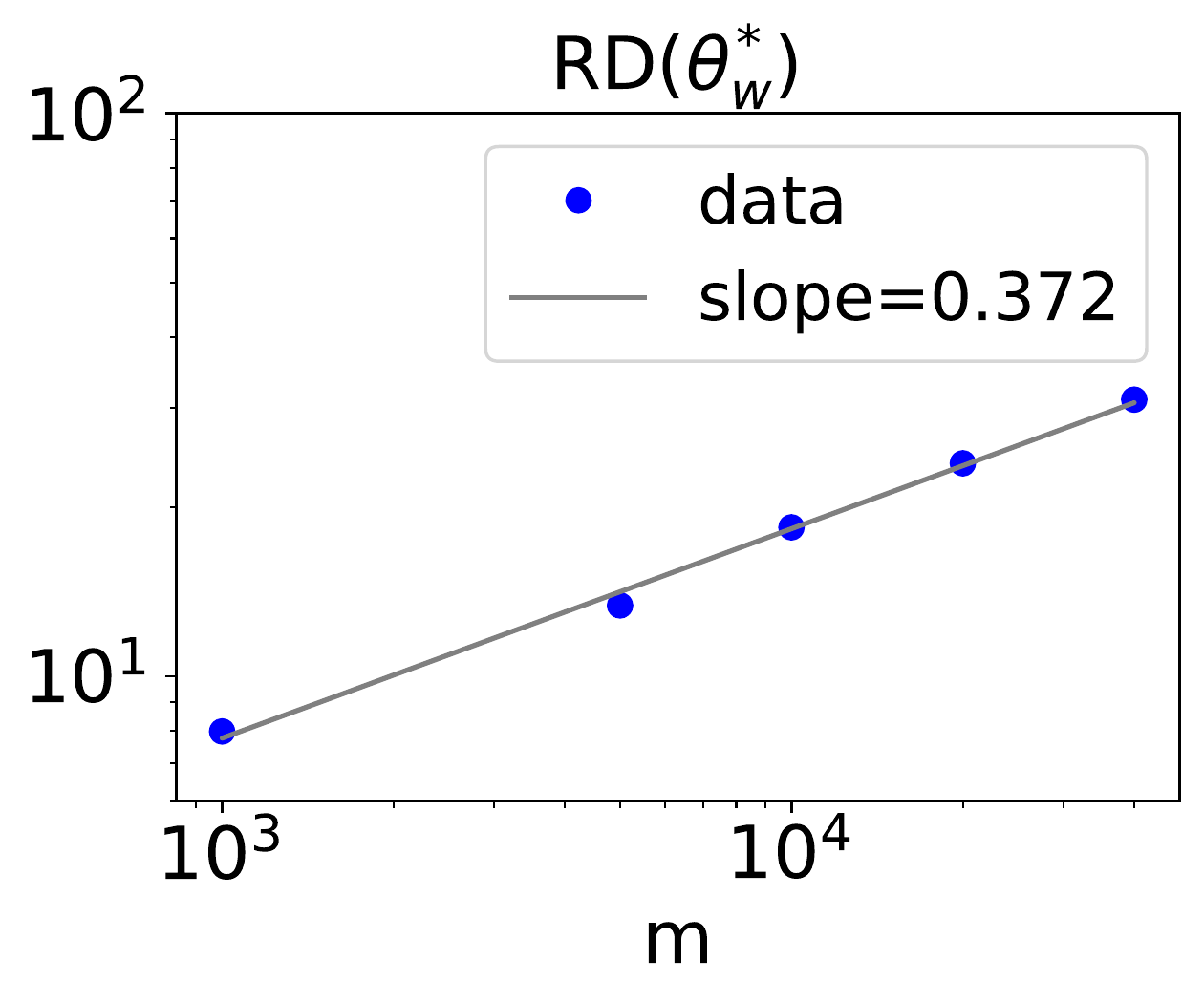}
    }
    \subfloat[$S_{w}$ vs. $\gamma$]{
        \includegraphics[width=0.23\textwidth]{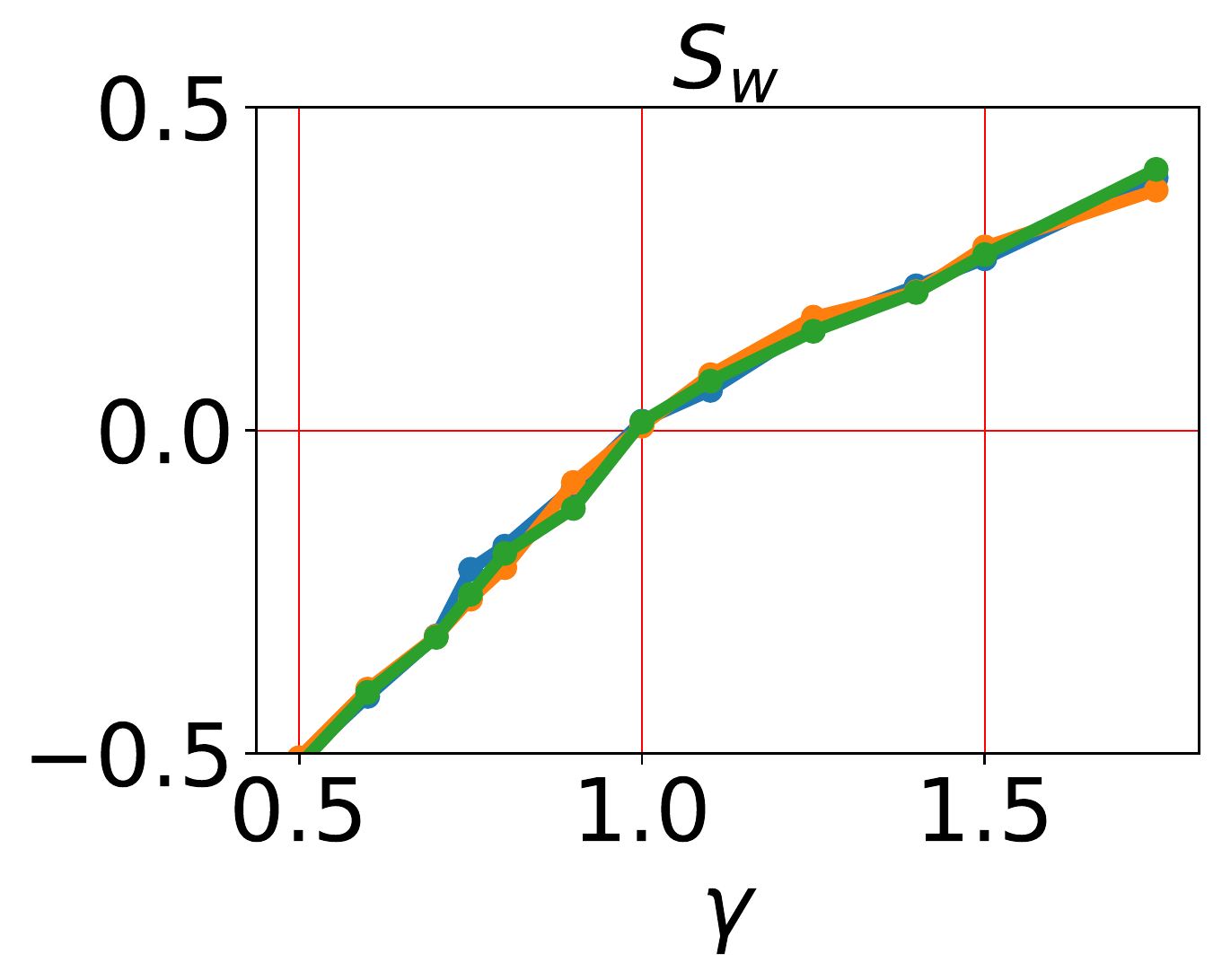}
    }
    \caption{Growth of $\mathrm{RD}(\vtheta^*_{\vw})$ w.r.t.\ $m\to\infty$ with $\gamma'=0$. For (a-c), the plot is $\mathrm{RD}(\vtheta^*_{\vw})$ vs.\ $m$ of NNs with $1000,5000,10000,20000,40000$ hidden neurons indicated by five blue dots, respectively. The gray line is a linear fit with slope indicated. For (d), the plot is $S_{\vw}$ vs. $\gamma$ for $\gamma'=0$. Each line is for a pair of $\beta_1$ and $\beta_2$: Blue: $\beta_1=1$, $\beta_2=1$; Orange: $\beta_1=m^{-1/2}$, $\beta_2=m^{-1/2}$; Blue: $\beta_1=m^{-1}$, $\beta_2=m^{-1}$.}
    \label{fig:diffbeta}
\end{figure}

Then, we visualize the phase diagram by experimentally scanning $S_w$ over the phase space. The result for the same $1$-d problem as in Fig. \ref{fig:targetfunc} is presented in Fig.~\ref{fig:wds}. In the red zone, where $S_{\vw}$ is less than zero, $\mathrm{RD}(\vtheta^*_{\vw})\to0$ as $m\to\infty$, indicating a linear regime. In contrast, in the blue zone, where $S_{\vw}$ is greater than zero, $\mathrm{RD}(\vtheta^*_{\vw})\to \infty$ as $m\to\infty$, indicating a highly nonlinear behavior. Their boundary are experimentally identified through interpolation indicated by stars in Fig.~\ref{fig:wds}, where $\mathrm{RD}(\vtheta^*_{\vw})\sim O(1)$. They are close to the boundary identified through the scaling analysis indicated by the auxiliary lines, justifying its criticality. Similarly, we use two-layer ReLU NNs to fit MNIST dataset with mean squared loss. In our experiments, the input is a $784$ dimensional vector and the output is the one-dimensional label ($0\sim9$) of the input image. As shown in Fig.~ \ref{fig:mnistwds}, the phase diagram obtained by the synthetic data also applies for such real high-dimensional dataset. 
\begin{figure}
    \centering
    \includegraphics[width=0.5\textwidth]{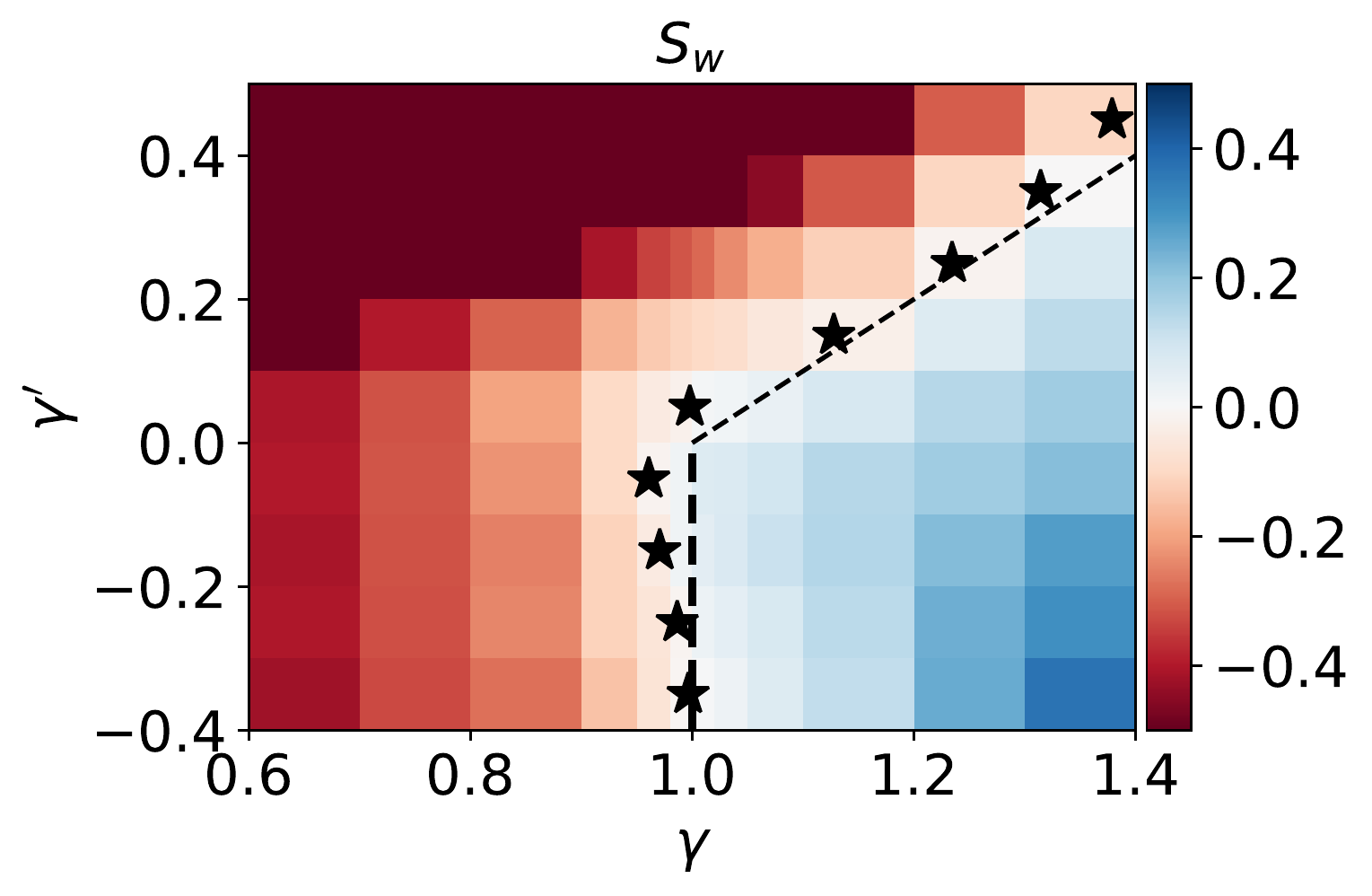}
    \caption{For synthetic data, $S_{\vw}$ estimated on two-layer ReLU NNs of $1000$, $5000$, $10000$, $20000$, $40000$ hidden neurons over $\gamma$ (ordinate) and $\gamma'$ (abscissa). The stars are zero points obtained by the linear interpolation over different $\gamma$ for each fixed $\gamma'$. Dashed lines are auxiliary lines indicating the theoretically obtained boundary.}
    \label{fig:wds}
\end{figure}

\begin{figure}
    \centering
    \includegraphics[width=0.5\textwidth]{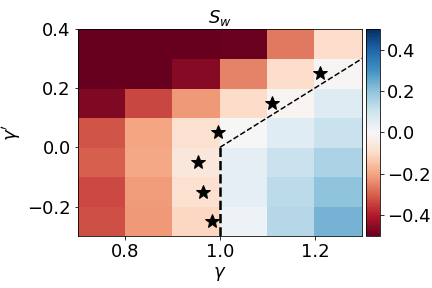}
    \caption{For MNIST data, $S_{\vw}$ estimated on two-layer ReLU NNs of $1000$, $10000$, $50000$, $250000$, $40000$ hidden neurons over $\gamma$ (ordinate) and $\gamma'$ (abscissa). The stars are zero points obtained by the linear interpolation over different $\gamma$ for each fixed $\gamma'$. Dashed lines are auxiliary lines indicating the theoretically obtained boundary.}
    \label{fig:mnistwds}
\end{figure}

\subsubsection{Theoretical results for general two-layer ReLU NNs}
The intuitive scaling analysis and the experimental demonstration result in a consistent boundary to separate the linear and condensed regimes. A question naturally arises---is there a theory that makes the intuitive scaling analysis rigorous and generalizes above empirical phase diagram for an $1$-d example to general high-dimensional data for two-layer ReLU NNs.
In the following, we address this question by providing two theorems in informal statements, which proves the criticality of $\lim\limits_{m\to+\infty}\sup\limits_{t\in[0,+\infty)}\mathrm{RD}(\vtheta_{\vw}(t))$ at above identified boundary in the phase diagram. Their rigorous statements can be found in Section \ref{sec..Theorem}.



\begin{customthm}{1*}
    (Informal statement of Theorem \ref{thm..LinearRegime}) If $\gamma<1$ or $\gamma'>\gamma-1$, then with a high probability over the choice of $\vtheta^0$, we have
    \begin{equation}
        \lim_{m\to+\infty}\sup\limits_{t\in[0,+\infty)}\mathrm{RD}(\vtheta_{\vw}(t))=0.
    \end{equation}
\end{customthm}
\begin{customthm}{2*}
    (Informal statement of Theorem~\ref{thm..CondensedRegime}) If $\gamma>1$ and $\gamma'<\gamma-1$, then with a high probability over the choice of $\vtheta^0$, we have
    \begin{equation}
        \lim_{m\to+\infty} \sup\limits_{t\in[0,+\infty)}\mathrm{RD}(\vtheta_{\vw}(t))=+\infty.
    \end{equation}
\end{customthm}
\begin{remark}
    $\lim\limits_{m\to+\infty}\sup\limits_{t\in[0,+\infty)}\mathrm{RD}(\vtheta_{\vw}(t))$ is like an order parameter in the analysis of phase transition in statistical mechanics, which is key to the regime separation and exhibits discontinuity at the boundary.
\end{remark}

In Theorem 1*, focusing on the linear regime, the negligible relative change of $\vw$ is essentially proved by showing the kernel of the training dynamics undergoes no significant change during the whole dynamics. However, the kernel of the training dynamics might be out of control for the condensed regime. This difficulty makes the result of Theorem 2* nontrivial. Instead of studying the kernel, more detailed information of the dynamics should be used. Indeed, we establish a neural-wise estimate, $\abs{a_k(t)}\leq \frac{1}{\kappa'}\norm{\vw_k(t)}_2+\abs{a_k^0}$, which holds for any $\kappa,\kappa'$ and any $t\geq 0$. We believe that this estimate can be extended to other network structures and general activation functions for the regimes of nonlinear dynamics.

Above two theorems complete the phase diagram of two-layer ReLU NN with distinctive dynamical regimes separated based on $\lim\limits_{m\to+\infty}\sup\limits_{t\in[0,+\infty)}\mathrm{RD}(\vtheta_{\vw}(t))$. The behavior of NN in the linear regime, e.g., exponential decay of loss, implicit regularization in terms of a RKHS norm, and etc., is very well studied. However, the critical and condense regimes is largely not understood. In the following, we make a further step to unravel a signature nonlinear behavior---condensation as $m\to\infty$ through experiments, which sheds light on future theoretical study.

\subsection{Critical and condensed regimes}\label{sec:ccr}

By Fig.~\ref{fig:targetfunc} (d-f) in previous section, it can be observed that the condensation of NN representation in feature space $\{(A_k,\hat{\vw}_k)\}_{k=1}^m$ comparing to initialization is a distinctive feature for the nonlinear training dynamics of NNs. Specifically, we care about this condensation at the $m\to\infty$ limit when relative change of $\vtheta_{w}$ approaches $+\infty$. Therefore, using the same $1$-d data as in Fig.~\ref{fig:targetfunc}, we scan the learned distribution of $(A_k,\hat{\vw}_k)$ pair for $m=10^3,10^4,10^6$ over the phase diagram to experimentally find out the limiting behavior. The result is shown in Fig.~\ref{fig:cdnmap}. It is easy to observe that, right to the boundary indicated by blue boxes, the condensation becomes stronger as $m\to\infty$, implying a $\delta$-like condensation behavior at the limit. This conforms with our intuition that the farther away $\vtheta_{\vw}$ deviates from initialization, the stronger nonlinearity of NNs exhibited here in the form of condensation. Therefore, as introduced before, we refer to this regime as the \emph{condensed} regime. In the critical regime as the boundary between the linear and the condensed regimes, the level of condensation is almost fixed as $m\to\infty$, which resembles a mean-field behavior. Indeed, the well-studied mean-field model is one point in the critical regime shown in the phase diagram Fig.~\ref{fig:phase-diagram}. In general, the mechanism of condensation as well as its implicit regularization effect is not well understood, which remain as important open questions for the future research.

We also examine the condensation of NNs for MNIST dataset. For such high-dimensional data, it is impossible to directly visualize the distribution in the high-dimensional feature space like above $1$-d case. Therefore, we consider a projection approach, by which we project each $\hat{\vw}$ to a reference direction $\vp$ and plot $A_k$ vs. $I_{\hat{\vw}}=\hat{\vw}\cdot \vp$. Note that the reference direction can be arbitrary selected and does not affect our conclusion. Without loss of generality, we pick $\vp=\mathbf{1}/\sqrt{n}$. Clearly, if neurons indeed condensed at several directions in the high-dimensional feature space, then their $1$-d projection should also condense at several points. As shown in  Fig. \ref{fig:mnistcdnmap}, similar to the $1$-d case, condensation behavior can be observed in the condensed regime identified above. As the parameters move further away from the boundary in the condensed regime, condensation becomes more salient.

\begin{figure}
    \centering
    \includegraphics[scale=0.8]{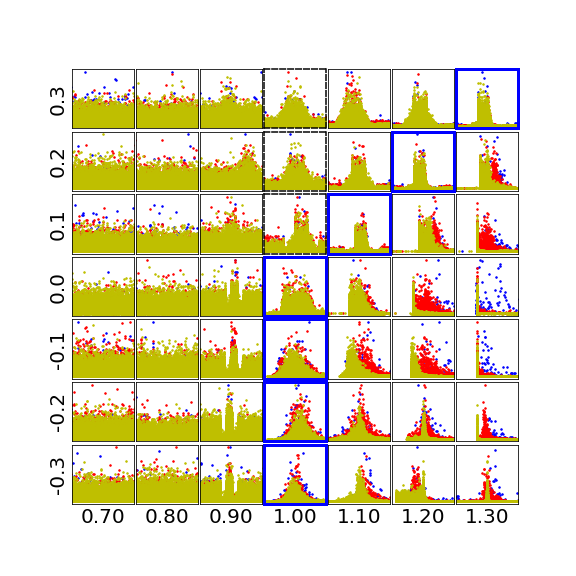}
    \caption{Condensation map for $1$-d synthetic data. Each color in each box is a scatter of $\{(A_k,\hat{\vw}_k)\}_{k=1}^m$ for a NN with corresponding $\gamma$ and $\gamma'$. The hidden neuron numbers are:  $m=10^{3}$ (blue), $10^{4}$ (red), $10^{6}$ (yellow). The abscissa coordinate is $\gamma$ and the ordinate one is $\gamma'$.}
    \label{fig:cdnmap}
\end{figure}

\begin{figure}
    \centering
    \includegraphics[scale=0.8]{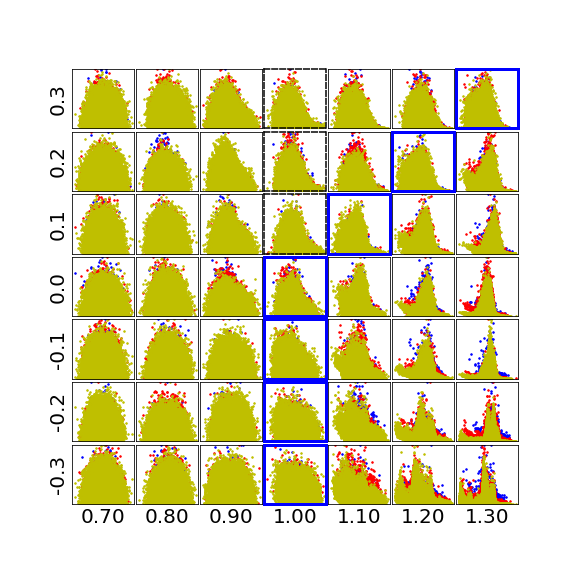}
    \caption{Condensation map for MNIST dataset. Each color in each box is a scatter of $\{(A_k,I_{\hat{\vw}}\}_{k=1}^m$ for a NN with corresponding $\gamma$ and $\gamma'$. The hidden neuron numbers are:  $m=10^{3}$ (blue), $10^{4}$ (red), $2.5\times 10^{5}$ (yellow). The abscissa coordinate is $\gamma$ and the ordinate one is $\gamma'$.}
    \label{fig:mnistcdnmap}
\end{figure}


\section{Theoretical regime characterization}\label{sec..Theorem}
We illustrate above how our phase diagram Fig. \ref{fig:phase-diagram} is obtained through experimental and theoretical approaches. To obtain a more detailed understanding of general properties of these regimes, we present our theoretical results in detail in this section, which follows a rigorous description of our notations and definitions in the beginning. The proofs can be found in the appendix.

To start with, let us consider a two layer neural network
\begin{equation}
    f_{\vtheta}(\vx) := \frac{1}{\kappa}f^\kappa_{\vtheta}(\vx) = \sum_{k=1}^{m} a_k\sigma(\vw_k^\T\vx),
\end{equation}
with the activation function $\sigma(z) = \ReLU(z) = \max(z, 0)$. Denote the dataset
\begin{equation}
    S = {\{(\vx_i, y_i)\}}_{i=1}^n,
\end{equation}
where $\vx_i$'s are i.i.d.\ sampled from the (unknown) distribution $\fD$ over $\Omega={[0,1]}^d$ with $(\vx_i)_d=1$ and $y_i=f(\vx_i)\in[0,1]$ for all $i\in[n]$.

We denote $e_i = \kappa f_{\vtheta}(\vx_i) - y_i = \kappa f_{\vtheta}(\vx_i) - f(\vx_i)$, $i\in[n]$ and $\ve = {(e_1, e_2, \ldots, e_n)}^{\T}$. Then the empirical risk can be written as
\begin{equation}
    \RS(\vtheta):=R_{S,\kappa}(\vtheta) = \frac{1}{2n}\sum_{i=1}^n{\left(\kappa f_{\vtheta}(\vx_i)-y_i\right)}^2 = \frac{1}{2n}\ve^{\T}\ve.
\end{equation}
Its gradient descent (GD) dynamics is
\begin{equation}\label{eqn:gd_dyn}
    \dot{\vtheta} = -M_{\kappa'}\nabla_{\vtheta}\RS(\vtheta),
\end{equation}
with a more explicit form for $a_k$ and $\vw_k$ respectively
\begin{equation}
    \left \{
    \begin{aligned}
        \dot{a}_k   & = -\frac{1}{\kappa'}\nabla_{a_k}\RS(\vtheta) = -\frac{\kappa}{\kappa'n}\sum_{i=1}^{n}e_i\sigma(\vw_k^\T\vx_i), \\
        \dot{\vw}_k & =-\kappa'\nabla_{\vw_k}\RS(\vtheta) = -\frac{\kappa\kappa'}{n}\sum_{i=1}^n e_i a_i\sigma'(\vw_k^\T\vx_i)\vx_i.
    \end{aligned}
    \right.\label{eq..MainDynamics}
\end{equation}
Here $\kappa,\kappa'$ are scaling parameters proposed in Section \ref{sec..Rescaling}.
The parameters are initialized as
\begin{align}
    a^0_k     & :=a_k(0)\sim N(0,1),                                                                    \\
    \vw_k^0   & :=\vw_k(0)\sim N(0,\mI_d),                                                              \\
    \vtheta^0 & := \vtheta(0) = \mathrm{vec}(\{a_k^0, \vw_k^0\}_{k=1}^m).\label{eq..MainInitialization}
\end{align}
The kernels $k^{[a]}$ and $k^{[\vw]}$ of the GD dynamics are
\begin{equation}
    \begin{aligned}
         & k^{[a]}(\vx,\vx')=\Exp_{\vw}\sigma(\vw^\T\vx)\sigma(\vw^\T\vx'),                        \\
         & k^{[\vw]}(\vx,\vx')=\Exp_{(a,\vw)}a^2\sigma'(\vw^\T\vx)\sigma'(\vw^\T\vx')\vx\cdot\vx'. \\
    \end{aligned}
\end{equation}
The Gram matrices $\mK^{[a]}$ and $\mK^{[\vw]}$ of an infinite width two-layer network are
\begin{equation}
    \begin{aligned}
         & K^{[a]}_{ij}=k^{[a]}(\vx_i,\vx_j), \quad \mK^{[a]}=(K_{ij}^{[a]})_{n\times n},         \\
         & K^{[\vw]}_{ij}=k^{[\vw]}(\vx_i,\vx_j), \quad \mK^{[\vw]}=(K_{ij}^{[\vw]})_{n\times n}.
    \end{aligned}
\end{equation}

\noindent The Gram matrices $\mG^{[a]}$, $\mG^{[\vw]}$, and $\mG$ of a finite width two-layer network have the following expressions
\begin{equation}
    \begin{aligned}
         & G^{[a]}_{ij}(\vtheta)=\frac{1}{\kappa'm}\sum_{k=1}^m\nabla_{a_k}\kappa f_{\vtheta}(\vx_i)\cdot\nabla_{a_k}\kappa f_{\vtheta}(\vx_j)=\frac{\kappa^2}{\kappa' m}\sum_{k=1}^m\sigma(\vw_k^\T\vx_i)\sigma(\vw_k^\T\vx_j),                             \\
         & G^{[\vw]}_{ij}(\vtheta)=\frac{\kappa'}{m}\sum_{k=1}^m\nabla_{\vw_k}\kappa f_{\vtheta}(\vx_i)\cdot\nabla_{\vw_k} \kappa f_{\vtheta}(\vx_j)=\frac{\kappa^2\kappa'}{m}\sum_{k=1}^m a_k^2\sigma'(\vw_k^\T\vx_i)\sigma'(\vw_k^\T\vx_j)\vx_i\cdot\vx_j, \\
         & \mG=\mG^{[a]}+\mG^{[\vw]}.
    \end{aligned}
\end{equation}
\begin{assump}\label{assump..lambda}
    Suppose that the Gram matrices are strictly positive definite. In other words,
    \begin{equation}
        \lambda:=\min \{\lambda_a,\lambda_{\vw}\}>0,
    \end{equation}
    where
    \begin{equation}
        \lambda_a := \lambda_{\min}\left(\mK^{[a]}\right),\quad \lambda_{\vw} := \lambda_{\min}\left(\mK^{[\vw]}\right).
    \end{equation}
\end{assump}
\begin{assump}\label{assump..gammagamma'}
    Suppose that the following limits exist
    \begin{equation}
        \gamma:=\lim_{m\to\infty}-\frac{\log\kappa}{\log m},\quad        \gamma':=\lim_{m\to\infty}-\frac{\log\kappa'}{\log m}.
    \end{equation}
\end{assump}
\begin{remark}
    We expect that
    \begin{equation}
        \mG^{[a]}(\vtheta^0)  \approx\frac{\kappa^2}{\kappa'}\mK^{[a]},\quad
        \mG^{[\vw]}(\vtheta^0) \approx\kappa^2\kappa'\mK^{[\vw]},
    \end{equation}
    and these will be rigorously achieved in the following proofs. We also remark that $\lambda \leq d$, which will be used in the following proofs.
\end{remark}
\begin{remark}
    When $\gamma\leq\frac{1}{2}$, we consider NNs with non-zero initial parameters and zero initial output, which can be achieved in NNs by applying the AntiSymmetrical Initialization (ASI) trick~\citep{zhang_type_2019}.
\end{remark}

Our main results are as follows.
\begin{theorem}[linear regime]\label{thm..LinearRegime}
    Given $\delta\in(0,1)$ and the sample set $S = {\{(\vx_i, y_i)\}}_{i=1}^n\subset\Omega$ with $\vx_i$'s drawn i.i.d.\ from some unknown distribution $\fD$. Suppose that Assumption~\ref{assump..lambda} and Assumption~\ref{assump..gammagamma'} hold. ASI is used when $\gamma\leq\frac{1}{2}$. Suppose that $\gamma<1$ or $\gamma'>\gamma-1$ and the dynamics \eqref{eq..MainDynamics}--\eqref{eq..MainInitialization} is considered. Then for sufficiently large $m$, with probability at least $1-\delta$ over the choice of $\vtheta^0$, we have
    \begin{enumerate}[label=(\alph*)]
        \item (changes of $\vtheta$ and $\vtheta_{\vw}$)
              \begin{equation}
                  \sup\limits_{t\in[0,+\infty)}\norm{\vtheta_{\vw}(t)-\vtheta_{\vw}^0}_2\leq \sup\limits_{t\in[0,+\infty)}\norm{\vtheta(t)-\vtheta^0}_2\lesssim\frac{1}{\sqrt{m}\kappa}\log m.
              \end{equation}
        \item (linear convergence rate)
              \begin{equation}
                  \RS(\vtheta(t))\leq \exp\left(-\frac{2m\kappa^2\lambda t}{n}\right)\RS(\vtheta^0).
              \end{equation}
    \end{enumerate}
    Moreover, for sufficiently large $m$, with probability at least $1-\delta-2\exp\left(-\frac{C_0m(d+1)}{4C^2_{\psi,1}}\right)$ over the choice of $\vtheta^0$, we have
    \begin{enumerate}[resume*]
        \item (relative change of $\vtheta$)
              \begin{equation}
                  \sup\limits_{t\in[0,+\infty)}\frac{\norm{\vtheta(t)-\vtheta^0}_2}{\norm{\vtheta^0}_2}\lesssim\frac{1}{m\kappa}\log m.
              \end{equation}
              In particular, if $\gamma<1$, $\sup\limits_{t\in[0,+\infty)}\frac{\norm{\vtheta(t)-\vtheta^0}_2}{\norm{\vtheta^0}_2}\ll 1.$
        \item (relative change of $\vtheta_{\vw}$)
              \begin{equation}
                  \sup\limits_{t\in[0,+\infty)}\mathrm{RD}(\vtheta_{\vw}(t))=\sup\limits_{t\in[0,+\infty)}\frac{\norm{\vtheta_{\vw}(t)-\vtheta_{\vw}^0}_2}{\norm{\vtheta_{\vw}^0}_2}
                  \lesssim
                  \left\{
                  \begin{array}{cc}
                      \frac{1}{m\kappa}\log m,       & \gamma<1,         \\
                      \frac{\kappa'}{m\kappa}\log m, & \gamma'>\gamma-1.
                  \end{array}\right.
              \end{equation}
              In particular, if either $\gamma<1$ or $\gamma'>\gamma-1$, $\sup\limits_{t\in[0,+\infty)}\mathrm{RD}(\vtheta_{\vw}(t))=\sup\limits_{t\in[0,+\infty)}\frac{\norm{\vtheta_{\vw}(t)-\vtheta_{\vw}^0}_2}{\norm{\vtheta_{\vw}^0}_2}\ll1$.
    \end{enumerate}
\end{theorem}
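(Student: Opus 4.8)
\textbf{Proof proposal for Theorem~\ref{thm..LinearRegime}.}

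The plan is to run a continuity/bootstrap argument on the GD flow \eqref{eq..MainDynamics}, controlling the Gram matrix $\mG(\vtheta(t))$ so that it stays close to its deterministic infinite-width counterpart, which in turn forces geometric decay of the loss and hence only a small total displacement of the parameters. First I would set up the concentration estimates at initialization: using standard sub-exponential/sub-Gaussian bounds (Bernstein, plus the $\beta_1,\beta_2$ now normalized to $1$), show that with probability at least $1-\delta$ we have $\mG^{[a]}(\vtheta^0)\approx\frac{\kappa^2}{\kappa'}\mK^{[a]}$ and $\mG^{[\vw]}(\vtheta^0)\approx\kappa^2\kappa'\mK^{[\vw]}$ in operator norm, so that $\mG(\vtheta^0)\succeq \tfrac12 m\kappa^2\lambda\,\mathrm{I}/m$ after accounting for the $1/m$ normalization in the definition of $\mG$; I would also record $\RS(\vtheta^0)\lesssim 1$ (using $y_i\in[0,1]$ and the ASI trick to kill the zeroth-order term when $\gamma\le\tfrac12$), and $\norm{\vtheta^0}_2\asymp\sqrt{m(d+1)}$, $\norm{\vtheta_{\vw}^0}_2\asymp\sqrt{md}$, the latter two holding on the event of probability $1-\delta-2\exp(-C_0 m(d+1)/4C_{\psi,1}^2)$ that appears in the statement.

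Next I would define the stopping time $t^* = \inf\{t: \lambda_{\min}(\mG(\vtheta(t))) < \tfrac12 m\kappa^2\lambda/n\}$ (or an equivalent bound on $\norm{\vtheta_{\vw}(t)-\vtheta_{\vw}^0}_2$) and argue that on $[0,t^*)$ the loss satisfies $\frac{\D}{\D t}\RS(\vtheta(t)) = -\tfrac1n \ve^\T \mG(\vtheta(t))\ve \le -\tfrac{2m\kappa^2\lambda}{n}\RS(\vtheta(t))$, giving part~(b) directly. Integrating the parameter ODE and using Cauchy–Schwarz in time, $\norm{\vtheta(t)-\vtheta^0}_2 \le \int_0^t \norm{\dot\vtheta}_2\,\D s$, where $\norm{\dot\vtheta(s)}_2 \lesssim \kappa\sqrt{m}\,\sqrt{\RS(\vtheta(s))}$ (the $\kappa'$'s from the mobility matrix and from $\dot a_k,\dot\vw_k$ cancel in the right way, and the neuron-wise bound $\abs{a_k}\le \frac1{\kappa'}\norm{\vw_k}_2+\abs{a_k^0}$ keeps the $\vw$-block under control); plugging in the exponential decay gives $\sup_t\norm{\vtheta(t)-\vtheta^0}_2 \lesssim \kappa\sqrt{m}\cdot \frac{n}{m\kappa^2\lambda}\sqrt{\RS(\vtheta^0)} \lesssim \frac{1}{\sqrt{m}\kappa}$, and the extra $\log m$ comes from being slightly more careful with the sub-exponential tail of $\RS(\vtheta^0)$ and the concentration events. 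This total displacement, divided by $\norm{\vtheta^0}_2\asymp\sqrt m$ resp. $\norm{\vtheta_{\vw}^0}_2\asymp\sqrt m$, is $\lesssim \frac{1}{m\kappa}\log m$, proving (c) and the first case of (a),(d). A standard contradiction argument then shows $t^*=+\infty$: the displacement bound is $o(\sqrt m)$ precisely when $\gamma<1$ (so $m\kappa\to\infty$), which is too small to move $\lambda_{\min}(\mG)$ out of its half-ball, closing the bootstrap.

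The case $\gamma'>\gamma-1$ (with possibly $\gamma\ge1$) is handled by the same scheme but with the $\vw$-block treated separately and more delicately: here $m\kappa$ need not diverge, so the crude bound on $\norm{\vtheta-\vtheta^0}_2$ does not give a small relative change, but the mobility matrix suppresses the $\vw$-dynamics by $\kappa'$, and $\dot\vw_k$ carries an extra factor $\kappa\kappa' a_k$; combining this with $\abs{a_k}\le\frac1{\kappa'}\norm{\vw_k}_2+\abs{a_k^0}$ and the loss decay yields $\sup_t\norm{\vtheta_{\vw}(t)-\vtheta_{\vw}^0}_2\lesssim \frac{\kappa'}{\sqrt m \kappa}\log m$, hence $\mathrm{RD}(\vtheta_{\vw})\lesssim\frac{\kappa'}{m\kappa}\log m$, which is $\ll1$ exactly when $\gamma'>\gamma-1$; one must check that even though $\vtheta_a$ can move an $O(1)$ relative amount, the Gram matrix $\mG$ stays coercive because the dominant contribution to $\lambda_{\min}(\mG)$ can be attributed to whichever block is not drifting, so the bootstrap still closes. \textbf{The main obstacle} I anticipate is exactly this last point in the regime $\gamma\ge1$, $\gamma'>\gamma-1$: one has to keep $\mG(\vtheta(t))$ uniformly positive definite along the whole trajectory \emph{without} controlling all of $\vtheta$, only $\vtheta_{\vw}$, which requires showing that the perturbation of $\mG^{[a]}$ due to the (relatively large) motion of $\vtheta_a$ is still harmless — either because $\mG^{[\vw]}$ alone remains bounded below, or because $\mG^{[a]}$ depends on $\vtheta_a$ only through $a_k^2$ in a way that the bound $\abs{a_k}\le\frac1{\kappa'}\norm{\vw_k}_2+\abs{a_k^0}$ tames, uniformly in $t$ — and this interplay between the two scaling parameters is the delicate accounting that the informal discussion after Theorem~2* is hinting at.
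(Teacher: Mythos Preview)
Your overall architecture --- concentration at $t=0$, stopping time on the Gram matrix, exponential decay of $\RS$, integrate $\dot\vtheta$, close the bootstrap by contradiction --- is exactly the paper's, and for the case $\gamma<1$ your sketch would go through essentially as written (the $\log m$ actually comes from the $\sqrt{\log m}$ bound on $\max_k\abs{a_k^0},\max_k\norm{\vw_k^0}_\infty$ over $m$ Gaussians, not from the tail of $\RS(\vtheta^0)$, but that is cosmetic).

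The genuine gap is in your treatment of the $\vw$-lazy case $\gamma'>\gamma-1$. You have the dependence of the two Gram blocks on the parameters backwards. From the definitions,
\[
G^{[a]}_{ij}(\vtheta)=\frac{\kappa^2}{\kappa' m}\sum_{k}\sigma(\vw_k^\T\vx_i)\sigma(\vw_k^\T\vx_j),
\qquad
G^{[\vw]}_{ij}(\vtheta)=\frac{\kappa^2\kappa'}{m}\sum_{k}a_k^2\,\sigma'(\vw_k^\T\vx_i)\sigma'(\vw_k^\T\vx_j)\,\vx_i\cdot\vx_j,
\]
so $\mG^{[a]}$ depends \emph{only} on $\vtheta_{\vw}$, while it is $\mG^{[\vw]}$ that carries the $a_k^2$ factor. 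This means the ``main obstacle'' you flag --- controlling the perturbation of $\mG^{[a]}$ caused by the large motion of $\vtheta_a$ --- does not exist: $\mG^{[a]}$ is insensitive to $\vtheta_a$. The paper's resolution is accordingly much simpler than either of the alternatives you list: in the $\vw$-lazy regime one drops $\mG^{[\vw]}$ altogether, runs the bootstrap on the neighborhood $\fN_a(\vtheta^0)=\{\vtheta:\norm{\mG^{[a]}(\vtheta)-\mG^{[a]}(\vtheta^0)}_\mathrm{F}\le\tfrac14\kappa^2\lambda_a/\kappa'\}$, and uses $\lambda_{\min}(\mG)\ge\lambda_{\min}(\mG^{[a]})\ge\tfrac12\kappa^2\lambda_a/\kappa'$ throughout. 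Since $\mG^{[a]}$ is a function of $\vw$ only, the smallness of $\norm{\vtheta_{\vw}(t)-\vtheta_{\vw}^0}$ alone closes the loop, regardless of what $\vtheta_a$ does. The neuron-wise estimate $\abs{a_k}\le\frac{1}{\kappa'}\norm{\vw_k}_2+\abs{a_k^0}$ that you invoke is not used anywhere in the proof of Theorem~\ref{thm..LinearRegime}; the paper reserves it for the condensed-regime Theorem~\ref{thm..CondensedRegime}. For the linear regime the paper instead couples $\alpha(t)=\max_{k,s\le t}\abs{a_k(s)}$ and $\omega(t)=\max_{k,s\le t}\norm{\vw_k(s)}_\infty$ through the pair of inequalities $\alpha(t)\le\alpha(0)+\frac{p}{\kappa'}\omega(t)$, $\omega(t)\le\omega(0)+p\kappa'\alpha(t)$ with $p\lesssim\frac{1}{m\kappa}$, which after eliminating $\alpha$ gives the $\kappa'$-suppressed bound on $\omega$ that yields the $\frac{\kappa'}{m\kappa}\log m$ rate in part (d).
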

\begin{remark}
    In the regions $\gamma<1$ or $\gamma'>\gamma-1$,
    Theorem \ref{thm..LinearRegime} shows that with a high probability over the initialization, the relative changes of $\vw_k$'s are negligible. This implies that the features change only slightly during the whole gradient descent dynamics. Therefore, in this regime and with large width $m$, one can expect the training result to be close to that of some proper linear regression model. Note that the relative change of $\vtheta$ is negligible only in the sub-region $\gamma<1$. For $\gamma\geq 1$ and $\gamma'>\gamma-1$, the relative changes of $a_k$'s can be fairly large, which may lead to unbounded relative change of $\vtheta$. The relative changes of $\vtheta$ and $a_k$'s are also empirically validated in Appendix \ref{sec:relapara}.
\end{remark}

In order to obtain the theorem that characterize the condensed regime, we need further assumption as follows,
\begin{assump}\label{assump..well-trained}
    We assume that, without loss of generality,
    \begin{equation}
        \max\limits_{i\in[n]} y_i \geq \frac{1}{2},
    \end{equation}
    and that the neural network can be well-trained to the empirical risk less than $O(\frac{1}{n})$. More quantitatively, we require that there exists a $T^*>0$ such that
    \begin{equation}
        \RS(\vtheta(T^*))\leq\frac{1}{32n}.
    \end{equation}
\end{assump}
Then we can get the following theorem
\begin{theorem}[condensed regime]\label{thm..CondensedRegime}
    The sample set $S = {\{(\vx_i, y_i)\}}_{i=1}^n\subset\Omega$ with $\vx_i$'s drawn i.i.d.\ from some unknown distribution $\fD$. Suppose that Assumption~\ref{assump..gammagamma'} and Assumption~\ref{assump..well-trained} hold. Suppose that $\gamma>1$ and $\gamma'<\gamma-1$ and the dynamics~\eqref{eq..MainDynamics}--\eqref{eq..MainInitialization} is considered. Then for sufficiently large $m$, with probability at least $1-2\exp\left(-\frac{C_0m(d+1)}{4C_{\psi,1}^2}\right)$ over the choice of $\vtheta^0$, we have
    \begin{equation}
        \sup\limits_{t\in[0,+\infty)}\mathrm{RD}(\vtheta_{\vw}(t))=\sup\limits_{t\in[0,+\infty)}\frac{\norm{\vtheta_{\vw}(t)-\vtheta_{\vw}^0}_2}{\norm{\vtheta_{\vw}^0}_2}\gg 1.
    \end{equation}
\end{theorem}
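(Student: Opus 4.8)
The plan is to replace the kernel-control argument used for the linear regime by an exact conservation law peculiar to $\ReLU$, and then combine it with the ``well-trained'' hypothesis (Assumption~\ref{assump..well-trained}) to force the input weights to grow far beyond their initial scale. First I would establish the neural-wise estimate announced in the introduction: writing out \eqref{eq..MainDynamics} and using the positive-homogeneity identity $\sigma'(z)z=\sigma(z)$ valid for $\ReLU$, one checks that along the flow
\[
\frac{\D}{\D t}\,a_k^2\;=\;-\frac{2\kappa a_k}{\kappa' n}\sum_{i=1}^n e_i\,\sigma(\vw_k^\T\vx_i)\;=\;\frac{1}{(\kappa')^2}\,\frac{\D}{\D t}\norm{\vw_k}_2^2,
\]
so that $a_k(t)^2-(\kappa')^{-2}\norm{\vw_k(t)}_2^2$ is constant in $t$ for every $k$; in particular $\abs{a_k(t)}\le(\kappa')^{-1}\norm{\vw_k(t)}_2+\abs{a_k^0}$ for all $t\ge0$ and any $\kappa,\kappa'$.

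Next I would record two inputs. By standard Gaussian concentration, with probability at least $1-2\exp(-C_0m(d+1)/(4C_{\psi,1}^2))$ one has $\norm{\vtheta_a^0}_2\lesssim\sqrt m$ and $\norm{\vtheta_{\vw}^0}_2\lesssim\sqrt{md}$. At the time $T^*$ from Assumption~\ref{assump..well-trained}, $\RS(\vtheta(T^*))\le\frac1{32n}$ forces $\abs{\kappa f_{\vtheta(T^*)}(\vx_i)-y_i}\le\frac14$ for every $i$, so choosing $i^*$ with $y_{i^*}=\max_i y_i\ge\frac12$ gives $\abs{f_{\vtheta(T^*)}(\vx_{i^*})}\ge\frac1{4\kappa}$. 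On the other hand, bounding $\sigma(\vw_k^\T\vx)\le\sqrt d\,\norm{\vw_k}_2$ (since $\vx\in[0,1]^d$), inserting the neural-wise estimate, and applying Cauchy--Schwarz give
\[
\abs{f_{\vtheta}(\vx_{i^*})}\;\le\;\sqrt d\,\Big(\tfrac1{\kappa'}\norm{\vtheta_{\vw}}_2^2+\norm{\vtheta_a^0}_2\,\norm{\vtheta_{\vw}}_2\Big).
\]
Evaluating at $t=T^*$ and writing $W:=\norm{\vtheta_{\vw}(T^*)}_2$, the two displays combine into a quadratic inequality in $W$ which, together with the concentration bound on $\norm{\vtheta_a^0}_2$, forces
\[
W\;\gtrsim\;\min\!\left(\sqrt{\tfrac{\kappa'}{\kappa}},\ \tfrac{1}{\kappa\sqrt m}\right).
\]

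It then remains to divide by $\norm{\vtheta_{\vw}^0}_2$. Using $\norm{\vtheta_{\vw}(T^*)-\vtheta_{\vw}^0}_2\ge W-\norm{\vtheta_{\vw}^0}_2$ and $\norm{\vtheta_{\vw}^0}_2\lesssim\sqrt{md}$,
\[
\sup_{t\ge0}\mathrm{RD}(\vtheta_{\vw}(t))\;\ge\;\frac{W}{\norm{\vtheta_{\vw}^0}_2}-1\;\gtrsim\;\min\!\left(\sqrt{\tfrac{\kappa'}{m\kappa}},\ \tfrac{1}{m\kappa}\right)-1.
\]
By Assumption~\ref{assump..gammagamma'}, $\gamma>1$ gives $m\kappa\sim m^{1-\gamma}\to0$, hence $1/(m\kappa)\to\infty$, while $\gamma'<\gamma-1$ gives $\kappa'/(m\kappa)\sim m^{\gamma-\gamma'-1}\to\infty$; so the right-hand side diverges as $m\to\infty$, giving $\sup_t\mathrm{RD}(\vtheta_{\vw}(t))\gg1$.

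The main obstacle I anticipate is conceptual rather than computational: one has to notice that positive homogeneity of $\ReLU$ yields the exact invariant $a_k^2-(\kappa')^{-2}\norm{\vw_k}_2^2$, which is what must stand in for the (possibly uncontrolled) training kernel in this regime, and that it has to be paired with a \emph{lower} bound on the realized network output coming from the well-trained hypothesis, rather than with the usual upper bounds on parameter motion used in the linear regime. The remaining delicate point is bookkeeping: keeping the quadratic-inequality estimate tight enough that \emph{both} branches of the minimum above genuinely diverge exactly on $\{\gamma>1\}\cap\{\gamma'<\gamma-1\}$ (the first branch needing $\gamma'<\gamma-1$, the second needing $\gamma>1$), and verifying that everything is controlled on the single high-probability event from the concentration step.
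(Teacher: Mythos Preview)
Your proposal is correct and follows essentially the same route as the paper: the conservation law $\kappa'^2\frac{\D}{\D t}a_k^2=\frac{\D}{\D t}\norm{\vw_k}_2^2$ from $\ReLU$ homogeneity, the lower bound on the network output at $T^*$ from Assumption~\ref{assump..well-trained}, and concentration of $\norm{\vtheta_a^0}_2,\norm{\vtheta_{\vw}^0}_2$ are exactly the paper's ingredients. The only cosmetic difference is that the paper handles the cross term $\sum_k|a_k^0|\,\norm{\vw_k(T^*)}_2$ by AM--GM (yielding the bound $\sqrt{\min\{1,\kappa'\}/(m\kappa)}$) whereas you use Cauchy--Schwarz and a two-case split on the resulting quadratic (yielding $\min\big(\sqrt{\kappa'/(m\kappa)},\,1/(m\kappa)\big)$); both diverge precisely on $\{\gamma>1\}\cap\{\gamma'<\gamma-1\}$.
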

To end this section, we provide a sketch of the proofs for the main theorems. In particular, two schematic diagrams \ref{fig:sketch1} and \ref{fig:sketch2} are provided for the proofs of Theorem \ref{thm..LinearRegime} (Theorem 1*) and Theorem \ref{thm..CondensedRegime} (Theorem 2*), respectively, since they are proved in totally different ways. For Theorem \ref{thm..LinearRegime}, we first establish bounds and concentration inequalities for initial parameters. Then the lower bound for the minimal eigenvalue of initial Gram matrix is obtained, which leads to a local in time linear convergence result for the empirical risk. Finally, for sufficiently wide neural networks, we show that the previous estimate is essentially global in time. We remark that for different $(\gamma,\gamma')$'s, the details are quite different in the proofs of Theorem \ref{thm..LinearRegime}, which causes the two branches shown in Figure \ref{fig:sketch1}. For Theorem \ref{thm..CondensedRegime}, as shown in Figure \ref{fig:sketch2}, the schematic diagram of the proof is short and straightforward, thanks to a key observation of the neuron-wise estimate, i.e., Proposition \ref{prop..a-w-est}.

\begin{figure}
    \centering
    \includegraphics[scale=0.6]{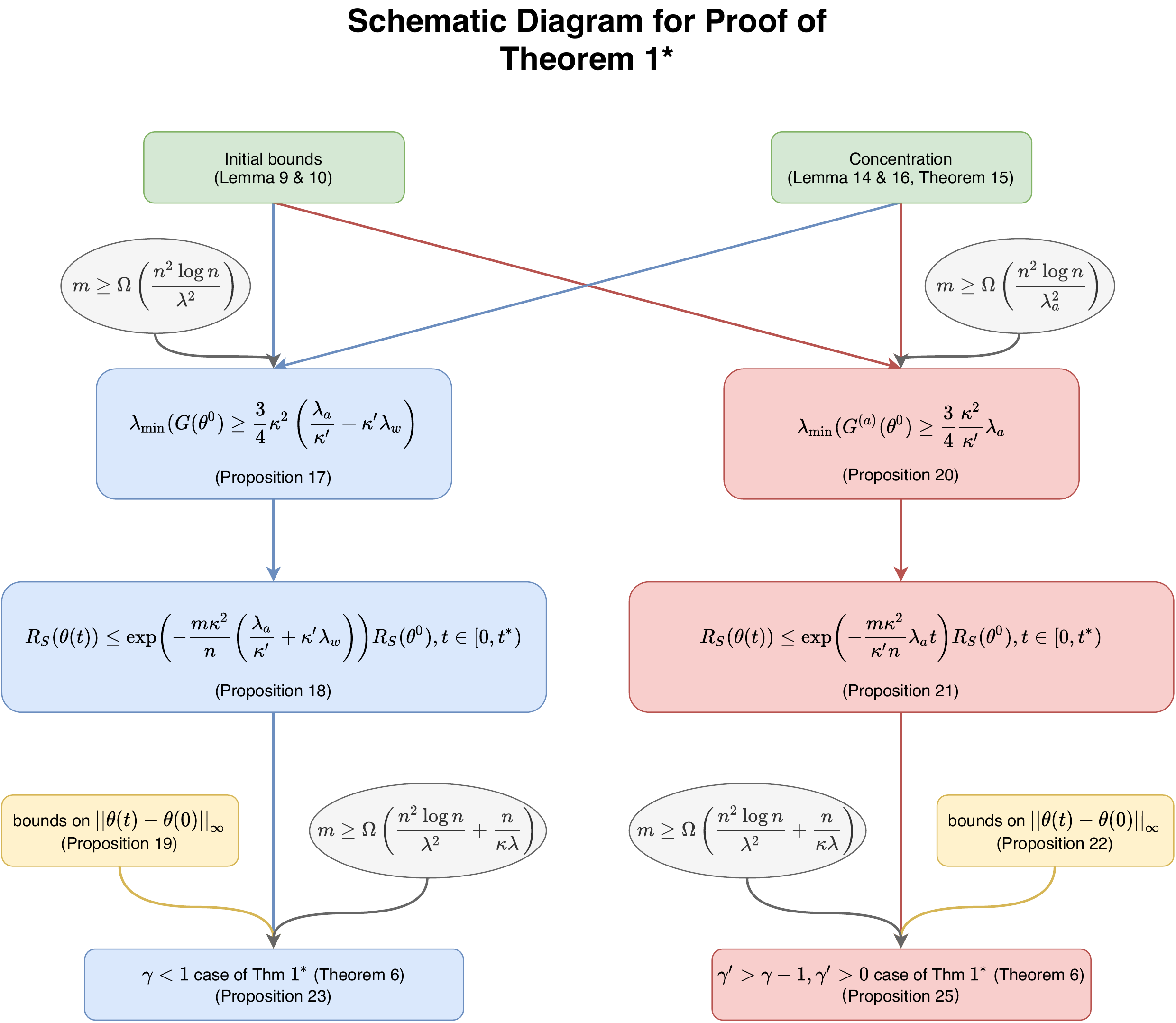}
    \caption{Sketch of proof for Theorem 1*.}
    \label{fig:sketch1}
\end{figure}

\begin{figure}
    \centering
    \includegraphics[scale=0.6]{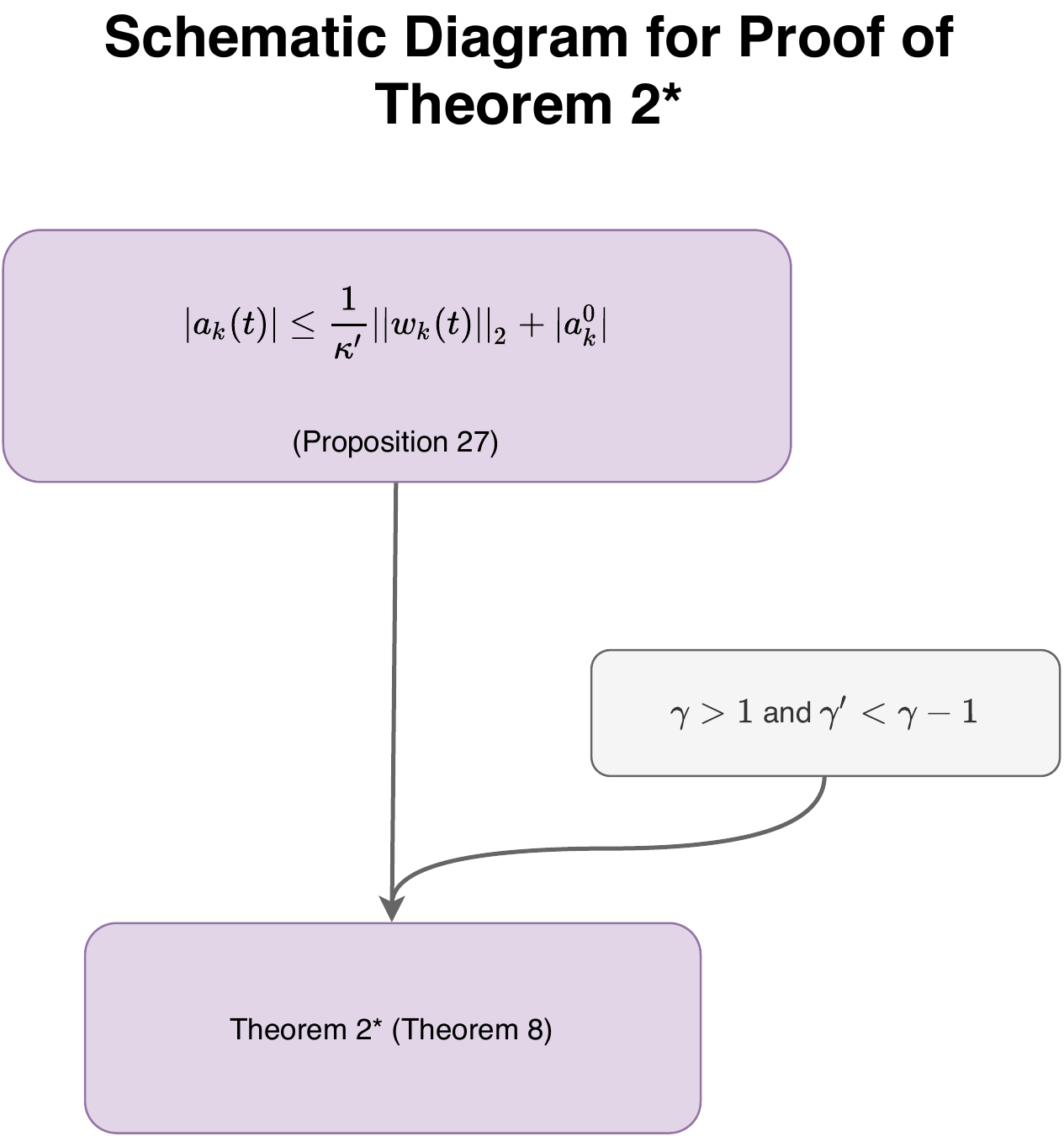}
    \caption{Sketch of proof for Theorem 2*.}
    \label{fig:sketch2}
\end{figure}

\section{Conclusions and discussion}

In this paper, we characterized the linear, critical, and condensed regimes with distinctive features and draw the phase diagram for the two-layer ReLU NN at the infinite-width limit. We experimentally demonstrate and theoretically prove the transition across the boundary (critical regime) in the phase diagram. Through experiments, we further identify the condensation as the signature behavior in the condensed regime of very strong nonlinearity.

A phase diagram serves as a map that guides the future research. In our phase diagram for two-layer ReLU NNs, the linear regimes is very well understood both theoretically and experimentally. However, the critical and condensed regimes are still largely not understood from both experimental and theoretical perspectives. The following problems for these regimes requires further studies: (i) whether the dynamics always converges to a global minimizer; (ii) what is the convergence rate; (iii) what is the mechanism of condensation; (iv) how to characterize the implicit regularization of condensation.

Our phase diagram is obtained specifically for the ReLU activation, however, our methodology and thus obtained regime characterization can be naturally extended to more general activations, which is an immediate next step of this work. In addition, how to characterize the effect of other hyperparameters, e.g., choice of optimization method, learning rate, regularization techniques, and etc., to the NN training dynamics requires future studies. Other important future problems include drawing the phase diagram for NNs of three or more layers or for convolutional networks.

In analogy to statistical mechanics, a clean regime separation may be only possible at the infinite width limit, which is not realistic in practice. Nevertheless, rich insight about a finite size system often can be derived from the analysis at the limit, which is usually much easier. Therefore, we believe it is an important task to systematically draw such phase diagrams for NNs of different structures through a combination of theoretical and experimental approaches as demonstrated in this work. These phase diagrams can be continuously refined and provides a clear pathway to open the black box of deep learning.

\acks{This work is sponsored by National Key R\&D Program of China (2019YFA0709503) (Z. X.), Shanghai Sailing Program (Z. X.).}

\appendix
\section{Technical lemmas}
This section collects some technical lemmas and propositions. For convenience, we define the two quantities
\begin{equation}
    \alpha(t):=\max\limits_{k\in[m],s\in[0,t]}\abs{a_k(s)}, \quad \omega(t):=\max\limits_{k\in[m],s\in[0,t]}\norm{\vw_k(s)}_{\infty}.
\end{equation}
\begin{lemma}[bounds of initial parameters]\label{lem..InitialParameter}
    Given $\delta\in(0,1)$ and the sample set $S = {\{(\vx_i, y_i)\}}_{i=1}^n\subset\Omega$ with $\vx_i$'s drawn i.i.d.\ from some unknown distribution $\fD$. Suppose that Assumption~\ref{assump..lambda} holds. We have with probability at least $1-\delta$ over the choice of $\vtheta^0$
    \begin{equation}\label{eqn:lem1}
        \max\limits_{k\in[m]}\left\{\abs{a_k^0},\;\norm{\vw_k^0}_{\infty}\right \}\leq \sqrt{2\log\frac{2m(d+1)}{\delta}},
    \end{equation}
\end{lemma}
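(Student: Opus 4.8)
The plan is to apply an elementary sub-Gaussian tail estimate to a single standard normal variable and then union-bound over all the scalar entries of $\vtheta^0$. By the initialization \eqref{eq..MainInitialization}, the collection $\{a_k^0 : k\in[m]\}\cup\{(\vw_k^0)_j : k\in[m],\,j\in[d]\}$ consists of exactly $m(d+1)$ independent $N(0,1)$ random variables. (Assumption~\ref{assump..lambda} plays no role in this particular lemma beyond being a standing hypothesis; it is invoked only later.) So the statement is really just a statement about the maximum of $m(d+1)$ i.i.d.\ standard Gaussians.

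First I would recall the two-sided Gaussian tail bound: for $Z\sim N(0,1)$ and any $t\geq 0$, $\Pr(|Z|\geq t)\leq 2\exp(-t^2/2)$, which follows from the standard one-sided estimate $\Pr(Z\geq t)\leq e^{-t^2/2}$ together with symmetry. Next, I would substitute the specific choice $t=\sqrt{2\log\frac{2m(d+1)}{\delta}}$, for which $2\exp(-t^2/2)=2\exp\!\left(-\log\tfrac{2m(d+1)}{\delta}\right)=\frac{\delta}{m(d+1)}$. Finally, a union bound over the $m(d+1)$ scalar parameters gives that, with probability at least $1-m(d+1)\cdot\frac{\delta}{m(d+1)}=1-\delta$, every $|a_k^0|$ and every $|(\vw_k^0)_j|$ is at most $t$; since $\norm{\vw_k^0}_\infty=\max_{j\in[d]}|(\vw_k^0)_j|$, this is precisely \eqref{eqn:lem1}.

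There is essentially no serious obstacle here; the proof is routine. The only points needing care are bookkeeping: counting the total number of scalar parameters as $m(d+1)$ (the ``$+1$'' being the output weight $a_k$ alongside the $d$ coordinates of $\vw_k$) and matching it against the $2m(d+1)$ inside the logarithm, where the extra factor of $2$ is exactly what is absorbed by the factor $2$ in the two-sided Gaussian tail, and observing that because the bound is stated in the $\ell^\infty$ norm on $\vw_k^0$ it suffices to control each coordinate separately rather than the Euclidean norm, so no vector concentration inequality is required.
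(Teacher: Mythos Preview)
Your proof is correct and follows essentially the same approach as the paper: both apply the two-sided Gaussian tail bound $\Pr(|Z|\geq t)\leq 2e^{-t^2/2}$ to each of the $m(d+1)$ independent standard normal entries of $\vtheta^0$, substitute $t=\sqrt{2\log\frac{2m(d+1)}{\delta}}$, and conclude by a union bound. There is no substantive difference.
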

\begin{proof}
    If $\rX \sim N(0, 1)$, then $\Prob(\abs{\rX} > \eps) \leq 2\E^{-\frac{1}{2}\eps^2}$ for all $\eps > 0$. Since $a^0_k\sim N(0,1)$, ${(w_k^0)}_{\alpha}\sim N(0,1)$ for $k=1, 2, \ldots, m,\; \alpha =1,\ldots,d$ and they are all independent, by setting
    \begin{equation*}
        \eps = \sqrt{2\log\frac{2m(d+1)}{\delta}},
    \end{equation*}
    one can obtain
    \begin{equation*}
        \begin{aligned}
            \Prob\left(\max\limits_{k\in[m]}\left\{\abs{a_k^0},\; \norm{\vw_k^0}_{\infty}\right \}>\eps\right)
             & = \Prob\left(\max\limits_{k\in[m],\alpha\in[d]}\left\{\abs{a_k^0},\; \abs{{(w^0_k)}_{\alpha}}\right \}>\eps\right)                                               \\
             & = \Prob\left(\bigcup\limits_{k=1}^m\left(\abs{a_k^0}>\eps\right)\bigcup\left(\bigcup\limits_{\alpha=1}^d\left(\abs{{(w_k^0)}_{\alpha}}>\eps\right)\right)\right) \\
             & \leq \sum_{k=1}^m \Prob\left(\abs{a_k^0}>\eps\right) + \sum_{k=1}^m\sum_{\alpha=1}^d \Prob\left(\abs{{(w^0_k)}_{\alpha}}>\eps\right)                             \\
             & \leq 2m \E^{-\frac{1}{2}\eps^2} + 2md \E^{-\frac{1}{2}\eps^2}                                                                                                    \\
             & = 2m(d+1)\E^{-\frac{1}{2}\eps^2}                                                                                                                                 \\
             & = \delta.
        \end{aligned}
    \end{equation*}
\end{proof}
\begin{lemma}[bound of initial empirical risk]
    Given $\delta\in(0,1)$ and the sample set $S = {\{(\vx_i, y_i)\}}_{i=1}^n\subset\Omega$ with $\vx_i$'s drawn i.i.d.\ from some unknown distribution $\fD$. Suppose that Assumption~\ref{assump..lambda} holds. We have with probability at least $1-\delta$ over the choice of $\vtheta^0$
    \begin{equation}
        \RS(\vtheta^0)\leq\frac{1}{2}\left[1 + 2d\left(\log\frac{4m(d+1)}{\delta}\right)\left(2+3\sqrt{2\log(8/\delta)}\right)\kappa\sqrt{m}\right]^2.
    \end{equation}
\end{lemma}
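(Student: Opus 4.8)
The plan is to reduce the bound on $\RS(\vtheta^0)$ to a high-probability bound on the Euclidean norm of the vector of initial outputs $\vf:=\bigl(f_{\vtheta^0}(\vx_i)\bigr)_{i=1}^n$, and then to estimate $\norm{\vf}_2$ by feeding the parameter bounds of Lemma~\ref{lem..InitialParameter} into a Gaussian concentration inequality; the essential point will be the sign cancellation among the output weights $a_k^0$. For the reduction, since $y_i\in[0,1]$ implies $\bigl(\sum_i y_i^2\bigr)^{1/2}\leq\sqrt n$, writing $\Phi$ for the $n\times m$ matrix with entries $\Phi_{ik}:=\sigma(\vw_k^{0\T}\vx_i)$ (so $\vf=\Phi\,(a_1^0,\dots,a_m^0)^{\T}$), the triangle inequality gives
\[
    \RS(\vtheta^0)=\frac{1}{2n}\norm{\kappa\vf-(y_i)_i}_2^2\leq\frac{1}{2}\left(1+\frac{\kappa}{\sqrt n}\,\norm{\vf}_2\right)^2 ,
\]
so it suffices to show that, with probability at least $1-\delta$ over $\vtheta^0$, $\frac{1}{\sqrt n}\norm{\vf}_2\leq 2d\bigl(\log\frac{4m(d+1)}{\delta}\bigr)\bigl(2+3\sqrt{2\log(8/\delta)}\bigr)\sqrt m$.

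To estimate $\norm{\vf}_2$, I would apply Lemma~\ref{lem..InitialParameter} with confidence $\delta/2$, which in particular makes the event $\{\max_k\norm{\vw_k^0}_\infty\leq B\}$, with $B:=\sqrt{2\log\frac{4m(d+1)}{\delta}}$, hold with probability at least $1-\delta/2$; since this event depends on $\{\vw_k^0\}_k$ only, conditioning on it leaves $a_1^0,\dots,a_m^0$ independent $N(0,1)$. On it, $\vx_i\in[0,1]^d$ forces $\norm{\vx_i}_1\leq d$, hence $0\leq\Phi_{ik}=\sigma(\vw_k^{0\T}\vx_i)\leq d\norm{\vw_k^0}_\infty\leq dB$ and $\norm{\Phi}_{\mathrm{op}}\leq\norm{\Phi}_{\mathrm{F}}\leq\sqrt{nm}\,dB$. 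Conditioned on $\{\vw_k^0\}_k$ the vector $\vf=\Phi\,(a_k^0)_k$ is Gaussian with $\Exp\norm{\vf}_2\leq\bigl(\Exp\norm{\vf}_2^2\bigr)^{1/2}=\norm{\Phi}_{\mathrm{F}}$, and $\norm{\vf}_2$ is a $\norm{\Phi}_{\mathrm{op}}$-Lipschitz function of $(a_k^0)_k$; hence Gaussian concentration for Lipschitz functions gives, with conditional probability at least $1-\delta/2$,
\[
    \norm{\vf}_2\leq\norm{\Phi}_{\mathrm{F}}+\norm{\Phi}_{\mathrm{op}}\sqrt{2\log(4/\delta)}\leq\sqrt{nm}\,dB\left(1+\sqrt{2\log(4/\delta)}\right),
\]
and dividing by $\sqrt n$ cancels the sample-size factor, leaving $\frac{1}{\sqrt n}\norm{\vf}_2\leq\sqrt m\,dB\bigl(1+\sqrt{2\log(4/\delta)}\bigr)$ on an event of probability at least $1-\delta$.

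It then remains to plug $B^2=2\log\frac{4m(d+1)}{\delta}$ into this estimate and absorb constants: using $B\leq B^2$ (valid since $4m(d+1)/\delta\geq 8$ forces $B\geq1$) and $1+\sqrt{2\log(4/\delta)}\leq 2+3\sqrt{2\log(8/\delta)}$ turns it into the displayed target bound on $\frac{1}{\sqrt n}\norm{\vf}_2$, which combined with the reduction display yields the claim. The step I expect to be the main obstacle is the estimate of $\norm{\vf}_2$: a plain triangle inequality, or the bound $\norm{\vf}_2\leq\norm{\Phi}_{\mathrm{op}}\bigl(\sum_k (a_k^0)^2\bigr)^{1/2}$, overshoots by a factor of order $\sqrt m$, so one must genuinely use that $\vf$ is the image under $\Phi$ of the mean-zero Gaussian $(a_k^0)_k$ and therefore concentrates at scale $\norm{\Phi}_{\mathrm{F}}\sim\sqrt{nm}\,dB$ rather than $\norm{\Phi}_{\mathrm{op}}\sqrt m$. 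A minor technical care is to invoke Lemma~\ref{lem..InitialParameter} only through its bound on $\norm{\vw_k^0}_\infty$, so that conditioning on the relevant event preserves the Gaussianity and independence of the $a_k^0$; the spurious $\sqrt n$ appearing in $\norm{\Phi}_{\mathrm{F}}$ is exactly what the $1/\sqrt n$ in $\RS$ absorbs, which is why the final bound has no dependence on $n$.
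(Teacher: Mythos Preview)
Your argument is correct but follows a genuinely different route from the paper. The paper aims at a uniform bound $\sup_{\vx\in\Omega}\abs{f_{\vtheta^0}(\vx)}$: it views $\frac{1}{m}f_{\vtheta^0}(\vx)$ as an empirical average of the mean-zero functions $g_{\vx}(a,\vw)=a\sigma(\vw^{\T}\vx)$, applies a Rademacher complexity / uniform convergence bound (using Lemma~\ref{lem..InitialParameter} to supply an envelope for $\abs{g_{\vx}}$), and then bounds the Rademacher complexity of the class $\fG$ by peeling off the ReLU and the linear map. You instead bound the finite vector $\vf=\Phi(a_k^0)_k$ directly: conditioning on the $\vw$-event of Lemma~\ref{lem..InitialParameter} leaves the $a_k^0$ i.i.d.\ Gaussian, so $\norm{\vf}_2$ is a $\norm{\Phi}_{\mathrm{op}}$-Lipschitz function of a standard Gaussian and concentrates around $\norm{\Phi}_{\mathrm{F}}\leq\sqrt{nm}\,dB$ by Gaussian concentration. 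Your approach is more elementary (no symmetrization, no complexity estimate) and cleanly isolates the cancellation mechanism---it is the mean-zero Gaussianity of the $a_k^0$'s that buys $\sqrt{m}$ rather than $m$. The paper's approach, while heavier, yields a strictly stronger intermediate statement, a uniform-in-$\vx$ bound on $\abs{f_{\vtheta^0}(\vx)}$, which could be reused for inputs outside the sample; for the present lemma the two approaches land on the same conclusion with the same constants up to the slack you explicitly give away in the last step.
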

\begin{proof}
    Let
    \begin{equation}
        \fG = \{g_{\vx}(a,\vw) \mid g_{\vx}(a,\vw):=a\sigma(\vw^\T\vx),\vx\in\Omega \}.
    \end{equation}
    Lemma~\ref{lem..InitialParameter} implies that with probability at least $1-\delta/2$ over the choice of $\vtheta^0$, we have
    \begin{equation*}
        \abs{g_{\vx}(a^0_k,\vw^0_k)}\leq d\abs{a_k^0}\norm{\vw^0_k}\leq 2d\log\frac{4m(d+1)}{\delta}
    \end{equation*}
    Then
    \begin{equation*}
        \begin{aligned}
            \frac{1}{ m}\sup_{\vx\in\Omega}\abs{f_{\vtheta^0}(\vx)}
             & =\sup_{\vx\in\Omega}\left\lvert \frac{1}{m}\sum_{k=1}^m a_k^0\sigma(\vw_k^0\cdot\vx)-\Exp_{(a,\vw)}a\sigma(\vw^\T\vx)\right\rvert \\
             & \leq 2\Rad_{\vtheta^0}(\fG) + 6d \left(\log\frac{4m(d+1)}{\delta}\right)\sqrt{\frac{2\log(8/\delta)}{m}}.
        \end{aligned}
    \end{equation*}
    The Rademacher complexity can be estimated by
    \begin{equation*}
        \begin{aligned}
            \Rad_{\vtheta^0}(\fG)
             & =\frac{1}{m}\Exp_{\tau}\left[\sup_{\vx\in\Omega}\sum_{k=1}^m\tau_ka^0_k\sigma(\vw^0_k\cdot\vx)\right]   \\
             & \leq\frac{1}{m}\sqrt{2\log\frac{4m(d+1)}{\delta}}\Exp_{\tau}\left[\sup_{\vx\in\Omega}\sum_{k=1}^m\tau_k
            \vw_k^0\cdot\vx\right]                                                                                     \\
             & \leq\sqrt{2\log\frac{4m(d+1)}{\delta}}\sqrt{2d\log\frac{4m(d+1)}{\delta}}\frac{\sqrt{d}}{\sqrt{m}}      \\
             & =\frac{2d\log\frac{4m(d+1)}{\delta}}{\sqrt{m}}.
        \end{aligned}
    \end{equation*}
    Therefore
    \begin{equation*}
        \sup_{\vx\in\Omega}\abs{f_{\vtheta^0}(\vx)}\leq 2d\left(\log\frac{4m(d+1)}{\delta}\right)(2+3\sqrt{2\log(8/\delta)}\sqrt{m}),
    \end{equation*}
    and
    \begin{equation*}
        \begin{aligned}
            \RS(\vtheta^0)
             & \leq\frac{1}{2n}\sum_{i=1}^n{\left(1+\kappa\abs{f_{\vtheta}(\vx_i)}\right)}^2                                                    \\
             & \leq\frac{1}{2}{\left[1+2d\left(\log\frac{4m(d+1)}{\delta}\right)\left(2+3\sqrt{2\log(8/\delta)}\kappa\sqrt{m}\right)\right]}^2.
        \end{aligned}
    \end{equation*}
\end{proof}
\begin{remark}
    If $\gamma>\frac{1}{2}$, then $\kappa=o(\frac{1}{\sqrt{m}\log m})$ and $\RS(\vtheta^0)=O(1)$. One can use ASI trick~\citep{zhang_type_2019} to guarantee $\RS(\vtheta^0)\leq\frac{1}{2}$ for any $\kappa$.
\end{remark}
Next we introduce the sub-exponential norm of a random variable and the sub-exponential Bernstein's inequality.
\begin{definition}[sub-exponential norm~\citep{vershynin2018high}]
    The sub-exponential norm of a random variable $\rX$ is defined as
    \begin{equation}
        \norm{\rX}_{\psi_1} := \inf\{s>0 \mid \Exp_{\rX}[\E^{\abs{\rX}/s}]\leq 2\}.
    \end{equation}
    In particular, we denote the sub-exponential norm of a $\chi^2(d)$ random variable $\rX$ by $C_{\psi,d}:=\norm{\rX}_{\psi_1}$. Here the $\chi^2$ distribution with $d$ degrees of freedom has the probability density function
    \begin{equation*}
        f_{\rX}(z)=\frac{1}{2^{d/2}\Gamma(d/2)}z^{d/2-1}\E^{-z/2}.
    \end{equation*}
\end{definition}
\begin{remark}
    Note that
    \begin{align*}
        \Exp_{\rX\sim\chi^2(d)}\E^{\abs{\rX}/2}
         & =\int_0^{+\infty}\frac{1}{2^{d/2}\Gamma(d/2)}z^{d/2-1}\diff{z}=+\infty,                            \\
        \lim_{s\to+\infty}\Exp_{\rX\sim\chi^2(d)}\E^{\abs{\rX}/s}
         & =\lim_{s\to+\infty}\int_0^{+\infty}\frac{1}{2^{d/2}\Gamma(d/2)}z^{d/2-1}\E^{-z/2+z/s}\diff{z}=1<2.
    \end{align*}
    These imply that $2\leq C_{\psi,d}<+\infty$.
\end{remark}
\begin{lemma}\label{lem..subexponentialnorm}
    Suppose that $\vw\sim N(0,\mI_d)$, $a\sim N(0,1)$ and given $\vx_i,\vx_j \in\Omega$. Then we have
    \begin{enumerate}[(i)]
        \item if $\rX:=\sigma(\vw^\T\vx_i)\sigma(\vx\cdot\vx_j)$, then $\norm{\rX}_{\psi_1}\leq dC_{\psi,d}$.
        \item if $\rX:=a^2\sigma'(\vw^\T\vx_i)\sigma'(\vw^\T\vx_j)\vx_i\cdot\vx_j$, then $\norm{\rX}_{\psi_1}\leq dC_{\psi,d}$.
    \end{enumerate}
\end{lemma}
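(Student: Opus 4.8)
The plan is to reduce each of the two random variables, by elementary pointwise estimates, to a constant multiple of a $\chi^2$ random variable, and then invoke two basic properties of the sub-exponential norm that follow directly from the definition $\norm{\rX}_{\psi_1}=\inf\{s>0\mid \Exp_{\rX}[\E^{\abs{\rX}/s}]\leq 2\}$: (a) if $\abs{\rX}\leq \rY$ almost surely for some $\rY\geq 0$, then $\norm{\rX}_{\psi_1}\leq\norm{\rY}_{\psi_1}$, since every $s$ admissible for $\rY$ is admissible for $\rX$; and (b) $\norm{c\rX}_{\psi_1}=\abs{c}\,\norm{\rX}_{\psi_1}$ for any constant $c$. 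The geometric input is that $\Omega=[0,1]^d$, so each $\vx_i$ satisfies $\norm{\vx_i}_2\leq\sqrt{d}$, whence $\abs{\vw^\T\vx_i}\leq\sqrt{d}\,\norm{\vw}_2$ by Cauchy--Schwarz and $\abs{\vx_i\cdot\vx_j}\leq d$.

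For part (i), write $\rX=\sigma(\vw^\T\vx_i)\sigma(\vw^\T\vx_j)$ (the summand appearing in $\mG^{[a]}$). Using $0\leq\sigma(z)\leq\abs{z}$ for $\sigma=\ReLU$, one gets $\abs{\rX}\leq\abs{\vw^\T\vx_i}\,\abs{\vw^\T\vx_j}\leq d\,\norm{\vw}_2^2$. Since $\vw\sim N(0,\mI_d)$, the variable $\norm{\vw}_2^2$ has exactly the $\chi^2(d)$ law, so $\norm{\norm{\vw}_2^2}_{\psi_1}=C_{\psi,d}$ by definition, and properties (a), (b) give $\norm{\rX}_{\psi_1}\leq d\,C_{\psi,d}$.

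For part (ii), note that $\sigma'$ is the Heaviside function, taking values in $\{0,1\}$, so $\abs{\sigma'(\vw^\T\vx_i)\sigma'(\vw^\T\vx_j)}\leq 1$ and hence $\abs{\rX}\leq a^2\,\abs{\vx_i\cdot\vx_j}\leq d\,a^2$. Here $a^2\sim\chi^2(1)$, so (a), (b) yield $\norm{\rX}_{\psi_1}\leq d\,C_{\psi,1}$. To land on the stated bound I would then check the monotonicity $C_{\psi,1}\leq C_{\psi,d}$: writing a $\chi^2(d)$ variable as $\sum_{k=1}^d Z_k^2$ with $Z_k\sim N(0,1)$ independent, one has $\Exp[\E^{\rY/s}]=(1-2/s)^{-d/2}$ for $\rY\sim\chi^2(d)$ and $s>2$, which is nondecreasing in $d$ because $(1-2/s)^{-1/2}>1$; therefore the infimum defining $C_{\psi,d}$ is nondecreasing in $d$, giving $d\,C_{\psi,1}\leq d\,C_{\psi,d}$.

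There is no real obstacle: the argument is a short chain of deterministic inequalities followed by identifying the dominating variable as a $\chi^2$. The only mild subtlety is the final comparison $C_{\psi,1}\leq C_{\psi,d}$ in part (ii), which is needed precisely because $a$ and $\vw$ are independent and so $a^2$ cannot be directly dominated by $\norm{\vw}_2^2$; this is settled by the factorization of the relevant moment generating function over the independent coordinates. An alternative that avoids even this step would be to bound $\abs{\rX}$ in part (ii) by $d\,a^2\norm{\vw'}_2^2/\norm{\vw'}_2^2$-type tricks, but the monotonicity route is cleanest.
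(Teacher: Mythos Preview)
Your proposal is correct and follows the same approach as the paper: bound $\abs{\rX}$ pointwise by $d$ times a $\chi^2$ random variable and then apply monotonicity and homogeneity of the $\psi_1$-norm. Your handling of part~(ii) is in fact more careful than the paper's, which writes $\abs{a}^2\leq\rZ$ with $\rZ=\norm{\vw}_2^2$ (a bound that does not hold pointwise since $a$ and $\vw$ are independent); you instead pass through $dC_{\psi,1}$ and explicitly verify the monotonicity $C_{\psi,1}\leq C_{\psi,d}$ via the moment generating function, which is precisely the missing step.
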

\begin{proof}
    Let
    $\rZ:=\norm{\vw}^2_2=\chi^2(d)$.\\
    (i)   $\abs{\rX}\leq d\norm{\vw}^2_2=d\rZ$ and
    \begin{equation*}
        \begin{aligned}
            \norm{\rX}_{\psi_1}
             & =\inf\{s>0\mid\Exp_{\rX}\exp(\abs{\rX}/s)\leq 2\}                                               \\
             & =\inf\{s>0\mid\Exp_{\vw}\exp\left(\abs{\sigma(\vw^\T\vx_i)\sigma(\vw^\T\vx_j)}/s\right)\leq 2\} \\
             & \leq\inf\{s>0\mid\Exp_{\vw}\exp(d\norm{\vw}^2_2/s)\leq 2\}                                      \\
             & =\inf\{s>0\mid\Exp_{\rZ}\exp(d\abs{\rZ}/s)\leq 2\}                                              \\
             & =d\inf\{s>0\mid\Exp_{\rZ}\exp(\abs{\rZ}/s)\leq 2\}                                              \\
             & =d\norm{\chi^2(d)}_{\psi_1}                                                                     \\
             & \leq dC_{\psi,d}.
        \end{aligned}
    \end{equation*}
    (ii) $\abs{\rX}\leq d\abs{a}^2\leq d\rZ$ and $\norm{\rX}_{\psi_1}\leq dC_{\psi,d}$.
\end{proof}
\begin{theorem}[sub-exponential Bernstein's inequality~\citep{vershynin2018high}]\label{thm:sub_exp}
    Suppose that $\rX_1,\ldots,\rX_m$ are i.i.d.\ sub-exponential random variables with $\Exp\rX_1=\mu$, then for any $s\geq 0$ we have
    \begin{equation}
        \Prob\left(\Abs{\frac{1}{m}\sum_{k=1}^m\rX_k-\mu}\geq s\right)\leq 2\exp\left(-C_0 m \min\left(\frac{s^2}{\norm{\rX_1}^2_{\psi_1}},\frac{s}{\norm{\rX_1}_{\psi_1}}\right)\right),
    \end{equation}
    where $C_0$ is an absolute constant.
\end{theorem}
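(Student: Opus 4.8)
The plan is to prove this classical Bernstein-type bound by the Chernoff (exponential-moment) method, the standard route to such tail inequalities. First I would reduce to the centered case. Set $K:=\norm{\rX_1}_{\psi_1}$ and $\rY_k:=\rX_k-\mu$, so that the $\rY_k$ are i.i.d., mean zero, and sub-exponential. Centering only inflates the norm by an absolute constant: the defining inequality $\Exp\E^{\abs{\rX_1}/K}\leq 2$ gives, via Jensen, $\abs{\mu}\leq\Exp\abs{\rX_1}\leq K\log 2$, whence the triangle inequality for $\norm{\cdot}_{\psi_1}$ yields $\tilde K:=\norm{\rY_1}_{\psi_1}\leq 2K$. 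The analytic heart of the argument is a moment-generating-function estimate: for a mean-zero sub-exponential $\rY_1$ and $\abs{\lambda}\leq 1/(2\tilde K)$,
\begin{equation*}
    \Exp\E^{\lambda\rY_1}\leq\E^{C_1\lambda^2\tilde K^2}
\end{equation*}
for an absolute constant $C_1$. I would establish this by expanding the exponential in its Taylor series: the bound $\Exp\E^{\abs{\rY_1}/\tilde K}\leq 2$ forces the moment estimates $\Exp\abs{\rY_1}^p\leq 2\,p!\,\tilde K^p$ for every $p\geq 1$; the mean-zero hypothesis cancels the $p=1$ term, and summing the resulting geometric tail over $p\geq 2$ in the range $\abs{\lambda}\tilde K\leq 1/2$ gives $\Exp\E^{\lambda\rY_1}\leq 1+4(\lambda\tilde K)^2\leq\E^{4\lambda^2\tilde K^2}$, so $C_1=4$ works.

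With the MGF bound in hand, for the upper tail I would apply Markov's inequality to $\E^{\lambda\sum_k\rY_k}$ and use independence:
\begin{equation*}
    \Prob\left(\frac1m\sum_{k=1}^m\rY_k\geq s\right)\leq\E^{-\lambda ms}\prod_{k=1}^m\Exp\E^{\lambda\rY_k}\leq\exp\left(-\lambda ms+C_1\lambda^2\tilde K^2m\right),
\end{equation*}
valid for every $0\leq\lambda\leq 1/(2\tilde K)$. The whole two-regime shape of the conclusion emerges from optimizing this quadratic exponent subject to the admissibility constraint on $\lambda$. The unconstrained minimizer is $\lambda^\star=s/(2C_1\tilde K^2)$, which is admissible precisely when $s\leq C_1\tilde K$. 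In that (sub-Gaussian) regime, substituting $\lambda^\star$ yields $\exp(-ms^2/(4C_1\tilde K^2))$; in the complementary (sub-exponential) regime $s>C_1\tilde K$, the constrained optimum sits at the boundary $\lambda=1/(2\tilde K)$, and the inequality $s>C_1\tilde K$ turns the resulting exponent $-ms/(2\tilde K)+C_1m/4$ into a bound $\exp(-ms/(4\tilde K))$. The two cases combine into a single estimate $\exp(-c\,m\min(s^2/\tilde K^2,\,s/\tilde K))$.

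Finally I would symmetrize: applying the identical argument to $-\rY_k$ controls the lower tail, and a union bound over the two directions produces the factor $2$. Substituting $\tilde K\leq 2K=2\norm{\rX_1}_{\psi_1}$ throughout and folding all numerical factors into a single absolute constant $C_0$ recovers the stated inequality. I expect the main obstacle to be the MGF estimate together with careful bookkeeping of absolute constants, and in particular making transparent that it is the restricted range $\abs{\lambda}\leq 1/(2\tilde K)$, rather than the quadratic exponent alone, that forces the $\min(s^2,s)$ dichotomy characteristic of sub-exponential (as opposed to sub-Gaussian) tails.
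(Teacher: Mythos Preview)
Your proof is correct and follows the standard Chernoff-method route to Bernstein's inequality for sub-exponential variables. There is nothing in the paper to compare against: the paper does not prove this theorem but simply quotes it as a known result from \citet{vershynin2018high}, so any sound proof is acceptable here, and yours is essentially the one found in that reference.
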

\begin{proposition}[norm of initial parameters]\label{prop..InitialThetaNorm}
    Given $\delta\in(0,1)$ and the sample set $S = {\{(\vx_i, y_i)\}}_{i=1}^n\subset\Omega$ with $\vx_i$'s drawn i.i.d.\ from some unknown distribution $\fD$. Suppose that Assumption~\ref{assump..lambda} holds. We have with probability at least $1-2\exp\left(-\frac{C_0m(d+1)}{4C^2_{\psi,1}}\right)$ over the choice of $\vtheta^0$
    \begin{align}
        \sqrt{\frac{m(d+1)}{2}}
         & \leq \norm{\vtheta^0}_2\leq \sqrt{\frac{3m(d+1)}{2}},\label{eq..InitialThetaNorm}     \\
        \sqrt{\frac{md}{2}}
         & \leq \norm{\vtheta_{\vw}^0}_2\leq \sqrt{\frac{3md}{2}}.\label{eq..InitialThetavwNorm} \\
        \sqrt{\frac{m}{2}}
         & \leq \norm{\vtheta_{a}^0}_2\leq \sqrt{\frac{3m}{2}}.\label{eq..InitialThetaaNorm}
    \end{align}
\end{proposition}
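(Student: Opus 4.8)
The plan is a one-shot application of the sub-exponential Bernstein inequality (Theorem~\ref{thm:sub_exp}) to each of the three squared norms, after observing that every such squared norm is literally a sum of i.i.d.\ $\chi^2(1)$ random variables.

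First I would rewrite the norms. By the initialization \eqref{eq..MainInitialization}, the entries of $\vtheta^0$ consist of $m$ i.i.d.\ $N(0,1)$ variables $a_k^0$ together with $md$ i.i.d.\ $N(0,1)$ entries of the $\vw_k^0$'s, all mutually independent; hence $\norm{\vtheta^0}_2^2=\sum_{\ell=1}^{m(d+1)}Z_\ell^2$ with $Z_\ell\sim N(0,1)$ i.i.d., while $\norm{\vtheta_{\vw}^0}_2^2$ is such a sum with $md$ terms and $\norm{\vtheta_a^0}_2^2$ with $m$ terms. Each summand $Z_\ell^2\sim\chi^2(1)$ has mean $1$ and sub-exponential norm $C_{\psi,1}$, which is finite and satisfies $C_{\psi,1}\ge 2$ (as recorded in the remark following the sub-exponential norm definition).

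Next, for $M\in\{m(d+1),\,md,\,m\}$ I would apply Theorem~\ref{thm:sub_exp} to the $M$ summands of the corresponding norm with deviation $s=\tfrac12$ and mean $\mu=1$. Since $C_{\psi,1}\ge 2$, one has $\frac{s}{C_{\psi,1}}=\frac{1}{2C_{\psi,1}}\le 1$ and $\frac{s^2}{C_{\psi,1}^2}=\frac{1}{4C_{\psi,1}^2}\le\frac{1}{2C_{\psi,1}}=\frac{s}{C_{\psi,1}}$, so the minimum in Bernstein's exponent simplifies to $\frac{1}{4C_{\psi,1}^2}$, giving
\begin{equation*}
  \Prob\!\left(\Abs{\tfrac{1}{M}\textstyle\sum_{\ell=1}^{M}Z_\ell^2-1}\ge\tfrac12\right)\le 2\exp\!\left(-\frac{C_0 M}{4C_{\psi,1}^2}\right).
\end{equation*}
On the complementary event $\frac{M}{2}\le\sum_{\ell}Z_\ell^2\le\frac{3M}{2}$, and taking square roots with $M=m(d+1),\,md,\,m$ yields \eqref{eq..InitialThetaNorm}, \eqref{eq..InitialThetavwNorm}, \eqref{eq..InitialThetaaNorm}, respectively; the exponent $\frac{C_0 m(d+1)}{4C_{\psi,1}^2}$ in the statement is precisely the one coming from $M=m(d+1)$.

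Finally I would merge the events. Writing $E_1,E_2,E_3$ for the three success events and using $\norm{\vtheta^0}_2^2=\norm{\vtheta_a^0}_2^2+\norm{\vtheta_{\vw}^0}_2^2$, adding the intervals in $E_2$ and $E_3$ reproduces exactly the interval in $E_1$, so $E_2\cap E_3\subseteq E_1$; thus the event that all three bounds hold is $E_2\cap E_3$, and a union bound controls its failure probability by $\Prob(E_2^c)+\Prob(E_3^c)$, which is exponentially small in $m$ of the stated form. There is no real obstacle here — the argument is routine concentration — and the only points needing attention are the simplification of the $\min$ in Bernstein's bound (where $C_{\psi,1}\ge 2$ is exactly what is used) and the bookkeeping of the three different numbers of summands so that the final probability can be phrased uniformly.
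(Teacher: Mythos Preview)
Your proposal is correct and follows essentially the same route as the paper: both recognize the squared norms as sums of i.i.d.\ $\chi^2(1)$ variables and apply the sub-exponential Bernstein inequality (Theorem~\ref{thm:sub_exp}) with $s=\tfrac12$, using $C_{\psi,1}\ge 2$ to simplify the $\min$ in the exponent. The paper only spells out the case $M=m(d+1)$ and dismisses \eqref{eq..InitialThetavwNorm}--\eqref{eq..InitialThetaaNorm} with ``similar''; your additional observation that $E_2\cap E_3\subseteq E_1$ is a tidy bookkeeping improvement that the paper does not make explicit.
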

\begin{proof}
    Let $\rX_1, \ldots, \rX_{m(d+1)}$ be the squares of the entries of $\vtheta^0$, which are drawn i.i.d.\ from $\chi^2(1)$. The latter is sub-exponential and $\Exp\rX_k=1$. Then by Theorem~\ref{thm:sub_exp}
    \begin{equation*}
        \Prob\left(\Abs{\frac{1}{m(d+1)}\sum_{k=1}^{m(d+1)}\rX_k-1}\geq s\right)\leq 2\exp\left(-C_0 m(d+1) \min\left(\frac{s^2}{C^2_{\psi,1}},\frac{s}{C_{\psi,1}}\right)\right).
    \end{equation*}
    Setting $s=\frac{1}{2}$, we have $\frac{s}{C_{\psi,1}}\leq \frac{1/2}{2}< 1$ and
    \begin{equation*}
        \begin{aligned}
            \Prob\left(\frac{1}{2}\leq\frac{1}{m(d+1)}\sum_{k=1}^{m(d+1)}\rX_k\leq\frac{3}{2}\right)
             & \leq 2\exp\left(-C_0 m(d+1) \min\left(\frac{1}{4C^2_{\psi,1}},\frac{1}{2C_{\psi,1}}\right)\right) \\
             & =2\exp\left(-\frac{C_0m(d+1)}{4C^2_{\psi,1}}\right).
        \end{aligned}
    \end{equation*}
    Therefore, with probability at least $1-2\exp\left(-\frac{C_0m(d+1)}{4C^2_{\psi,1}}\right)$ over the choice of $\vtheta^0$,
    \begin{equation*}
        \frac{1}{2}\leq\frac{1}{m(d+1)}\sum_{k=1}^{m(d+1)}\rX_k=\frac{1}{m(d+1)}\norm{\vtheta^0}_2^2\leq\frac{3}{2}.
    \end{equation*}
    In other words, \eqref{eq..InitialThetaNorm} holds. The proofs of \eqref{eq..InitialThetavwNorm} and \eqref{eq..InitialThetaaNorm} are similar.
\end{proof}
\begin{proposition}[minimal eigenvalue of Gram matrix $\mG$ at initial]\label{prop:lambda_min}
    Given $\delta\in(0,1)$ and the sample set $S = {\{(\vx_i, y_i)\}}_{i=1}^n\subset\Omega$ with $\vx_i$'s drawn i.i.d.\ from some unknown distribution $\fD$. Suppose that Assumption~\ref{assump..lambda} holds. If $m\geq\frac{16n^2d^2C_{\psi,d}}{C_0\lambda^2}\log\frac{4n^2}{\delta}$ then with probability at least $1-\delta$ over the choice of $\vtheta^0$, we have
    \begin{equation}
        \lambda_{\min}\left(\mG(\vtheta^0)\right)\geq\frac{3}{4}\kappa^2\left(\frac{1}{\kappa'}\lambda_a+\kappa'\lambda_{\vw}\right).
    \end{equation}
\end{proposition}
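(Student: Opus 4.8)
The plan is to write $\mG(\vtheta^0)=\mG^{[a]}(\vtheta^0)+\mG^{[\vw]}(\vtheta^0)$ and to control each term separately, showing that with high probability over $\vtheta^0$ it is close in spectral norm to its expectation, and then to combine the two eigenvalue lower bounds. By the definitions of the kernels $k^{[a]},k^{[\vw]}$ and of the finite-width Gram matrices, one has $\Exp_{\vtheta^0}\mG^{[a]}(\vtheta^0)=\frac{\kappa^2}{\kappa'}\mK^{[a]}$ and $\Exp_{\vtheta^0}\mG^{[\vw]}(\vtheta^0)=\kappa^2\kappa'\mK^{[\vw]}$: for fixed $i,j\in[n]$ the entry $G^{[a]}_{ij}(\vtheta^0)$ is $\frac{\kappa^2}{\kappa'}$ times an empirical average over the $m$ neurons of i.i.d.\ copies of $\sigma((\vw^0)^{\T}\vx_i)\sigma((\vw^0)^{\T}\vx_j)$, whose mean is $K^{[a]}_{ij}$, and similarly $G^{[\vw]}_{ij}(\vtheta^0)$ is $\kappa^2\kappa'$ times an empirical average of i.i.d.\ copies of $(a^0)^2\sigma'((\vw^0)^{\T}\vx_i)\sigma'((\vw^0)^{\T}\vx_j)\vx_i\cdot\vx_j$ with mean $K^{[\vw]}_{ij}$.

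Next I would apply the sub-exponential Bernstein inequality (Theorem~\ref{thm:sub_exp}): by Lemma~\ref{lem..subexponentialnorm} both per-neuron summands have sub-exponential norm at most $dC_{\psi,d}$, so for any $s\ge 0$,
\[
    \Prob\left(\Abs{\frac{\kappa'}{\kappa^2}G^{[a]}_{ij}(\vtheta^0)-K^{[a]}_{ij}}\ge s\right)\le 2\exp\left(-C_0 m\min\left(\frac{s^2}{d^2C_{\psi,d}^2},\frac{s}{dC_{\psi,d}}\right)\right),
\]
and the same bound holds for $\frac{1}{\kappa^2\kappa'}G^{[\vw]}_{ij}(\vtheta^0)-K^{[\vw]}_{ij}$. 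I would take $s=\frac{\lambda}{4n}$; since $\lambda\le d$ and $C_{\psi,d}\ge 2$ (both recorded earlier), $s/(dC_{\psi,d})<1$, so the minimum in the exponent is $s^2/(d^2C_{\psi,d}^2)$, and the stated lower bound on $m$ makes the right-hand side at most $\delta/(2n^2)$. A union bound over the $n^2$ index pairs for each of the two matrices then shows that, with probability at least $1-\delta$ over $\vtheta^0$, every entry of $\frac{\kappa'}{\kappa^2}\mG^{[a]}(\vtheta^0)-\mK^{[a]}$ and of $\frac{1}{\kappa^2\kappa'}\mG^{[\vw]}(\vtheta^0)-\mK^{[\vw]}$ is bounded in absolute value by $\frac{\lambda}{4n}$.

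On this event I would convert entrywise control into operator-norm control via the crude estimate that the spectral norm of a symmetric $n\times n$ matrix is at most $n$ times the largest absolute value of its entries; this gives $\norm{\frac{\kappa'}{\kappa^2}\mG^{[a]}(\vtheta^0)-\mK^{[a]}}_2\le\frac{\lambda}{4}$ and likewise $\norm{\frac{1}{\kappa^2\kappa'}\mG^{[\vw]}(\vtheta^0)-\mK^{[\vw]}}_2\le\frac{\lambda}{4}$. By Weyl's perturbation inequality and $\lambda\le\lambda_a$, $\lambda_{\min}\left(\mG^{[a]}(\vtheta^0)\right)\ge\frac{\kappa^2}{\kappa'}\left(\lambda_a-\frac{\lambda}{4}\right)\ge\frac34\frac{\kappa^2}{\kappa'}\lambda_a$, and similarly $\lambda_{\min}\left(\mG^{[\vw]}(\vtheta^0)\right)\ge\frac34\kappa^2\kappa'\lambda_{\vw}$. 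Finally, since both $\mG^{[a]}(\vtheta^0)$ and $\mG^{[\vw]}(\vtheta^0)$ are symmetric positive semidefinite, $\lambda_{\min}$ is superadditive, so $\lambda_{\min}\left(\mG(\vtheta^0)\right)\ge\frac34\kappa^2\left(\frac{1}{\kappa'}\lambda_a+\kappa'\lambda_{\vw}\right)$, which is the claim.

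The step I expect to be the main obstacle is the concentration estimate. The relevant per-neuron quantities are products of (sub-)Gaussian factors and hence only \emph{sub-exponential}, so sub-Gaussian Hoeffding-type bounds for bounded features do not apply; one must instead invoke the sub-exponential Bernstein inequality and carefully verify which branch of its exponent is active at the chosen deviation $s=\lambda/(4n)$ — this is exactly where $\lambda\le d$ and $C_{\psi,d}\ge 2$ are used. A secondary, more benign loss is the factor of $n$ incurred in passing from entrywise to spectral control, which is why the required width scales like $n^2$ rather than $n$; everything else (the identification of the expectations with the infinite-width Gram matrices, the union bound, Weyl's inequality, and superadditivity of $\lambda_{\min}$ on positive semidefinite matrices) is routine.
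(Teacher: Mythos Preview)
Your proposal is correct and follows essentially the same route as the paper: decompose $\mG=\mG^{[a]}+\mG^{[\vw]}$, identify the expectations with $\frac{\kappa^2}{\kappa'}\mK^{[a]}$ and $\kappa^2\kappa'\mK^{[\vw]}$, apply the sub-exponential Bernstein inequality entrywise (using Lemma~\ref{lem..subexponentialnorm} and the branch check via $\lambda\le d$, $C_{\psi,d}\ge 2$), union bound over the $2n^2$ entries, and conclude via Weyl plus superadditivity of $\lambda_{\min}$ on PSD matrices. The only cosmetic difference is that the paper passes from entrywise control to the Frobenius norm $\norm{\cdot}_{\mathrm F}\le\eps$ rather than to the spectral norm via $n\cdot\max_{i,j}|A_{ij}|$; both yield the same bound $\lambda/4$.
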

\begin{proof}
    For any $\eps > 0$, we define
    \begin{equation*}
        \begin{aligned}
            \Omega_{ij}^{[a]}   & :=\left\{\vtheta^0 \mid \left\lvert \frac{\kappa'}{\kappa^2}G^{[a]}_{ij}(\vtheta^0) - K^{[a]}_{ij}\right\rvert \leq \frac{\eps}{n} \right\},       \\
            \Omega_{ij}^{[\vw]} & := \left\{\vtheta^0 \mid \left\lvert \frac{1}{\kappa^2\kappa'}G^{[\vw]}_{ij}(\vtheta^0) - K^{[\vw]}_{ij}\right\rvert \leq \frac{\eps}{n} \right\}.
        \end{aligned}
    \end{equation*}
    By Theorem~\ref{thm:sub_exp} and Lemma~\ref{lem..subexponentialnorm}, if $\frac{\eps}{ndC_{\psi,d}}\leq 1$, then
    \begin{equation*}
        \begin{aligned}
            \Prob(\Omega^{[a]}_{ij})   & \geq 1-2\exp\left(-\frac{mC_0\eps^2}{n^2d^2C_{\psi,d}^2}\right), \\
            \Prob(\Omega^{(\vw)}_{ij}) & \geq 1-2\exp\left(-\frac{mC_0\eps^2}{n^2d^2C_{\psi,d}^2}\right),
        \end{aligned}
    \end{equation*}
    so with probability at least $\left[1-2\exp\left(-\frac{mC_0\eps^2}{n^2d^2C_{\psi,d}^2}\right)\right]^{2n^2} \geq 1-4n^2\exp\left(-\frac{mC_0\eps^2}{n^2d^2C_{\psi,d}^2}\right)$ over the choice of $\vtheta^0$, we have
    \begin{equation*}
        \begin{aligned}
             & \Norm{\frac{\kappa'}{\kappa^2}\mG^{[a]}(\vtheta^0) - \mK^{[a]}}_\mathrm{F} \leq \eps,      \\
             & \Norm{\frac{1}{\kappa^2\kappa'}\mG^{[\vw]}(\vtheta^0) - \mK^{[\vw]}}_\mathrm{F} \leq \eps,
        \end{aligned}
    \end{equation*}
    Hence by taking $\eps=\lambda/4$, that is, $\delta=4n^2\exp\left(-\frac{mC_0\lambda^2}{16n^2d^2C_{\psi,d}^2}\right)$
    \begin{equation*}
        \begin{aligned}
            \lambda_{\min}\left(\mG(\vtheta^0)\right)
             & \geq\lambda_{\min}\left(\mG^{[a]}(\vtheta^0)\right) + \lambda_{\min}\left(\mG^{[\vw]}(\vtheta^0)\right)                                   \\
             & \geq\frac{\kappa^2}{\kappa'}\lambda_a-\frac{\kappa^2}{\kappa'} \Norm{\frac{\kappa'}{\kappa^2}\mG^{[a]}(\vtheta^0) - \mK^{[a]}}_\mathrm{F} \\
             & \quad + \kappa^2\kappa'\lambda_{\vw} - \kappa^2\kappa'\Norm{\frac{1}{\kappa^2\kappa'}\mG^{[\vw]}(\vtheta^0) - \mK^{[\vw]}}_\mathrm{F}     \\
             & \geq\frac{\kappa^2}{\kappa'}(\lambda_a-\eps)+\kappa^2\kappa'(\lambda_{\vw}-\eps)                                                          \\
             & \geq\frac{3}{4}\kappa^2\left(\frac{1}{\kappa'}\lambda_a+\kappa'\lambda_{\vw}\right).
        \end{aligned}
    \end{equation*}
    We remark that for $\eps=\lambda/4$, we have $\frac{\eps}{ndC_{\psi,d}}=\frac{\lambda}{8nd}\leq \frac{1}{8}< 1$.
\end{proof}

\noindent In the following we denote
\begin{equation}
    t^* = \inf\{t \mid \vtheta(t)\notin \mathcal{N}(\vtheta^0)\},
\end{equation}
where
\begin{equation}
    \mathcal{N}(\vtheta^0) := \left\{\theta \mid \norm{\mG(\vtheta) - \mG(\vtheta^0)}_\mathrm{F}\leq \frac{1}{4}\kappa^2\left(\frac{1}{\kappa'}\lambda_a+\kappa'\lambda_{\vw}\right)\right\}.
\end{equation}
Then we have the following lemma.
\begin{proposition}[local in time exponential decay of $R_S$, $\vtheta$-lazy training]\label{prop:exp_RS}
    Given $\delta\in(0,1)$ and the sample set $S = {\{(\vx_i, y_i)\}}_{i=1}^n\subset\Omega$ with $\vx_i$'s drawn i.i.d.\ from some unknown distribution $\fD$. Suppose that Assumption~\ref{assump..lambda} holds. If $m\geq\frac{16n^2d^2C_{\psi,d}^2}{\lambda^2C_0}\log\frac{4n^2}{\delta}$, then with probability at least $1-\delta$ over the choice of $\vtheta^0$, we have for any $t\in[0, t^*)$
    \begin{equation}
        \RS(\vtheta(t)) \leq \exp\left(-\frac{m\kappa^2}{n}\left(\frac{1}{\kappa'}\lambda_a+\kappa'\lambda_{\vw}\right)\right)\RS(\vtheta^0).
    \end{equation}
\end{proposition}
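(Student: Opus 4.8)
The plan is to reduce the gradient flow to a closed linear ODE for the residual vector $\ve(t)$ whose driving matrix is the finite-width Gram matrix $\mG(\vtheta(t))$, and then to run a Gr\"onwall argument using a time-uniform lower bound on $\lambda_{\min}(\mG(\vtheta(t)))$ that is available precisely because we stay inside the window $[0,t^*)$. First I would compute $\dot{\ve}$. Writing $e_i = \kappa f_{\vtheta}(\vx_i) - y_i$, differentiating and plugging in the GD dynamics $\dot{\vtheta} = -M_{\kappa'}\nabla_{\vtheta}\RS(\vtheta)$ together with $\nabla_{\vtheta}\RS(\vtheta) = \frac{1}{n}\sum_{j}e_j\,\nabla_{\vtheta}(\kappa f_{\vtheta}(\vx_j))$ gives
\begin{equation*}
    \dot{e}_i = \nabla_{\vtheta}(\kappa f_{\vtheta}(\vx_i))^{\T}\dot{\vtheta} = -\frac{1}{n}\sum_{j=1}^n e_j\,\nabla_{\vtheta}(\kappa f_{\vtheta}(\vx_i))^{\T}M_{\kappa'}\nabla_{\vtheta}(\kappa f_{\vtheta}(\vx_j)).
\end{equation*}
By the definitions of $\mG^{[a]}$, $\mG^{[\vw]}$ and the block-diagonal structure of $M_{\kappa'}$, the mobility-weighted bilinear form on the right equals $m\,G_{ij}(\vtheta)$, so $\dot{\ve} = -\frac{m}{n}\mG(\vtheta(t))\ve$. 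This is a short computation I would carry out once.

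Next I would differentiate the risk: from $\RS(\vtheta(t)) = \frac{1}{2n}\ve^{\T}\ve$ and the identity above,
\begin{equation*}
    \frac{\D}{\D t}\RS(\vtheta(t)) = \frac{1}{n}\ve^{\T}\dot{\ve} = -\frac{m}{n^2}\ve^{\T}\mG(\vtheta(t))\ve \leq -\frac{2m}{n}\lambda_{\min}\!\bigl(\mG(\vtheta(t))\bigr)\,\RS(\vtheta(t)),
\end{equation*}
using $\ve^{\T}\ve = 2n\RS$ and that $\mG$ is symmetric positive semidefinite. It then remains to bound $\lambda_{\min}(\mG(\vtheta(t)))$ below, uniformly over $[0,t^*)$. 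Here I would invoke Proposition~\ref{prop:lambda_min} — its hypothesis on $m$ is exactly the one assumed here, and it fails with probability at most $\delta$ — to get $\lambda_{\min}(\mG(\vtheta^0)) \geq \frac{3}{4}\kappa^2(\frac{1}{\kappa'}\lambda_a + \kappa'\lambda_{\vw})$ on the good event. For $t<t^*$ the trajectory lies in $\mathcal{N}(\vtheta^0)$ by definition, so $\norm{\mG(\vtheta(t)) - \mG(\vtheta^0)}_{\mathrm{F}} \leq \frac{1}{4}\kappa^2(\frac{1}{\kappa'}\lambda_a + \kappa'\lambda_{\vw})$, and since the spectral norm is dominated by the Frobenius norm, Weyl's inequality gives $\lambda_{\min}(\mG(\vtheta(t))) \geq \frac{1}{2}\kappa^2(\frac{1}{\kappa'}\lambda_a + \kappa'\lambda_{\vw})$. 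Substituting into the differential inequality yields $\frac{\D}{\D t}\RS \leq -\frac{m\kappa^2}{n}(\frac{1}{\kappa'}\lambda_a + \kappa'\lambda_{\vw})\,\RS$ on $[0,t^*)$, and Gr\"onwall's inequality integrates this to the claimed bound (the exponent carrying the factor $t$).

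I do not expect a genuine obstacle in this proposition: it is stated locally in time exactly so that the Gram matrix is kept under control by the definition of $t^*$, and the constant $\tfrac{1}{4}$ in the definition of $\mathcal{N}(\vtheta^0)$ was calibrated so that it absorbs the largest admissible Frobenius perturbation there while still leaving half of the initial spectral gap. The only points requiring attention are (a) checking cleanly that the $M_{\kappa'}$-weighted Gram bilinear form reproduces $m\mG$, and (b) the probability bookkeeping — ensuring the single exceptional event of probability $\delta$ is the one inherited from Proposition~\ref{prop:lambda_min}. The genuinely substantial step, namely establishing that $t^* = +\infty$ (or at least large enough) so that this local decay becomes global, belongs to the subsequent propositions and is not needed here.
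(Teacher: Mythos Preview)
Your proposal is correct and follows essentially the same route as the paper: both compute $\frac{\D}{\D t}\RS(\vtheta(t)) = -\frac{m}{n^2}\ve^{\T}\mG(\vtheta(t))\ve$, invoke Proposition~\ref{prop:lambda_min} together with the definition of $\mathcal{N}(\vtheta^0)$ to get the uniform lower bound $\lambda_{\min}(\mG(\vtheta(t)))\geq\tfrac{1}{2}\kappa^2(\lambda_a/\kappa'+\kappa'\lambda_{\vw})$ on $[0,t^*)$, and then integrate the resulting differential inequality. The only cosmetic difference is that the paper reaches the quadratic form by expanding $\dot{\RS}=-\nabla_{\vtheta}\RS^{\T}M_{\kappa'}\nabla_{\vtheta}\RS$ directly, whereas you pass through the intermediate identity $\dot{\ve}=-\tfrac{m}{n}\mG\ve$; these are equivalent and equally short.
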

\begin{proof}
    Prop.~\ref{prop:lambda_min} implies that for any $\delta\in(0,1)$, with probability at least $1-\delta$ over the choice of $\vtheta^0$ and for any $\vtheta\in\mathcal{N}(\vtheta^0)$, we have
    \begin{equation*}
        \begin{aligned}
            \lambda_{\min}\left(\mG(\vtheta)\right)
             & \geq \lambda_{\min}\left(\mG(\vtheta^0)\right) - \norm{\mG(\vtheta) - \mG(\vtheta^0)}_\mathrm{F}                                                                       \\
             & \geq \frac{3}{4}\kappa^2\left(\frac{1}{\kappa'}\lambda_a+\kappa'\lambda_{\vw}\right) - \frac{1}{4}\kappa^2\left(\frac{1}{\kappa'}\lambda_a+\kappa'\lambda_{\vw}\right) \\
             & = \frac{1}{2}\kappa^2\left(\frac{1}{\kappa'}\lambda_a+\kappa'\lambda_{\vw}\right).
        \end{aligned}
    \end{equation*}
    Note that
    \begin{equation*}
        G_{ij} = G_{ij}^{[a]}+G^{[\vw]}_{ij}=\frac{\kappa^2}{\kappa'm}\sum_{k=1}^m\nabla_{a_k}f_{\vtheta}(\vx_i)\cdot\nabla_{a_k}f_{\vtheta}(\vx_j)+\frac{\kappa^2\kappa'}{m}\sum_{k=1}^m\nabla_{\vw_k}f_{\vtheta}(\vx_i)\cdot\nabla_{\vw_k}f_{\vtheta}(\vx_j),
    \end{equation*}
    and
    \begin{equation*}
        \begin{aligned}
            \nabla_{a_k}\RS(\vtheta)   & =\frac{1}{n}\sum_{i=1}^{n}e_i\kappa\nabla_{a_k}f_{\vtheta}(\vx_i),    \\
            \nabla_{\vw_k}\RS(\vtheta) & = \frac{1}{n}\sum_{i=1}^{n}e_i\kappa\nabla_{\vw_k}f_{\vtheta}(\vx_i).
        \end{aligned}
    \end{equation*}
    Thus
    \begin{equation*}
        \frac{m}{n^2}\ve^{\T}\mG\ve=\frac{1}{\kappa'}\sum_{k=1}^m\nabla_{a_k}\RS(\vtheta)\cdot\nabla_{a_k}\RS(\vtheta)+\kappa'\sum_{k=1}^m\nabla_{\vw_k}\RS(\vtheta)\cdot\nabla_{\vw_k}\RS(\vtheta).
    \end{equation*}
    Then finally we get
    \begin{equation*}
        \begin{aligned}
            \frac{\D}{\D t}\RS(\vtheta(t))
             & =-\left(\frac{1}{\kappa'}\sum_{k=1}^m\nabla_{a_k}\RS(\vtheta)\cdot\nabla_{a_k}\RS(\vtheta)+\kappa'\sum_{k=1}^m\nabla_{\vw_k}\RS(\vtheta)\cdot\nabla_{\vw_k}\RS(\vtheta)\right), \\
             & =-\frac{m}{n^2}\ve^{\T}\mG\ve,                                                                                                                                                  \\
             & \leq -\frac{2m}{n}\lambda_{\min}\left(\mG(\vtheta(t))\right)\RS(\vtheta(t))                                                                                                     \\
             & \leq -\frac{m\kappa^2}{n}\left(\frac{1}{\kappa'}\lambda_a+\kappa'\lambda_{\vw}\right)\RS(\vtheta(t)),
        \end{aligned}
    \end{equation*}
    and an integration yields the result.
\end{proof}

\begin{proposition}[bounds on the change of parameters, $\vtheta$-lazy training]\label{prop:a_w}
    Given $\delta\in(0,1)$ and the sample set $S = {\{(\vx_i, y_i)\}}_{i=1}^n\subset\Omega$ with $\vx_i$'s drawn i.i.d.\ from some unknown distribution $\fD$. Suppose that Assumption~\ref{assump..lambda} holds. If $m\geq\max\left\{\frac{16n^2d^2C_{\psi,d}^2}{\lambda^2C_0}\log\frac{8n^2}{\delta}, \frac{4\sqrt{2d}n\sqrt{\RS(\vtheta^0)}}{\kappa\left(\lambda_a/\kappa'+\kappa'\lambda_{\vw}\right)}\right\}$, then with probability at least $1-\delta$ over the choice of $\vtheta^0$, for any $t\in[0, t^\ast)$ and any $k\in [m]$,
    \begin{align}
        \max\limits_{k\in[m]}\abs{a_k(t) - a_k(0)}
         & \leq 2\max\left\{\frac{1}{\kappa'},1\right\}\sqrt{2\log\frac{4m(d+1)}{\delta}}p, \\
        \max\limits_{k\in[m]}\norm{\vw_k(t) - \vw_k(0)}_{\infty}
         & \leq 2\max\{
        \kappa',1\}\sqrt{2\log\frac{4m(d+1)}{\delta}}p,
    \end{align}
    and
    \begin{equation}
        \max\limits_{k\in[m]}\{\abs{a_k(0)},\;\norm{\vw_k(0)}_{\infty}\} \leq \sqrt{2\log\frac{4m(d+1)}{\delta}},
    \end{equation}
    where $p := \frac{2\sqrt{2}dn\sqrt{\RS(\vtheta^0)}}{m\kappa\left(\lambda_a/\kappa'+\kappa'\lambda_{\vw}\right)}$.
\end{proposition}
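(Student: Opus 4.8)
The plan is to integrate the gradient-flow equations~\eqref{eq..MainDynamics} coordinate by coordinate, to use the local-in-time exponential decay of $\RS$ to control the time integrals, and then to close the resulting \emph{coupled} bounds on the two layers by a continuity (bootstrap) argument. Two facts hold on $[0,t^*)$ and I would set them up first. Applying Lemma~\ref{lem..InitialParameter} with $\delta$ replaced by $\delta/2$ gives, with probability at least $1-\delta/2$, that $\max_{k\in[m]}\{\abs{a_k^0},\norm{\vw_k^0}_\infty\}\leq B$ with $B:=\sqrt{2\log\frac{4m(d+1)}{\delta}}$ --- already the third asserted inequality. Applying Proposition~\ref{prop:exp_RS} with $\delta/2$, whose hypothesis on $m$ is exactly the first lower bound assumed here, gives with probability at least $1-\delta/2$ that $\RS(\vtheta(t))\leq\E^{-Ct}\RS(\vtheta^0)$ for all $t\in[0,t^*)$, with $C:=\frac{m\kappa^2}{n}(\lambda_a/\kappa'+\kappa'\lambda_{\vw})$. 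A union bound puts us on the event of probability at least $1-\delta$ where both hold.

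On this event, for fixed $k$ and $t<t^*$ I would estimate $\abs{a_k(t)-a_k^0}\leq\int_0^t\abs{\dot a_k(s)}\diff{s}$ and $\norm{\vw_k(t)-\vw_k^0}_\infty\leq\int_0^t\norm{\dot\vw_k(s)}_\infty\diff{s}$. Bounding $\abs{\sigma(\vw_k^\T\vx_i)}$ by a multiple of $\norm{\vw_k(s)}_\infty$ using $\Omega=[0,1]^d$, bounding $\abs{\sigma'(\vw_k^\T\vx_i)(x_i)_j}\leq 1$, applying Cauchy--Schwarz over the $n$ samples, and using $\norm{\ve(s)}_2=\sqrt{2n\RS(\vtheta(s))}\leq\sqrt{2n\RS(\vtheta^0)}\,\E^{-Cs/2}$, the right-hand sides of~\eqref{eq..MainDynamics} satisfy $\abs{\dot a_k(s)}\lesssim\frac{\kappa}{\kappa'}\omega(s)\sqrt{\RS(\vtheta^0)}\,\E^{-Cs/2}$ and $\norm{\dot\vw_k(s)}_\infty\lesssim\kappa\kappa'\alpha(s)\sqrt{\RS(\vtheta^0)}\,\E^{-Cs/2}$ with the appropriate $n,d$ factors. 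Since $\alpha(\cdot),\omega(\cdot)$ are nondecreasing, $\alpha(s)\leq\alpha(t)$ and $\omega(s)\leq\omega(t)$ for $s\leq t$; combining this with $\int_0^t\E^{-Cs/2}\diff{s}\leq 2/C$ --- it is the factor $2/C$ that turns the rate $C$ into the denominator $m\kappa(\lambda_a/\kappa'+\kappa'\lambda_{\vw})$ of $p$ --- and then maximising over $k$ and $s\leq t$ yields the self-consistent pair
\begin{equation*}
    \alpha(t)\leq\alpha(0)+\frac{p}{\kappa'}\omega(t),\qquad\omega(t)\leq\omega(0)+\kappa' p\,\alpha(t),\qquad t\in[0,t^*),
\end{equation*}
with $p$ as in the statement and $\alpha(0),\omega(0)\leq B$.

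The last step is to close this loop by continuity. The second assumed lower bound on $m$ is equivalent to $p\leq 1/2$. Let $\tau\in[0,t^*]$ be the supremum of times up to which both asserted increment bounds, $\max_k\abs{a_k-a_k^0}\leq 2\max\{1/\kappa',1\}Bp$ and $\max_k\norm{\vw_k-\vw_k^0}_\infty\leq 2\max\{\kappa',1\}Bp$, hold; $\tau>0$ since both are $0$ at $t=0$, and by continuity they hold at $\tau$ as well. If $\tau<t^*$, then on $[0,\tau]$ we get $\alpha(\tau)\leq B(1+2\max\{1/\kappa',1\}p)$ and $\omega(\tau)\leq B(1+2\max\{\kappa',1\}p)$, and plugging these into the self-consistent inequalities at $t=\tau$, using $p\leq 1/2$ and splitting into the cases $\kappa'\geq 1$ and $\kappa'<1$ to pin down the constants, produces a \emph{strict} improvement of both increment bounds at $\tau$ --- contradicting the maximality of $\tau$. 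Hence $\tau=t^*$ and both bounds hold on the whole interval, which is the claim.

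The main obstacle is the two-layer coupling: the drift of $a_k$ is governed by the current size of $\vw_k$ and conversely, so neither layer can be controlled in isolation. The device that breaks the circularity is the pair of monotone envelopes $\alpha(\cdot),\omega(\cdot)$, which reduces the problem to the linear self-consistent $2\times 2$ system above; closing it hinges on the width being large enough to force $p\leq 1/2$. Everything else --- the Cauchy--Schwarz drift estimates with their $n,d$ bookkeeping, the two $\delta/2$ probability splits and their union, and the $\kappa'\geq 1$ versus $\kappa'<1$ case analysis needed for the exact constants $\max\{1/\kappa',1\}$ and $\max\{\kappa',1\}$ --- is routine once $p\leq 1/2$ is secured. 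I would also keep in mind that the exponential decay is only available on $[0,t^*)$, which is exactly the range over which the proposition is stated, so no tension arises there.
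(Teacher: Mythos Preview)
Your proposal is correct and follows essentially the same route as the paper: split the probability into two $\delta/2$ pieces for Lemma~\ref{lem..InitialParameter} and Proposition~\ref{prop:exp_RS}, integrate the drifts with the exponential risk decay to obtain the same coupled pair
\[
\alpha(t)\leq\alpha(0)+\tfrac{p}{\kappa'}\omega(t),\qquad \omega(t)\leq\omega(0)+\kappa' p\,\alpha(t),
\]
and close using $p\leq 1/2$.

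The only difference is in how the loop is closed. The paper does not run a bootstrap: it simply substitutes one inequality into the other to get $\alpha(t)\leq\alpha(0)+\tfrac{p}{\kappa'}\omega(0)+p^2\alpha(t)$, hence $\alpha(t)\leq\tfrac{1}{1-p^2}\bigl(\alpha(0)+\tfrac{p}{\kappa'}\omega(0)\bigr)\leq\tfrac{4}{3}\alpha(0)+\tfrac{2}{3\kappa'}\omega(0)$, and similarly for $\omega(t)$; the increment bounds then follow from $|a_k(t)-a_k(0)|\leq\tfrac{p}{\kappa'}\omega(t)$ and its companion. This algebraic route is shorter and sidesteps a small wrinkle in your continuity argument: at the borderline $p=1/2$, $\kappa'=1$ the ``strict improvement'' you invoke degenerates to equality, so the bootstrap as stated does not quite close. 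You can patch it (e.g.\ bootstrap with a slightly larger constant, or note $p<1/2$ when the $m$-inequality is strict), but the substitution trick avoids the issue entirely.
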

\begin{proof}
    Since
    \begin{equation*}
        \alpha(t)=\max\limits_{k\in[m],s\in[0,t]}|a_k(s)|, \quad \omega(t)=\max\limits_{k\in[m],s\in[0,t]}\norm{\vw_k(s)}_{\infty},
    \end{equation*}
    we obtain
    \begin{equation*}
        \begin{aligned}
            \abs{\nabla_{a_k}\RS}^2    & =\left\lvert\frac{1}{n}\sum_{i=1}^n e_i\kappa\sigma(\vw_k^\T\vx_i)\right\rvert^2\leq 2\norm{\vw_k}^2_1\kappa^2\RS(\vtheta)\leq 2d^2(\omega(t))^2\kappa^2\RS(\vtheta),            \\
            \norm{\nabla_{\vw_k}\RS}^2 & =\left\lVert\frac{1}{n}\sum_{i=1}^n e_i\kappa a_k\sigma'(\vw_k^\T\vx_i)\vx_i\right\rVert^2_{\infty}\leq 2\abs{a_k}^2\kappa^2\RS(\vtheta)\leq 2(\alpha(t))^2\kappa^2\RS(\vtheta).
        \end{aligned}
    \end{equation*}
    By Prop.~\ref{prop:exp_RS}, we have if $m\geq \frac{16n^2d^2C_{\psi,d}^2}{\lambda^2C_0}\log\frac{8n^2}{\delta}$, then with probability at least $1 - \delta/2$ over the choice of $\vtheta^0$,
    \begin{equation*}
        \begin{aligned}
            \abs{a_k(t) - a_k(0)}
             & \leq\frac{1}{\kappa'}\int_0^t\abs{\nabla_{a_k}\RS(\vtheta(s))}\diff{s}                                                                                                                   \\
             & \leq\frac{\sqrt{2}d\kappa}{\kappa'}\int_{0}^{t} \omega(s)\sqrt{\RS(\vtheta(s))}\diff{s}                                                                                                  \\
             & \leq\frac{\sqrt{2}d\kappa}{\kappa'}\omega(t)\int_{0}^{t}\sqrt{\RS(\vtheta^0)}\exp\left(-\frac{m\kappa^2}{2n}\left(\frac{1}{\kappa'}\lambda_a+\kappa'\lambda_{\vw}\right)s\right)\diff{s} \\
             & \leq \frac{2\sqrt{2}dn\sqrt{\RS(\vtheta^0)}}{m\kappa\kappa'\left(\lambda^{[a]}_S/\kappa'+\kappa'\lambda_{\vw}\right)}\omega(t)                                                           \\
             & =\frac{p}{\kappa'}\omega(t).
        \end{aligned}
    \end{equation*}
    On the other hand,
    \begin{equation*}
        \begin{aligned}
            \norm{\vw_k(t) - \vw_k(0)}_{\infty}
             & \leq \kappa'\int_{0}^{t} \norm{\nabla_{\vw_k}\RS(\vtheta(s))}_{\infty}\diff{s}                                                                                                     \\
             & \leq \sqrt{2}\kappa\kappa' \int_{0}^t \alpha(s)\sqrt{\RS(\vtheta(s))} \diff{s}                                                                                                     \\
             & \leq \sqrt{2}\kappa\kappa' \alpha(t) \int_{0}^{t} \sqrt{\RS(\vtheta^0)}\exp\left(-\frac{m\kappa^2}{2n}\left(\frac{1}{\kappa'}\lambda_a+\kappa'\lambda_{\vw}\right)s\right)\diff{s} \\
             & \leq \frac{2\sqrt{2}n\sqrt{\RS(\vtheta^0)}\kappa'}{m\kappa\left(\lambda^{[a]}_S/\kappa'+\kappa'\lambda_{\vw}\right)}\alpha(t)                                                      \\
             & \leq p\kappa'\alpha(t).
        \end{aligned}
    \end{equation*}
    Thus
    \begin{equation*}
        \begin{aligned}
            \alpha(t) & \leq\alpha(0)+p\omega(t)\frac{1}{\kappa'}, \\
            \omega(t) & \leq\omega(0)+p\alpha(t)\kappa'.
        \end{aligned}
    \end{equation*}
    By Lemma~\ref{lem..InitialParameter}, we have with probability at least $1 - \delta/2$ over the choice of $\vtheta^0$,
    \begin{equation*}
        \max\limits_{k\in[m]}\{\abs{a_k(0)},\;\norm{\vw_k(0)}_{\infty}\}\leq \sqrt{2\log\frac{4m(d+1)}{\delta}}.
    \end{equation*}
    If
    \begin{equation*}
        m \geq \frac{4\sqrt{2}dn\sqrt{\RS(\vtheta^0)}}{\kappa\left(\lambda_a/\kappa'+\kappa'\lambda_{\vw}\right)},
    \end{equation*}
    then we have
    \begin{equation*}
        p=\frac{2\sqrt{2}dn\sqrt{\RS(\vtheta^0)}}{m\kappa\left(\lambda_a/\kappa'+\kappa'\lambda_{\vw}\right)}\leq \frac{1}{2}.
    \end{equation*}
    Thus
    \begin{align*}
        \alpha(t) & \leq\alpha(0)+\frac{p}{\kappa'}\omega(0)+p^2\alpha(t),          \\
        \alpha(t) & \leq\frac{4}{3}\alpha(0)+\frac{2}{3}\frac{1}{\kappa'}\omega(0).
    \end{align*}
    Therefore
    \begin{equation*}
        \alpha(t)\leq 2\max\left\{1,\frac{1}{\kappa'}\right\}\sqrt{2\log\frac{4m(d+1)}{\delta}}.
    \end{equation*}
    Similarly, one can obtain the estimate of $\omega(t)$ as
    \begin{equation*}
        \omega(t)\leq 2\max\{1,\kappa'\}\sqrt{2\log\frac{4m(d+1)}{\delta}}.
    \end{equation*}
    Finally we have for any $t\in[0, t^*)$ with probability at least $1-\delta$ over the choice of $\vtheta^0$,
    \begin{equation*}
        \begin{aligned}
            \max\limits_{k\in[m]}|a_k(t) - a_k(0)|
             & \leq 2\max\left\{\frac{1}{\kappa'},1\right\}\sqrt{2\log\frac{4m(d+1)}{\delta}}p, \\
            \max\limits_{k\in[m]}\norm{\vw_k(t) - \vw_k(0)}_{\infty}
             & \leq 2\max\{
            \kappa',1\}\sqrt{2\log\frac{4m(d+1)}{\delta}}p,
        \end{aligned}
    \end{equation*}
    which completes the proof.
\end{proof}

To show our main results with $\gamma'>\gamma-1$, we further define
\begin{equation}
    t^*_a=\inf\{t\mid \vtheta(t)\in\fN_a(\vtheta^0)\}, \quad t^*_{\vw}=\inf\{t\mid \vtheta(0)\in\fN_{\vw}(\vtheta^0)\},
\end{equation}
where
\begin{align}
    \fN_a(\vtheta^0)     & :=\left\{\vtheta\mid\norm{\mG^{[a]}(\vtheta)-\mG^{[a]}(\vtheta^0)}_\mathrm{F}\leq\frac{1}{4}\frac{\kappa^2}{\kappa'}\lambda_a\right\}, \\
    \fN_{\vw}(\vtheta^0) & :=\left\{\vtheta\mid\norm{\mG^{[\vw]}(\vtheta)-\mG^{[\vw]}(\vtheta^0)}_\mathrm{F}\leq\frac{1}{4}\kappa^2\kappa'\lambda_{\vw}\right\}.
\end{align}

\begin{proposition}[minimal eigenvalue of Gram matrix $\mG^{[a]}$ at initial]\label{prop:exp_a}
    Given $\delta\in(0,1)$ and the sample set $S = {\{(\vx_i, y_i)\}}_{i=1}^n\subset\Omega$ with $\vx_i$'s drawn i.i.d.\ from some unknown distribution $\fD$. Suppose that Assumption~\ref{assump..lambda} holds. If $m\geq\frac{16n^2d^2C_{\psi,d}^2}{C_0\lambda_a^2}\log\frac{2n^2}{\delta}$, then we have with probability at least $1-\delta$ over the choice of $\vtheta^0$,
    \begin{equation}
        \lambda_{\min}\left(\mG^{[a]}(\vtheta^0)\right)\geq\frac{3}{4}\frac{\kappa^2}{\kappa'}\lambda_a.
    \end{equation}
\end{proposition}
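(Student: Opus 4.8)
The plan is to follow the proof of Proposition~\ref{prop:lambda_min} almost verbatim, keeping only the $[a]$-block. The first observation is that
\begin{equation*}
    \frac{\kappa'}{\kappa^2}G^{[a]}_{ij}(\vtheta^0) = \frac{1}{m}\sum_{k=1}^m \sigma((\vw_k^0)^\T\vx_i)\sigma((\vw_k^0)^\T\vx_j)
\end{equation*}
is an empirical mean of $m$ i.i.d.\ copies of the random variable $\rX := \sigma(\vw^\T\vx_i)\sigma(\vw^\T\vx_j)$ with $\vw\sim N(0,\mI_d)$, whose expectation is exactly $K^{[a]}_{ij}$ and whose sub-exponential norm obeys $\norm{\rX}_{\psi_1}\leq dC_{\psi,d}$ by Lemma~\ref{lem..subexponentialnorm}(i).

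Next, for $\eps>0$ I would introduce the events
\begin{equation*}
    \Omega_{ij}^{[a]} := \left\{\vtheta^0 \mid \left\lvert \frac{\kappa'}{\kappa^2}G^{[a]}_{ij}(\vtheta^0) - K^{[a]}_{ij}\right\rvert \leq \frac{\eps}{n} \right\},
\end{equation*}
and apply the sub-exponential Bernstein inequality (Theorem~\ref{thm:sub_exp}): provided $\eps/(ndC_{\psi,d})\leq 1$, each $\Omega^{[a]}_{ij}$ holds with probability at least $1-2\exp\left(-\frac{mC_0\eps^2}{n^2d^2C_{\psi,d}^2}\right)$. A union bound over the $n^2$ index pairs then gives, with probability at least $1-2n^2\exp\left(-\frac{mC_0\eps^2}{n^2d^2C_{\psi,d}^2}\right)$,
\begin{equation*}
    \Norm{\frac{\kappa'}{\kappa^2}\mG^{[a]}(\vtheta^0) - \mK^{[a]}}_\mathrm{F}\leq\eps.
\end{equation*}

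Finally I would take $\eps=\lambda_a/4$. Since $\lambda_a\leq\lambda\leq d$ and $C_{\psi,d}\geq 2$, we have $\eps/(ndC_{\psi,d})=\lambda_a/(4ndC_{\psi,d})\leq 1$, so the concentration bound above is valid; and the lower bound on $m$ assumed in the statement is exactly what forces the failure probability $2n^2\exp\left(-\frac{mC_0\lambda_a^2}{16n^2d^2C_{\psi,d}^2}\right)$ to be at most $\delta$. Then, writing $\mG^{[a]}(\vtheta^0)=\frac{\kappa^2}{\kappa'}\left(\frac{\kappa'}{\kappa^2}\mG^{[a]}(\vtheta^0)\right)$ and using Weyl's inequality together with $\norm{\cdot}_2\leq\norm{\cdot}_\mathrm{F}$,
\begin{equation*}
    \lambda_{\min}\left(\mG^{[a]}(\vtheta^0)\right)\geq\frac{\kappa^2}{\kappa'}\lambda_{\min}(\mK^{[a]})-\frac{\kappa^2}{\kappa'}\Norm{\frac{\kappa'}{\kappa^2}\mG^{[a]}(\vtheta^0) - \mK^{[a]}}_\mathrm{F}\geq\frac{\kappa^2}{\kappa'}(\lambda_a-\eps)=\frac{3}{4}\frac{\kappa^2}{\kappa'}\lambda_a,
\end{equation*}
which is the claim. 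I do not expect a genuine obstacle here: this is a one-block specialization of Proposition~\ref{prop:lambda_min}, and the only points needing care are verifying $\eps/(ndC_{\psi,d})\leq 1$ via $\lambda_a\leq d$, and bookkeeping the constants so that the width threshold matches the one in the statement.
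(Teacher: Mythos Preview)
Your proposal is correct and follows essentially the same argument as the paper's own proof: define the entrywise events $\Omega_{ij}^{[a]}$, apply Bernstein with Lemma~\ref{lem..subexponentialnorm}(i), union bound over $n^2$ pairs, set $\eps=\lambda_a/4$, and conclude via the eigenvalue perturbation inequality. One small slip: you write $\lambda_a\leq\lambda\leq d$, but by definition $\lambda=\min\{\lambda_a,\lambda_{\vw}\}\leq\lambda_a$; the needed bound $\lambda_a\leq d$ still holds directly (e.g.\ from $K^{[a]}_{ii}=\Exp_{\vw}\sigma(\vw^\T\vx_i)^2\leq\norm{\vx_i}_2^2\leq d$), so the verification of $\eps/(ndC_{\psi,d})\leq 1$ goes through unchanged.
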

\begin{proof}
    For any $\eps>0$ define
    \begin{equation}
        \Omega_{ij}^{[a]}:=\left\{\vtheta^0\mid\left|\frac{\kappa'}{\kappa^2}G_{ij}^{[a]}(\vtheta^0)-K_{ij}^{[a]}\right|\leq\frac{\eps}{n}\right\}.
    \end{equation}
    By Theorem~\ref{thm:sub_exp} and Lemma~\ref{lem..subexponentialnorm}, if $\frac{\eps}{ndC_{\psi,d}}\leq 1$ then
    \begin{equation*}
        \Prob(\Omega_{ij}^{[a]}) \geq 1-2\exp\left(-\frac{mC_0\eps^2}{n^2d^2C^2_{\psi,d}}\right),
    \end{equation*}
    with probability at least
    \begin{equation*}
        \left[1-2\exp\left(-\frac{mC_0\eps^2}{n^2 d^2 C^2_{\psi,d}}\right)\right]^{n^2}\geq 1-2n^2\exp\left(-\frac{mC_0\eps^2}{n^2 d^2 C^2_{\psi,d}}\right)
    \end{equation*}
    over the choice of $\vtheta^0$, we have
    \begin{equation*}
        \Norm{\frac{\kappa'}{\kappa^2}\mG^{[a]}(\vtheta^0)-\mK^{[a]}}_\mathrm{F}\leq\eps.
    \end{equation*}
    Taking $\eps=\lambda_a/4$, i.e., $\delta=2n^2\exp\left(-\frac{mC_0\lambda_a^2}{16n^2d^2C^2_{\psi,d}}\right)$, we obtain the estimate
    \begin{equation*}
        \begin{aligned}
            \lambda_{\min}(\mG^{[a]}(\vtheta^0)
             & \geq \frac{\kappa^2}{\kappa'}\lambda_a-\frac{\kappa^2}{\kappa'}\Norm{\frac{\kappa'}{\kappa^2}\mG^{[a]}(\vtheta^0)-\mK^{[a]}}_\mathrm{F} \\
             & \geq \frac{\kappa^2}{\kappa'}\left(\lambda_a-\eps\right)                                                                                \\
             & \geq\frac{3}{4}\frac{\kappa^2}{\kappa'}\lambda_a.
        \end{aligned}
    \end{equation*}
\end{proof}
\begin{proposition}[local in time exponential decay of $R_S$, $\vw$-lazy training]
    Given $\delta\in(0,1)$ and the sample set $S = {\{(\vx_i, y_i)\}}_{i=1}^n\subset\Omega$ with $\vx_i$'s drawn i.i.d.\ from some unknown distribution $\fD$. Suppose that Assumption~\ref{assump..lambda} holds. If $m\geq\frac{16n^2d^2C_{\psi,d}^2}{C_0\lambda_a^2}\log\frac{2n^2}{\delta}$, then with probability at least $1-\delta$ over the choice of $\vtheta^0$, for $t\in[0,t^*_a)$,
    \begin{equation}
        \RS(\vtheta(t))\leq\exp\left(-\frac{m\kappa^2}{\kappa'n}\lambda_at\right)\RS(\vtheta^0).
    \end{equation}
\end{proposition}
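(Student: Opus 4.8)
The plan is to follow the same route as Proposition~\ref{prop:exp_RS}, but working only with the output-weight block $\mG^{[a]}$ of the Gram matrix, since on the interval $[0,t^*_a)$ it is $\mG^{[a]}$ — not the full $\mG$ — whose spectral gap is under control.

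First I would invoke Proposition~\ref{prop:exp_a}: under the stated lower bound $m\geq\frac{16n^2d^2C_{\psi,d}^2}{C_0\lambda_a^2}\log\frac{2n^2}{\delta}$, with probability at least $1-\delta$ over $\vtheta^0$ one has $\lambda_{\min}(\mG^{[a]}(\vtheta^0))\geq\frac{3}{4}\frac{\kappa^2}{\kappa'}\lambda_a$. On this event, for every $\vtheta\in\fN_a(\vtheta^0)$ Weyl's inequality together with $\norm{\cdot}_2\leq\norm{\cdot}_\mathrm{F}$ gives
\begin{equation*}
    \lambda_{\min}(\mG^{[a]}(\vtheta))\geq\lambda_{\min}(\mG^{[a]}(\vtheta^0))-\norm{\mG^{[a]}(\vtheta)-\mG^{[a]}(\vtheta^0)}_\mathrm{F}\geq\frac{3}{4}\frac{\kappa^2}{\kappa'}\lambda_a-\frac{1}{4}\frac{\kappa^2}{\kappa'}\lambda_a=\frac{1}{2}\frac{\kappa^2}{\kappa'}\lambda_a,
\end{equation*}
and by the definition of $t^*_a$ we have $\vtheta(t)\in\fN_a(\vtheta^0)$, hence this bound holds at $\vtheta(t)$, for all $t\in[0,t^*_a)$.

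The key step is the differential inequality for the empirical risk. Exactly as in Proposition~\ref{prop:exp_RS}, the dynamics~\eqref{eq..MainDynamics} together with the identity $\frac{m}{n^2}\ve^\T\mG^{[a]}\ve=\frac{1}{\kappa'}\sum_{k=1}^m\norm{\nabla_{a_k}\RS}^2$ yields, along the flow,
\begin{equation*}
    \frac{\D}{\D t}\RS(\vtheta(t))=-\frac{1}{\kappa'}\sum_{k=1}^m\norm{\nabla_{a_k}\RS}^2-\kappa'\sum_{k=1}^m\norm{\nabla_{\vw_k}\RS}^2\leq-\frac{1}{\kappa'}\sum_{k=1}^m\norm{\nabla_{a_k}\RS}^2=-\frac{m}{n^2}\ve^\T\mG^{[a]}\ve,
\end{equation*}
where the inequality merely discards the nonnegative $\vw$-contribution. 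Since $\ve^\T\mG^{[a]}\ve\geq\lambda_{\min}(\mG^{[a]}(\vtheta(t)))\norm{\ve}_2^2=2n\lambda_{\min}(\mG^{[a]}(\vtheta(t)))\RS(\vtheta(t))$, combining with the eigenvalue bound gives $\frac{\D}{\D t}\RS(\vtheta(t))\leq-\frac{m\kappa^2}{\kappa'n}\lambda_a\RS(\vtheta(t))$ for $t\in[0,t^*_a)$, and an integration yields the stated estimate.

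I do not anticipate any real obstacle here, as the content is a specialization of Proposition~\ref{prop:exp_RS}. The one point worth a little care is that dropping the $\vw$-block of the Gram matrix is legitimate precisely because it can only make $\frac{\D}{\D t}\RS$ larger (less negative), so the resulting inequality remains a valid upper bound; this is exactly why the decay exponent degrades from $\frac{m\kappa^2}{n}(\lambda_a/\kappa'+\kappa'\lambda_{\vw})$ to $\frac{m\kappa^2}{\kappa'n}\lambda_a$. One should also confirm that the only probabilistic event invoked is that of Proposition~\ref{prop:exp_a}, so the total failure probability remains $\delta$ as claimed.
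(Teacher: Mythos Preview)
Your proposal is correct and follows essentially the same route as the paper: invoke Proposition~\ref{prop:exp_a} for the initial eigenvalue bound, use the definition of $\fN_a(\vtheta^0)$ together with Weyl's inequality to maintain $\lambda_{\min}(\mG^{[a]}(\vtheta(t)))\geq\frac{1}{2}\frac{\kappa^2}{\kappa'}\lambda_a$ on $[0,t^*_a)$, drop the nonnegative $\vw$-block in the identity $\frac{\D}{\D t}\RS=-\frac{m}{n^2}\ve^\T\mG\ve$, and integrate the resulting differential inequality. Your observation that only the event of Proposition~\ref{prop:exp_a} is used, keeping the failure probability at $\delta$, is also exactly right.
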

\begin{proof}
    By Prop.~\ref{prop:exp_a}, for any $\delta\in(0,1)$ with probability $1-\delta$ over the choice of $\vtheta^0$ and for any $\vtheta\in\fN_a(\vtheta^0)$,
    \begin{equation*}
        \begin{aligned}
            \lambda_{\min}(\mG^{[a]}(\vtheta))
             & \geq\lambda_{\min}(\mG^{[a]}(\vtheta^0))-\norm{\mG(\vtheta)-\mG(\vtheta^0)}_\mathrm{F}         \\
             & \geq \frac{3}{4}\frac{\kappa^2}{\kappa'}\lambda_a-\frac{1}{4}\frac{\kappa^2}{\kappa'}\lambda_a \\
             & = \frac{1}{2}\frac{\kappa^2}{\kappa'}\lambda_a.
        \end{aligned}
    \end{equation*}
    Therefore
    \begin{equation*}
        \begin{aligned}
            \frac{\D}{\D t}\RS(\vtheta(t))
             & = -\frac{m}{n^2}\ve^{\T}\mG\ve                                                    \\
             & \leq -\frac{m}{n^2}\ve^{\T}\mG^{[a]}\ve                                           \\
             & \leq -\frac{2m}{n}\lambda_{\min}\left(\mG^{[a]}(\vtheta(t))\right)\RS(\vtheta(t)) \\
             & \leq -\frac{m\kappa^2}{\kappa'n}\lambda_a\RS(\vtheta(t)).
        \end{aligned}
    \end{equation*}
    This leads to the linear convergence rate.
\end{proof}
\begin{proposition}[bounds on the change of parameters, $\vw$-lazy training]
    Given $\delta\in(0,1)$ and the sample set $S = {\{(\vx_i, y_i)\}}_{i=1}^n\subset\Omega$ with $\vx_i$'s drawn i.i.d.\ from some unknown distribution $\fD$. Suppose that Assumption~\ref{assump..lambda} holds. If $m\geq\frac{16n^2d^2C^2_{\psi,d}}{C_0\lambda_a^2}\log\frac{4n^2}{\delta}$ and $\frac{m\kappa}{\kappa'}\geq\frac{4\sqrt{2d}n\sqrt{\RS(\vtheta^0)}}{\lambda_a}$ and $\kappa'\leq 1$, then with probability at least $1-\delta$ over the choice of $\vtheta^0$ and for any $t\in[0,t^*_a)$, $k\in[m]$,
    \begin{align}
        \max\limits_{k\in[m]}\abs{a_k(t) - a_k(0)}
         & \leq 2\frac{1}{\kappa'}\sqrt{2\log\frac{4m(d+1)}{\delta}}p_a, \\
        \max\limits_{k\in[m]}\norm{\vw_k(t) - \vw_k(0)}_{\infty}
         & \leq 2\sqrt{2\log\frac{4m(d+1)}{\delta}}p_a,
    \end{align}
    and
    \begin{equation}
        \max\limits_{k\in[m]}\{\abs{a_k(0)},\;\norm{\vw_k(0)}_{\infty}\} \leq \sqrt{2\log\frac{4m(d+1)}{\delta}},
    \end{equation}
    where $p_a=\frac{2\sqrt{2}dn\sqrt{\RS(\vtheta^0)}}{m\kappa\lambda_a/\kappa'}$.
\end{proposition}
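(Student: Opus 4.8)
The plan is to replay the proof of Proposition~\ref{prop:a_w} with the two-sided estimate replaced by the one-sided ($\vw$-lazy) one, everything restricted to the interval $[0,t^*_a)$, on which the immediately preceding proposition (the $\vw$-lazy exponential-decay estimate) supplies $\RS(\vtheta(t))\leq\exp\!\left(-\frac{m\kappa^2}{\kappa'n}\lambda_a t\right)\RS(\vtheta^0)$. I would invoke that decay estimate and Lemma~\ref{lem..InitialParameter} each at confidence level $\delta/2$ and finish with a union bound; this is precisely why the decay proposition's sample-size requirement $\log\frac{2n^2}{\delta}$ is replaced here by $\log\frac{4n^2}{\delta}$.

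First, bound the partial gradients of $\RS$ in terms of the running maxima $\alpha(t)$ and $\omega(t)$: using $\norm{\vx_i}_{\infty}\leq1$, Cauchy--Schwarz and $\frac1n\ve^\T\ve=2\RS(\vtheta)$, exactly as in the proof of Proposition~\ref{prop:a_w}, one gets $\abs{\nabla_{a_k}\RS}\leq\sqrt2\,d\kappa\,\omega(t)\sqrt{\RS(\vtheta)}$ and $\norm{\nabla_{\vw_k}\RS}_{\infty}\leq\sqrt2\,\kappa\,\alpha(t)\sqrt{\RS(\vtheta)}$. Integrating the dynamics~\eqref{eq..MainDynamics}, bounding $\int_0^t\sqrt{\RS(\vtheta(s))}\diff{s}\leq\frac{2\kappa'n}{m\kappa^2\lambda_a}\sqrt{\RS(\vtheta^0)}$ via the decay estimate, and pulling the monotone $\alpha(s),\omega(s)$ out of the integral, one arrives at
\begin{equation*}
    \alpha(t)\leq\alpha(0)+\frac{p_a}{\kappa'}\,\omega(t),\qquad
    \omega(t)\leq\omega(0)+p_a\kappa'\,\alpha(t),
\end{equation*}
where $p_a=\frac{2\sqrt2\,dn\sqrt{\RS(\vtheta^0)}}{m\kappa\lambda_a/\kappa'}$ (in the $\vw$-estimate a harmless factor $d^{-1}$ is discarded). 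The arithmetic point is that the $1/\kappa'$ carried by the $\vw$-lazy decay rate cancels the $\kappa'$ of the mobility matrix so that the multiplier appearing in the coupled system is exactly $p_a$.

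Then comes the bootstrap. The lower bound on $m\kappa/\kappa'$ assumed in the hypothesis is exactly what makes $p_a\leq\frac12$, so substituting the second inequality into the first decouples them into $\alpha(t)\leq\frac43\alpha(0)+\frac23\frac1{\kappa'}\omega(0)$ and $\omega(t)\leq\frac43\omega(0)+\frac23\kappa'\alpha(0)$. Plugging in $\alpha(0),\omega(0)\leq\sqrt{2\log\frac{4m(d+1)}{\delta}}$ from Lemma~\ref{lem..InitialParameter}, and using $\kappa'\leq1$ to collapse $\frac43+\frac23\frac1{\kappa'}\leq\frac2{\kappa'}$ and $\frac43+\frac23\kappa'\leq2$, gives $\alpha(t)\leq\frac2{\kappa'}\sqrt{2\log\frac{4m(d+1)}{\delta}}$ and $\omega(t)\leq2\sqrt{2\log\frac{4m(d+1)}{\delta}}$. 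Re-substituting these into $\abs{a_k(t)-a_k(0)}\leq\frac{p_a}{\kappa'}\omega(t)$ and $\norm{\vw_k(t)-\vw_k(0)}_{\infty}\leq p_a\kappa'\alpha(t)$ yields the two claimed change bounds, and the initial-parameter bound is quoted directly from Lemma~\ref{lem..InitialParameter}.

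The \textbf{main obstacle} is the $\kappa'$-bookkeeping: since the mobility matrix scales $a_k$ and $\vw_k$ by reciprocal factors and the $\vw$-lazy decay rate itself carries a $1/\kappa'$, the asymmetry between the two integrated estimates must be tracked carefully so that (i) the fixed-point multiplier comes out to be genuinely $p_a$ rather than $p_a/\kappa'$ or $p_a\kappa'$, and (ii) the extra hypothesis $\kappa'\leq1$ is exactly the condition needed to reduce the $\max\{1,1/\kappa'\}$ and $\max\{1,\kappa'\}$ constants of the general Proposition~\ref{prop:a_w} to the clean $1/\kappa'$ and $1$ that appear in the present statement. The remaining ingredients --- the gradient estimates and the union bound --- are routine.
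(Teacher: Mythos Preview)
Your proposal is correct and follows essentially the same route as the paper: bound the gradients in terms of $\alpha(t),\omega(t)$, integrate against the $\vw$-lazy exponential decay of $\RS$ on $[0,t_a^*)$ to obtain the coupled pair $\alpha(t)\le\alpha(0)+\tfrac{p_a}{\kappa'}\omega(t)$ and $\omega(t)\le\omega(0)+p_a\kappa'\alpha(t)$, use $p_a\le\tfrac12$ to close the bootstrap, and simplify the $\max$-constants via $\kappa'\le1$. Your $\kappa'$-bookkeeping is actually cleaner than the paper's written proof, which contains copy-paste remnants from Proposition~\ref{prop:a_w} (it cites Proposition~\ref{prop:exp_RS} and writes the two-sided rate $\lambda_a/\kappa'+\kappa'\lambda_{\vw}$ in the exponential, whereas---as you correctly note---the relevant input on $[0,t_a^*)$ is the one-sided $\vw$-lazy decay with rate $\tfrac{m\kappa^2\lambda_a}{\kappa'n}$, which is what makes the multiplier come out to $p_a$).
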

\begin{proof}
    Since
    \begin{equation*}
        \alpha(t)=\max\limits_{k\in[m],s\in[0,t]}|a_k(s)|, \quad \omega(t)=\max\limits_{k\in[m],s\in[0,t]}\norm{\vw_k(s)}_{\infty},
    \end{equation*}
    then
    \begin{equation*}
        \begin{aligned}
            \abs{\nabla_{a_k}\RS}^2    & =\left\lvert\frac{1}{n}\sum_{i=1}^n e_i\kappa\sigma(\vw_k^\T\vx_i)\right\rvert^2\leq 2\norm{\vw_k}^2_1\kappa^2\RS(\vtheta)\leq 2d^2(\omega(t))^2\kappa^2\RS(\vtheta),            \\
            \norm{\nabla_{\vw_k}\RS}^2 & =\left\lVert\frac{1}{n}\sum_{i=1}^n e_i\kappa a_k\sigma'(\vw_k^\T\vx_i)\vx_i\right\rVert^2_{\infty}\leq 2\abs{a_k}^2\kappa^2\RS(\vtheta)\leq 2(\alpha(t))^2\kappa^2\RS(\vtheta).
        \end{aligned}
    \end{equation*}
    By Prop.~\ref{prop:exp_RS}, we have if $m\geq \frac{16n^2d^2C_{\psi,d}^2}{\lambda^2C_0}\log\frac{8n^2}{\delta}$, then with probability at least $1 - \delta/2$ over the choice of $\vtheta^0$,
    \begin{equation*}
        \begin{aligned}
            \abs{a_k(t) - a_k(0)}
             & \leq\frac{1}{\kappa'}\int_0^t\abs{\nabla_{a_k}\RS(\vtheta(s))}\diff{s}                                                                                                                   \\
             & \leq\frac{\sqrt{2}d\kappa}{\kappa'}\int_{0}^{t} \omega(s)\sqrt{\RS(\vtheta(s))}\diff{s}                                                                                                  \\
             & \leq\frac{\sqrt{2}d\kappa}{\kappa'}\omega(t)\int_{0}^{t}\sqrt{\RS(\vtheta^0)}\exp\left(-\frac{m\kappa^2}{2n}\left(\frac{1}{\kappa'}\lambda_a+\kappa'\lambda_{\vw}\right)s\right)\diff{s} \\
             & \leq \frac{2\sqrt{2}dn\sqrt{\RS(\vtheta^0)}}{m\kappa\kappa'\left(\lambda^{[a]}_S/\kappa'+\kappa'\lambda_{\vw}\right)}\omega(t)                                                           \\
             & =\frac{p_a}{\kappa'}\omega(t).
        \end{aligned}
    \end{equation*}
    On the other hand,
    \begin{equation*}
        \begin{aligned}
            \norm{\vw_k(t) - \vw_k(0)}_{\infty}
             & \leq \kappa'\int_{0}^{t} \norm{\nabla_{\vw_k}\RS(\vtheta(s))}_{\infty}\diff{s}                                                                                                     \\
             & \leq \sqrt{2}\kappa\kappa' \int_{0}^t \alpha(s)\sqrt{\RS(\vtheta(s))} \diff{s}                                                                                                     \\
             & \leq \sqrt{2}\kappa\kappa' \alpha(t) \int_{0}^{t} \sqrt{\RS(\vtheta^0)}\exp\left(-\frac{m\kappa^2}{2n}\left(\frac{1}{\kappa'}\lambda_a+\kappa'\lambda_{\vw}\right)s\right)\diff{s} \\
             & \leq \frac{2\sqrt{2}n\sqrt{\RS(\vtheta^0)}\kappa'}{m\kappa\left(\lambda^{[a]}_S/\kappa'+\kappa'\lambda_{\vw}\right)}\alpha(t)                                                      \\
             & \leq p_a\kappa'\alpha(t).
        \end{aligned}
    \end{equation*}
    Thus
    \begin{equation*}
        \begin{aligned}
            \alpha(t) & \leq\alpha(0)+p_a\omega(t)\frac{1}{\kappa'}, \\
            \omega(t) & \leq\omega(0)+p_a\alpha(t)\kappa'.
        \end{aligned}
    \end{equation*}
    By Lemma~\ref{lem..InitialParameter}, we have with probability at least $1 - \delta/2$ over the choice of $\vtheta^0$,
    \begin{equation*}
        \max\limits_{k\in[m]}\{\abs{a_k(0)},\;\norm{\vw_k(0)}_{\infty}\}\leq\sqrt{2\log\frac{4m(d+1)}{\delta}}.
    \end{equation*}
    If
    \begin{equation*}
        m \geq \frac{4\sqrt{2}dn\sqrt{\RS(\vtheta^0)}}{\kappa\left(\lambda_a/\kappa'+\kappa'\lambda_{\vw}\right)},
    \end{equation*}
    then
    \begin{equation*}
        p_a=\frac{2\sqrt{2}dn\sqrt{\RS(\vtheta^0)}}{m\kappa\lambda_a/\kappa'}\leq \frac{1}{2}.
    \end{equation*}
    Thus
    \begin{align*}
        \alpha(t) & \leq\alpha(0)+\frac{p_a}{\kappa'}\omega(0)+p_a^2\alpha(t),      \\
        \alpha(t) & \leq\frac{4}{3}\alpha(0)+\frac{2}{3}\frac{1}{\kappa'}\omega(0),
    \end{align*}
    Therefore
    \begin{equation*}
        \alpha(t)\leq 2\frac{1}{\kappa'}\sqrt{2\log\frac{4m(d+1)}{\delta}}.
    \end{equation*}
    Similarly, one can obtain the estimate of $\omega(t)$ as
    \begin{equation*}
        \omega(t)\leq 2\sqrt{2\log\frac{4m(d+1)}{\delta}}.
    \end{equation*}
    Finally, with probability at least $1-\delta$ over the choice of $\vtheta^0$ and for any $t\in[0, t^*_a)$, we have
    \begin{equation*}
        \begin{aligned}
            \max\limits_{k\in[m]}|a_k(t) - a_k(0)|
             & \leq 2\frac{1}{\kappa'}\sqrt{2\log\frac{4m(d+1)}{\delta}}p_a, \\
            \max\limits_{k\in[m]}\norm{\vw_k(t) - \vw_k(0)}_{\infty}
             & \leq 2\sqrt{2\log\frac{4m(d+1)}{\delta}}p_a,
        \end{aligned}
    \end{equation*}
    which completes the proof.
\end{proof}
\section{Proof of Theorem \ref{thm..LinearRegime}}
We further divide the linear regime into two part: $\gamma<1$ where the training dynamics is $\vtheta$-lazy and $\gamma'>\gamma-1$ where the training dynamics is $\vw$-lazy. Theorem \ref{thm..LinearRegime} is hence covered by Proposition \ref{prop..ThetaLazyRegime} and Proposition \ref{prop..WLazyRegime} whose proofs are given in this section.
\begin{proposition}[$\vtheta$-lazy training]\label{prop..ThetaLazyRegime}
    Given $\delta\in(0,1)$ and the sample set $S = {\{(\vx_i, y_i)\}}_{i=1}^n\subset\Omega$ with $\vx_i$'s drawn i.i.d.\ from some unknown distribution $\fD$. Suppose that Assumption~\ref{assump..lambda} and Assumption~\ref{assump..gammagamma'} hold. ASI is used if $\gamma\leq\frac{1}{2}$. Suppose that $\gamma<1$ and the dynamics \eqref{eq..MainDynamics}--\eqref{eq..MainInitialization} is considered. Then for sufficiently large $m$, with probability at least $1-\delta$ over the choice of $\vtheta^0$, we have
    \begin{enumerate}[(a)]
        \item $\sup\limits_{t\in[0,+\infty)}\norm{\vtheta(t)-\vtheta^0}_2\lesssim\frac{1}{\sqrt{m}\kappa}\log m$.
        \item $\RS(\vtheta(t))\leq \exp\left(-\frac{2m\kappa^2\lambda t}{n}\right)\RS(\vtheta^0)$. \\
              Moreover, we have with probability at least $1-\delta-2\exp\left(-\frac{C_0m(d+1)}{4C^2_{\psi,1}}\right)$.
        \item $\sup\limits_{t\in[0,+\infty)}\frac{\norm{\vtheta(t)-\vtheta^0}_2}{\norm{\vtheta^0}_2}\lesssim\frac{1}{m\kappa}\log m$.
    \end{enumerate}
\end{proposition}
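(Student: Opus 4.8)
The scheme is the standard bootstrap (continuity) argument: show that the gradient flow never leaves the neighbourhood $\mathcal{N}(\vtheta^0)$ on which $\mG$ is well conditioned, so that the local-in-time estimates of Propositions~\ref{prop:exp_RS} and \ref{prop:a_w} upgrade to global-in-time ones, from which (a)--(c) follow by bookkeeping.

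First I would fix a single high-probability event on which all the preliminary facts hold at once for $m$ large: Lemma~\ref{lem..InitialParameter} (initial weights bounded by $\sqrt{2\log(4m(d+1)/\delta)}$); the bound on $\RS(\vtheta^0)$, which combined with ASI when $\gamma\leq\frac12$ and with the fact that $\kappa\sqrt m\to 0$ when $\frac12<\gamma<1$ yields $\sqrt{\RS(\vtheta^0)}=O(1)$; Proposition~\ref{prop..InitialThetaNorm} ($\norm{\vtheta^0}_2\geq\sqrt{m(d+1)/2}$); and Proposition~\ref{prop:lambda_min} ($\lambda_{\min}(\mG(\vtheta^0))\geq\frac34\kappa^2(\lambda_a/\kappa'+\kappa'\lambda_{\vw})$). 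By a union bound these hold with probability at least $1-\delta$ (and at least $1-\delta-2\exp(-C_0m(d+1)/4C_{\psi,1}^2)$ once Proposition~\ref{prop..InitialThetaNorm} is included).

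Next I would run the bootstrap on $[0,t^*)$. Proposition~\ref{prop:a_w} gives, on $[0,t^*)$, that $\max_k\abs{a_k(t)-a_k(0)}$ and $\max_k\norm{\vw_k(t)-\vw_k(0)}_\infty$ are each at most a $\sqrt{\log m}$ factor times $p\max\{1/\kappa',1\}$, resp.\ $p\max\{\kappa',1\}$; using $\lambda_a/\kappa'+\kappa'\lambda_{\vw}\geq\lambda_a/\kappa'$ and $\geq\kappa'\lambda_{\vw}$ (and $\geq 2\sqrt{\lambda_a\lambda_{\vw}}\geq 2\lambda$) both products are $\lesssim\frac{n\sqrt{\log m}}{m\kappa\lambda}$, which tends to $0$ since $m\kappa=m^{1-\gamma+o(1)}\to\infty$ for $\gamma<1$. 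I would then convert this into a bound on $\norm{\mG(\vtheta(t))-\mG(\vtheta^0)}_{\mathrm F}$. The $\mG^{[a]}$ part is immediate from the $1$-Lipschitzness of $\sigma$ together with $\omega(t)\lesssim\sqrt{\log m}$. The $\mG^{[\vw]}$ part is the delicate step, since $\sigma'$ is discontinuous: I would bound the number of neurons whose activation pattern $(\mathrm{sign}(\vw_k^\T\vx_i))_{i\in[n]}$ differs from the one at initialization by using anti-concentration of the Gaussian $(\vw_k^0)^\T\vx_i$ on a strip of width $O(\max_k\norm{\vw_k-\vw_k^0}_1)$ around $0$, plus a Chernoff/Bernstein bound, so that this count is $o(m)$ with high probability and hence the $\mG^{[\vw]}$ change is $\ll\kappa^2\kappa'\lambda_{\vw}$. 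Together these give $\norm{\mG(\vtheta(t))-\mG(\vtheta^0)}_{\mathrm F}<\frac14\kappa^2(\lambda_a/\kappa'+\kappa'\lambda_{\vw})$ strictly for all $t\in[0,t^*)$ and $m$ large; by continuity of $\vtheta(\cdot)$ this forces $t^*=+\infty$.

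Finally, with $t^*=+\infty$ all three conclusions drop out. Proposition~\ref{prop:exp_RS} with $\lambda_a/\kappa'+\kappa'\lambda_{\vw}\geq 2\lambda$ gives (b). Summing the per-neuron bounds of Proposition~\ref{prop:a_w} over $k\in[m]$ gives $\norm{\vtheta(t)-\vtheta^0}_2\lesssim\sqrt{md}\cdot\frac{\sqrt{\log m}}{m\kappa}\lesssim\frac{1}{\sqrt m\,\kappa}\log m$, which is (a). Dividing (a) by $\norm{\vtheta^0}_2\geq\sqrt{m(d+1)/2}$ gives (c), with the additional $2\exp(-C_0m(d+1)/4C_{\psi,1}^2)$ probability loss inherited from Proposition~\ref{prop..InitialThetaNorm}. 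I expect the activation-flip count underlying the Gram-matrix stability step of the bootstrap to be the main obstacle; the rest is arithmetic on top of the propositions already established.
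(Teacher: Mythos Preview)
Your proposal is correct and follows essentially the same bootstrap scheme as the paper: fix the high-probability event, use Propositions~\ref{prop:exp_RS} and~\ref{prop:a_w} on $[0,t^*)$, show $\norm{\mG(\vtheta(t))-\mG(\vtheta^0)}_{\mathrm F}$ stays strictly below the threshold so that $t^*=+\infty$, and then read off (a)--(c). The split into a Lipschitz $\mG^{[a]}$ part and an activation-flip $\mG^{[\vw]}$ part, the anti-concentration of $\vw_k^{0\T}\vx_i$ near $0$, and the inequality $\max\{\kappa',1/\kappa'\}/(\lambda_a/\kappa'+\kappa'\lambda_{\vw})\leq 1/\lambda$ all appear in the paper exactly as you describe.

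The one substantive difference is in how the activation-flip count is controlled. You propose a Chernoff/Bernstein bound on the number of neurons with $|\vw_k^{0\T}\vx_i|$ small (valid, since these events are independent across $k$ once you contain the flip event in this static event). The paper instead bounds the \emph{expectation} of $\sum_{i,j}|G^{[\vw]}_{ij}(\vtheta(t))-G^{[\vw]}_{ij}(\vtheta^0)|$ using $\Exp|D_{k,i,j}|\leq\Prob(D_{k,i}\cup D_{k,j})$ and then applies Markov's inequality. The paper's route is shorter but costs a factor $\delta^{-1}$ in the required size of $m\kappa$; your route would give cleaner $\delta$-dependence at the price of one more union bound over $i\in[n]$. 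Since the statement only asks for ``sufficiently large $m$'', both variants suffice.
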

\begin{proof}
    Let $t\in[0,t^*)$, $p=\frac{2\sqrt{2}dn\sqrt{\RS(\vtheta^0)}}{m\kappa\left(\lambda_a/\kappa'+\kappa'\lambda_{\vw}\right)}$ and $\xi=\sqrt{2\log\frac{8m(d+1)}{\delta}}$.
    \begin{enumerate}[(a)]
        \item From Proposition~\ref{prop:a_w} we have with probability at least $1-\delta/2$ over the choice of $\vtheta^0$,
              \begin{equation*}
                  \begin{aligned}
                      \sup\limits_{t\in[0,t^*]}\norm{\vtheta(t)-\vtheta^0}_2
                       & \leq \left[m(d+1)\left(2\sqrt{2\log\frac{8m(d+1)}{\delta}}\frac{\sqrt{2}dn\sqrt{\RS(\vtheta^0)}}{m\kappa\left(\lambda_a/\kappa'+\kappa'\lambda_{\vw}\right)}\right)^2\right]^{\frac{1}{2}}      \\
                       & =\max\left\{\kappa',\frac{1}{\kappa'}\right\}\sqrt{m(d+1)}2\sqrt{2\log\frac{8m(d+1)}{\delta}}\frac{\sqrt{2}dn\sqrt{\RS(\vtheta^0)}}{m\kappa\left(\lambda_a/\kappa'+\kappa'\lambda_{\vw}\right)} \\
                       & \leq\max\left\{\kappa',\frac{1}{\kappa'}\right\}\frac{4\sqrt{d+1}dn\sqrt{\log\frac{8m(d+1)}{\delta}}\sqrt{\RS(\vtheta^0)}}{\sqrt{m}\kappa\left(\lambda_a/\kappa'+\kappa'\lambda_{\vw}\right)}   \\
                       & \leq\frac{4\sqrt{d+1}dn\sqrt{\log\frac{8m(d+1)}{\delta}}\sqrt{\RS(\vtheta^0)}}{\sqrt{m}\kappa\lambda}                                                                                           \\
                       & \lesssim \frac{1}{\sqrt{m}\kappa}\log m,
                  \end{aligned}
              \end{equation*}
              where we use the fact
              \begin{equation*}
                  \frac{\max\left\{\kappa',\frac{1}{\kappa'}\right\}}{\lambda_a/\kappa'+\kappa'\lambda_{\vw}}\leq\max\left\{\frac{\kappa'}{\kappa'\lambda_{\vw}},\frac{1/\kappa'}{\lambda_a/\kappa'}\right\}\leq\frac{1}{\lambda}.
              \end{equation*}
        \item The linear convergence rate is essentially proved by Prop.~\ref{prop:exp_RS} with $t^* = +\infty$. We divide the proof into the following three steps. In particular, $t^*=+\infty$ is proved in the step (iii).
              \begin{enumerate}[(i)]
                  \item Let
                        \begin{equation*}
                            g^{[a]}_{ij}(\vw) := \sigma(\vw^\T\vx_i)\sigma(\vw^\T\vx_j),
                        \end{equation*}
                        then
                        \begin{equation*}
                            \Abs{G_{ij}^{[a]}(\vtheta(t)) - G_{ij}^{[a]}(\vtheta(0))} \leq \frac{\kappa^2}{m\kappa'}\sum_{k=1}^m \Abs{g^{[a]}_{ij}(\vw_k(t)) - g^{[a]}_{ij}(\vw_k(0))}.
                        \end{equation*}
                        By mean value theorem, for somce $c\in(0,1)$,
                        \begin{equation*}
                            \Abs{g^{[a]}_{ij}(\vw_k(t)) - g^{[a]}_{ij}(\vw_k(0))} \leq \norm{\nabla g^{[a]}_{ij}\left(c\vw_k(t) + (1-c)\vw_k(0)\right)}_{\infty}\norm{\vw_k(t) - \vw_k(0)}_1,
                        \end{equation*}
                        where
                        \begin{equation*}
                            \nabla g_{ij}^{[a]}(\vw)=\sigma'(\vw\cdot\xi_i)\sigma(\vw^\T\vx_j)\vx_i+\sigma(\vw^\T\vx_i)\sigma'(\vw^\T\vx_j)\vx_j,
                        \end{equation*}
                        and
                        \begin{equation*}
                            \norm{\nabla g_{ij}^{[a]}(\vw)}_{\infty}\leq 2\norm{\vw}_1.
                        \end{equation*}
                        From Proposition~\ref{prop:a_w} we have with probability at least $1-\delta/2$ over the choice of $\vtheta^0$,
                        \begin{align*}
                            \norm{\vw_k(t)-\vw_k(0)}_{\infty} & \leq p\alpha(t)\kappa'\leq 2\max\{\kappa',1\}\xi p, \\
                            \norm{\vw_k(t)-\vw_k(0)}_1        & \leq 2d\max\{\kappa',1\}\xi p.
                        \end{align*}
                        Thus
                        \begin{equation*}
                            \begin{aligned}
                                \norm{c\vw_k(t) + (1-c)\vw_k(0)}_1
                                 & \leq d\left(\norm{\vw_k(0)}_{\infty} + \norm{\vw_k(t) - \vw_k(0)}_{\infty}\right) \\
                                 & \leq d\left(\xi+2\max\{\kappa',1\}\xi p\right)                                    \\
                                 & \leq 2d\xi\max\{\kappa',1\}.
                            \end{aligned}
                        \end{equation*}
                        Then
                        \begin{equation*}
                            \abs{G_{ij}^{[a]}(\vtheta(t)) - G_{ij}^{[a]}(\vtheta(0))} \leq 8d^2\kappa^2\xi^2\max\left\{\kappa',\frac{1}{\kappa'}\right\}p,
                        \end{equation*}
                        and
                        \begin{equation*}
                            \begin{aligned}
                                \norm{\mG^{[a]}(\vtheta(t)) - \mG^{[a]}(\vtheta(0))}_\mathrm{F}
                                 & \leq 8d^2n\kappa^2\max\left\{\kappa',\frac{1}{\kappa'}\right\}\left(2\log\frac{8m(d+1)}{\delta}\right)\frac{2\sqrt{2}dn\sqrt{\RS(\vtheta^0)}}{m\kappa\left(\lambda_a/\kappa'+\kappa'\lambda_{\vw}\right)} \\
                                 & \leq\kappa\max\left\{\kappa',\frac{1}{\kappa'}\right\}\frac{32\sqrt{2}d^3n^2\left(\log\frac{8m(d+1)}{\delta}\right)\sqrt{\RS(\vtheta^0)}}{m\left(\lambda_a/\kappa'+\kappa'\lambda_{\vw}\right)}.
                            \end{aligned}
                        \end{equation*}
                        If we choose
                        \begin{equation}\label{cond:m1}
                            m\kappa\geq\frac{128\sqrt{2}d^3n^2\left(\log\frac{8m(d+1)}{\delta}\right)\sqrt{\RS(\vtheta^0)}}{\lambda^2},
                        \end{equation}
                        then noticing that
                        \begin{equation*}
                            \begin{aligned}
                                \frac{1}{\lambda^2}
                                 & \geq\frac{1}{\left(4\left(\frac{1}{27}\lambda_a(\lambda_{\vw})^3\right)^{1/4}\right) ^2} \\
                                 & \geq\frac{1}{\left(\lambda_a/(\kappa')^{3/2}+\sqrt{\kappa'}\lambda_{\vw}\right)^2}       \\
                                 & =\frac{\kappa'}{\left(\lambda_a/\kappa'+\kappa'\lambda_{\vw}\right)^2}
                            \end{aligned}
                        \end{equation*}
                        and
                        \begin{equation*}
                            \begin{aligned}
                                \frac{1}{\lambda^2}
                                 & \geq\frac{1}{\left(4\left(\frac{1}{27}(\lambda_a)^3\lambda_{\vw}\right)^{1/4}\right) ^2} \\
                                 & \geq \frac{1}{\left(\lambda_a/\sqrt{\kappa'}+(\kappa')^{3/2}\lambda_{\vw}\right)^2}      \\
                                 & =\frac{1}{\left(\lambda_a/\kappa'+\kappa'\lambda_{\vw}\right)^2\kappa'},
                            \end{aligned}
                        \end{equation*}
                        we have
                        \begin{equation*}
                            m\kappa\geq \max\left\{\kappa',\frac{1}{\kappa'}\right\}\frac{256\sqrt{2}d^3n^2\left(\log\frac{8m(d+1)}{\delta}\right)\sqrt{\RS(\vtheta^0)}}{\left(\lambda_a/\kappa'+\kappa'\lambda_{\vw}\right)^2}.
                        \end{equation*}
                        Therefore \begin{equation}\label{thm-proof:step1}
                            \norm{\mG^{[a]}(\vtheta(t)) - \mG^{[a]}(\vtheta(0))}_\mathrm{F}\leq\frac{1}{8}\kappa^2\left(\frac{1}{\kappa'}\lambda_a+\kappa'\lambda_{\vw}\right).
                        \end{equation}
                  \item Define
                        \begin{equation*}
                            \begin{aligned}
                                D_{i,k}=\{\omega_k(0)\mid \norm{\vw_k(t)-\vw_k(0)}_{\infty} & \leq 2\xi\max\{\kappa',1\} p,                                    \\
                                                                                            & \sigma'(\vw_k(t^*)\cdot\vx_i)\not=\sigma'(\vw_k(0)\cdot\vx_i)\}.
                            \end{aligned}
                        \end{equation*}
                        If $\abs{\vw_k(0)\cdot\vx_i}>4d\max\{\kappa',1\}\xi p$, then
                        \begin{equation*}
                            \abs{\vw_k(t)\cdot\vx_i-\vw_k(0)\cdot\vx_i}\leq\norm{\vx_i}_1\norm{\vw_k(t)-\vw_k(0)}_{\infty}\leq2d\sqrt{2\log\frac{8m(d+1)}{\delta}} p,
                        \end{equation*}
                        thus $\vw_k(t)\cdot\vx_i$ and $\vw_k(0)\cdot\vx_i$ have the same sign which means $D_{i,k}$ is empty. Recall that $\vx_i\in[0, 1]^d$ with $(\vx_i)_d=1$, then $\norm{\vx_i}_2\geq 1$. Let $\hat{\vx}_i=\frac{\vx_i}{\norm{\vx_i}_2}$ then $\abs{\vw_k(0)\cdot\vx_i}\geq\abs{\vx_k(0)\cdot\hat{\vx}_i}$ and
                        \begin{equation*}
                            \begin{aligned}
                                \Prob(D_{i,k})
                                 & \leq\Prob(|\vw_k(0)\cdot\vx_i|\leq 4d\max\{\kappa',1\}\xi p)                         \\
                                 & \leq\Prob(|\vw_k(0)\cdot\hat{\vx}_i|\leq 4d\max\{\kappa',1\}\xi p)                   \\
                                 & = \Prob(|\vw_k(0)\cdot(1,0,0,\ldots,0)^{\T}|\leq 4d\max\{\kappa',1\}\xi p)           \\
                                 & = \Prob(|(\vw_k(0))_1|\leq 4d\max\{\kappa',1\}\xi p)                                 \\
                                 & = 2\int_0^{4d\max\{\kappa',1\}\xi p}\frac{1}{\sqrt{2\pi}}\E^{-\frac{x^2}{2}}\diff{x} \\
                                 & \leq \frac{8}{\sqrt{2\pi}}d\max\{\kappa',1\}\xi p                                    \\
                                 & \leq 4d\max\{\kappa',1\}\xi p.
                            \end{aligned}
                        \end{equation*}
                        Then
                        \begin{equation*}
                            \begin{aligned}
                                \abs{G^{[\vw]}_{ij}(\vtheta(t))-G^{[\vw]}_{ij}(\vtheta(0))}
                                 & \leq \frac{\kappa^2\kappa'\abs{\vx_i\cdot\vx_j}}{m}\sum_{k=1}^m\Big|a_k^2(t^*)\sigma'(\vw_k(t)\cdot\vx_i)\sigma'(\vw_k(t)\cdot\vx_j) \\
                                 & \quad\quad\quad\quad\quad\quad\quad\quad -a_k^2(0)\sigma'(\vw_k(0)\cdot\vx_i)\sigma'(\vw_k(0)\cdot\vx_j)\Big|                        \\
                                 & \leq \frac{\kappa^2\kappa'd}{m}\sum_{k=1}^m\left[a_k^2(t)\abs{D_{k,i,j}}+|a_k^2(t)-a_k^2(0|\right],
                            \end{aligned}
                        \end{equation*}
                        where
                        \begin{equation*}
                            D_{k,i,j}:=\sigma'(\vw_k(t)\cdot\vx_i)\sigma'(\vw_k(t)\cdot\vx_j)-\sigma'(\vw_k(0)\cdot\vx_i)\sigma'(\vw_k(0)\cdot\vx_j).
                        \end{equation*}
                        Thus
                        \begin{equation*}
                            \Exp|D_{k,i,j}|\leq\Prob(D_{k,i}\cup D_{k,j})\leq 8d\max\{\kappa',1\}\xi p.
                        \end{equation*}
                        At the same time
                        \begin{equation*}
                            \begin{aligned}
                                \abs{a_k^2(t)-a_k^2(0)}
                                 & \leq\abs{a_k(t)-a_k(0)}^2+2\abs{a_k(0)}\abs{a_k(t)-a_k(0)}                                                                     \\
                                 & \leq \left(2\max\left\{\frac{1}{\kappa'},1\right\}\xi p\right)^2+2\xi\left(2\max\left\{\frac{1}{\kappa'},1\right\}\xi p\right) \\
                                 & \leq 6\xi^2\max\left\{\frac{1}{\kappa'^2},1\right\}p,
                            \end{aligned}
                        \end{equation*}
                        so
                        \begin{equation*}
                            \begin{aligned}
                                a_k^2(t)\leq \abs{a_k^2(t)-a_k^2(0)}+a_k^2(0)
                                 & \leq \left(2\max\left\{\frac{1}{\kappa'},1\right\}\xi p\right)^2+2\xi\left(2\max\left\{\frac{1}{\kappa'},1\right\}\xi p\right) + \xi^2 \\
                                 & \leq 4\max\left\{\frac{1}{\kappa'^2},1\right\}\xi^2.
                            \end{aligned}
                        \end{equation*}
                        Then
                        \begin{equation*}
                            \begin{aligned}
                                \Exp\sum_{i,j=1}^n & \Abs{G_{ij}^{[\vw]}(\vtheta(t)) - G_{ij}^{[\vw]}(\vtheta(0))}\\
                                 & \leq\sum_{i,j=1}^n\frac{\kappa^2\kappa'  d}{m}\sum_{k=1}^m\left(4\max\left\{\frac{1}{\kappa'^2},1\right\}\xi^2\Exp|D_{k,i,j}|+6\max\left\{\frac{1}{\kappa'^2},1\right\}\xi^2 p\right)           \\
                                 & \leq\sum_{i,j=1}^n\frac{\kappa^2\kappa' d}{m}\sum_{k=1}^m\left(4\max\left\{\frac{1}{\kappa'^2},1\right\}\xi^2 8d\max\{\kappa',1\}\xi p+6\max\left\{\frac{1}{\kappa'^2},1\right\}\xi^2 p\right) \\
                                 & \leq \kappa^2\kappa' d n^2\left(32d\xi\max\{\kappa',\frac{1}{\kappa'^2} \}   +6\max\left\{\frac{1}{\kappa'^2},1\right\}\right)\xi^2 p                                                          \\
                                 & \leq 40\kappa^2 d^2n^2\left(2\log\frac{8m(d+1)}{\delta}\right)^{3/2}\max\{\kappa'^2,\frac{1}{\kappa'}\} p.
                            \end{aligned}
                        \end{equation*}
                        By Markov's inequality, with probability at least $1-\delta/2$ over the choice of $\vtheta^0$, we have
                        \begin{equation*}
                            \begin{aligned}
                                &\norm{G^{[\vw]}(\vtheta(t))-G^{[\vw]}(\vtheta(0))}_\mathrm{F} \\
                                 & \leq \sum_{i,j=1}^n\Big|G_{ij}^{[\vw]}(\vtheta(t))-G^{[\vw]}_{ij}(\vtheta(0))\Big|                                                                                                                                                            \\
                                 & \leq \max\left\{\kappa'^2,\frac{1}{\kappa'}\right\}\frac{40\kappa^2 d^2n^2\left(2\log\frac{8m(d+1)}{\delta}\right)^{3/2} p}{\delta/2}                                                                                                         \\
                                 & \leq \max\left\{\kappa'^2,\frac{1}{\kappa'}\right\}\frac{80\kappa^2 d^2n^2 2\sqrt{2}}{\delta}\left(\log\frac{8m(d+1)}{\delta}\right)^{3/2}\frac{2\sqrt{2}dn\sqrt{\RS(\vtheta^0)}}{m\kappa\left(\lambda_a/\kappa'+\kappa'\lambda_{\vw}\right)} \\
                                 & \leq\kappa\max\left\{\kappa'^2,\frac{1}{\kappa'}\right\}\frac{640 d^3n^3\left(\log\frac{8m(d+1)}{\delta}\right)^{3/2}\sqrt{\RS(\vtheta^0)}\delta^{-1}}{m\left(\lambda_a/\kappa'+\kappa'\lambda_{\vw}\right)}.
                            \end{aligned}
                        \end{equation*}
                        If
                        \begin{equation*}
                            m\geq\frac{5120\delta^{-1}d^3n^3\left(\log\frac{8m(d+1)}{\delta}\right)^{3/2}\sqrt{\RS(\vtheta^0)}}{\lambda^2},
                        \end{equation*}
                        then noticing that
                        \begin{equation*}
                            \frac{1}{\lambda^2} \geq\frac{\kappa'^2}{\left(\kappa'\lambda_{\vw}\right)^2} \geq\frac{\kappa'^2}{\left(\lambda_a/\kappa'+\kappa'\lambda_{\vw}\right)^2}
                        \end{equation*}
                        and
                        \begin{equation*}
                            \begin{aligned}
                                \frac{1}{\lambda^2}
                                 & \geq\frac{1}{\left(4\left(\frac{1}{27}(\lambda_a)^3\lambda_{\vw}\right)^{1/4}\right) ^2} \\
                                 & \geq\frac{1}{\left(\lambda_a/\sqrt{\kappa'}+(\kappa')^{3/2}\lambda_{\vw}\right)^2}       \\
                                 & =\frac{1}{\left(\lambda_a/\kappa'+\kappa'\lambda_{\vw}\right)^2\kappa'},
                            \end{aligned}
                        \end{equation*}
                        we have \begin{equation}\label{thm-proof:step2}
                            \norm{G^{[\vw]}(\vtheta(t))-G^{[\vw]}(\vtheta(0))}_\mathrm{F}\leq\frac{1}{8}\kappa^2\left(\frac{1}{\kappa'}\lambda_a+\kappa'\lambda_{\vw}\right).
                        \end{equation}
                  \item For $t\in[0,t^*)$,
                        \begin{equation*}
                            \RS(\vtheta(t))\leq\exp\left(-\frac{m\kappa^2}{n}\left(\frac{1}{\kappa'}\lambda_a+\kappa'\lambda_{\vw}\right)t\right)\RS(\vtheta^0)\leq\exp\left(-\frac{2m\kappa^2\lambda}{n}\right)\RS(\vtheta^0).
                        \end{equation*}
                        Suppose that $t^*<+\infty$ then one can take the limit $t\to t^*$ in~\eqref{thm-proof:step1} and~\eqref{thm-proof:step2}. This will lead to a contradiction with the definition of $t^*$. Therefore $t^*=+\infty$.
              \end{enumerate}
        \item By Proposition \ref{prop..InitialThetaNorm}, we have with probability at least $1-2\exp\left(-\frac{C_0m(d+1)}{4C^2_{\psi,1}}\right)$ over the choice of $\vtheta^0$,
              \begin{equation*}
                  \norm{\vtheta^0}\geq \sqrt{\frac{m(d+1)}{2}},
              \end{equation*}
              Therefore, with probability at least $1-\delta-2\exp\left(-\frac{C_0m(d+1)}{4C^2_{\psi,1}}\right)$ over the choice of $\vtheta^0$, we have
              \begin{equation*}
                  \begin{aligned}
                      \sup\limits_{t\in[0,+\infty)}\frac{\norm{\vtheta(t)-\vtheta^0}_2}{\norm{\vtheta^0}_2}
                       & \leq \sqrt{\frac{2}{m(d+1)}}\sup\limits_{t\in[0,+\infty)}\norm{\vtheta(t)-\vtheta^0}_2                                          \\
                       & \leq  \sqrt{\frac{2}{m(d+1)}} \frac{4\sqrt{d+1}dn\sqrt{\log\frac{8m(d+1)}{\delta}}\sqrt{\RS(\vtheta^0)}}{\sqrt{m}\kappa\lambda} \\
                       & \leq \frac{1}{m\kappa}\frac{4\sqrt{2}dn\sqrt{\log\frac{8m(d+1)}{\delta}}\sqrt{\RS(\vtheta^0)}}{\lambda}                         \\
                       & \lesssim \frac{1}{m\kappa}\log m.
                  \end{aligned}
              \end{equation*}
    \end{enumerate}
\end{proof}
\begin{remark}
    The proof indicates more quantitative conditions on $m$ and $\kappa$ for Proposition~\ref{prop..ThetaLazyRegime} to hold:
    \begin{equation}
        m\geq \frac{16n^2d^2C_{\psi,d}^2}{\lambda^2C_0}\log\frac{16n^2}{\delta},
    \end{equation}
    and
    \begin{equation}
        \begin{aligned}
            m\kappa\geq\max\Bigg\{\frac{2\sqrt{2d}n\sqrt{\RS(\vtheta^0)}}{\lambda}, & \;\frac{128\sqrt{2}d^3n^2\left(\log\frac{8m(d+1)}{\delta}\sqrt{\RS(\vtheta^0)}\right)}{\lambda^2},                \\
                                                                                    & \frac{5120\delta^{-1}d^3n^3\left(\log\frac{8m(d+1)}{\delta}\right)^{3/2}\sqrt{\RS(\vtheta^0)}}{\lambda^2}\Bigg\}.
        \end{aligned}
    \end{equation}
\end{remark}

\begin{proposition}[$\vw$-lazy training]\label{prop..WLazyRegime}
    Given $\delta\in(0,1)$ and the sample set $S = {\{(\vx_i, y_i)\}}_{i=1}^n\subset\Omega$ with $\vx_i$'s drawn i.i.d.\ from some unknown distribution $\fD$. Suppose that Assumption~\ref{assump..lambda} and Assumption~\ref{assump..gammagamma'} hold.  Suppose that $\gamma'>\gamma-1$, $\gamma'>0$, and the dynamics \eqref{eq..MainDynamics}--\eqref{eq..MainInitialization} is considered. Then for sufficiently large $m$, with probability at least $1-\delta$ over the choice of $\vtheta^0$, we have
    \begin{enumerate}[(a)]
        \item \begin{equation*}
                  \begin{aligned}
                      \sup\limits_{t\in[0,+\infty)}\norm{\vtheta_{\vw}(t)-\vtheta_{\vw}^0}_2
                      \leq \sup\limits_{t\in[0,+\infty)}\norm{\vtheta(t)-\vtheta^0}_2
                      \lesssim\frac{1}{\sqrt{m}\kappa}\log m.
                  \end{aligned}
              \end{equation*}
        \item $\RS(\vtheta(t))\leq\exp\left(-\frac{m\kappa^2\lambda_a t}{\kappa'n}\right)\RS(\vtheta^0)$. \\
              Moreover we have with probability at least $1-\delta-2\exp\left(-\frac{C_0m(d+1)}{4C_{\psi,1}^2}\right)$.
        \item \begin{equation*}
                  \begin{aligned}
                      \sup\limits_{t\in[0,+\infty)}\frac{\norm{\vtheta(t)-\vtheta^0}_2}{\norm{\vtheta^0}_2}
                       & \lesssim\frac{1}{m\kappa}\log m, \quad (\text{not }\ll 1), \\
                      \sup\limits_{t\in[0,+\infty)}\frac{\norm{\vtheta_{\vw}(t)-\vtheta_{\vw}^0}_2}{\norm{\vtheta_{\vw}}_2}
                       & \lesssim\frac{\kappa'}{m\kappa}\log m, \quad (\ll 1).
                  \end{aligned}
              \end{equation*}
    \end{enumerate}
\end{proposition}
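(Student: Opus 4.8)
The plan is to follow the architecture of the proof of Proposition~\ref{prop..ThetaLazyRegime}, but with the convergence of the empirical risk now driven entirely by the output‑layer Gram matrix $\mG^{[a]}$. The key structural fact is that $\gamma'>0$ forces $\kappa'<1$ for all sufficiently large $m$, so the ``$\vw$-lazy'' toolkit developed just above---Proposition~\ref{prop:exp_a} (the lower bound on $\lambda_{\min}(\mG^{[a]}(\vtheta^0))$), the $\vw$-lazy local‑in‑time exponential decay proposition, and the $\vw$-lazy parameter‑change proposition---is available. Throughout, write $\xi=\sqrt{2\log\frac{8m(d+1)}{\delta}}$ and $p_a=\frac{2\sqrt{2}dn\sqrt{\RS(\vtheta^0)}}{m\kappa\lambda_a/\kappa'}$, and recall the definitions of $\fN_a(\vtheta^0)$ and the exit time $t^*_a$ given before Proposition~\ref{prop:exp_a}. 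The crux is a continuation argument establishing $t^*_a=+\infty$; once that holds, parts (a)--(c) follow by substitution.

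For the continuation step I would argue exactly as in step~(i) of the proof of Proposition~\ref{prop..ThetaLazyRegime}, replacing the $\vtheta$-lazy parameter bounds by their $\vw$-lazy counterparts. On $[0,t^*_a)$ the $\vw$-lazy parameter‑change proposition gives $\max_k\norm{\vw_k(t)-\vw_k(0)}_\infty\le 2\xi p_a$ and $\max_k\abs{a_k(t)-a_k(0)}\le\frac{2}{\kappa'}\xi p_a$; this requires $p_a\le\frac12$, which holds for large $m$ because $m\kappa/\kappa'=m^{1-\gamma+\gamma'}\to+\infty$ under $\gamma'>\gamma-1$, while $\sqrt{\RS(\vtheta^0)}$ is at most polylogarithmic in $m$ by the bound on the initial empirical risk (together with the ASI trick when $\gamma\le\frac12$). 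With $g^{[a]}_{ij}(\vw)=\sigma(\vw^\T\vx_i)\sigma(\vw^\T\vx_j)$, the mean value theorem, $\norm{\nabla g^{[a]}_{ij}(\vw)}_\infty\le 2\norm{\vw}_1$, and $\norm{c\vw_k(t)+(1-c)\vw_k(0)}_1\le 2d\xi$, one gets $\abs{G^{[a]}_{ij}(\vtheta(t))-G^{[a]}_{ij}(\vtheta^0)}\le\frac{\kappa^2}{\kappa'}8d^2\xi^2 p_a$, so that, after expanding $p_a$ (where the explicit factors $\kappa'$ cancel),
\begin{equation*}
    \norm{\mG^{[a]}(\vtheta(t))-\mG^{[a]}(\vtheta^0)}_\mathrm{F}\le\frac{\kappa}{m}\cdot\frac{16\sqrt{2}\,d^3n^2\xi^2\sqrt{\RS(\vtheta^0)}}{\lambda_a}.
\end{equation*}
Imposing the largeness condition $m\kappa/\kappa'\gtrsim d^3n^2\xi^2\sqrt{\RS(\vtheta^0)}/\lambda_a^2$ (again valid for large $m$ in this regime) makes the right‑hand side at most $\frac{1}{4}\frac{\kappa^2}{\kappa'}\lambda_a$, i.e.\ $\vtheta(t)\in\fN_a(\vtheta^0)$ on $[0,t^*_a)$; if $t^*_a<+\infty$, letting $t\to t^*_a$ contradicts the definition of $t^*_a$, whence $t^*_a=+\infty$.

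Given $t^*_a=+\infty$, part (b) is the $\vw$-lazy local‑in‑time exponential decay proposition applied on all of $[0,+\infty)$. For part (a), summing the per‑neuron bounds and using $1/\kappa'^2\ge d$ for large $m$,
\begin{equation*}
    \norm{\vtheta(t)-\vtheta^0}_2\le\sqrt{m}\sqrt{(\tfrac{2}{\kappa'}\xi p_a)^2+d(2\xi p_a)^2}\le 2\sqrt{2}\,\frac{\sqrt{m}\,\xi p_a}{\kappa'}=\frac{8dn\xi\sqrt{\RS(\vtheta^0)}}{\sqrt{m}\kappa\lambda_a}\lesssim\frac{1}{\sqrt{m}\kappa}\log m,
\end{equation*}
since $p_a/\kappa'=\frac{2\sqrt{2}dn\sqrt{\RS(\vtheta^0)}}{m\kappa\lambda_a}$, and $\norm{\vtheta_\vw(t)-\vtheta_\vw^0}_2$ is only smaller. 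For part (c) I would further intersect with the event of Proposition~\ref{prop..InitialThetaNorm}, on which $\norm{\vtheta^0}_2\ge\sqrt{m(d+1)/2}$ and $\norm{\vtheta_\vw^0}_2\ge\sqrt{md/2}$: dividing the bound from (a) by $\norm{\vtheta^0}_2$ gives $\sup_t\frac{\norm{\vtheta(t)-\vtheta^0}_2}{\norm{\vtheta^0}_2}\lesssim\frac{1}{m\kappa}\log m\sim m^{\gamma-1}\log m$, which need not be $\ll 1$ when $\gamma\ge1$; whereas the sharper input‑layer estimate $\norm{\vtheta_\vw(t)-\vtheta_\vw^0}_2\le\sqrt{m}\cdot 2\sqrt{d}\,\xi p_a$ retains the factor $\kappa'$, so dividing by $\norm{\vtheta_\vw^0}_2$ yields $\sup_t\mathrm{RD}(\vtheta_\vw(t))\lesssim\frac{\kappa'}{m\kappa}\log m\sim m^{\gamma-1-\gamma'}\log m\to 0$, i.e.\ $\ll 1$.

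I expect the continuation step to be the main obstacle, and in particular the bookkeeping needed to recognize that the genuine largeness requirement is on $m\kappa/\kappa'$ rather than on $m\kappa$. In this regime $\gamma$ may exceed $1$, so $m\kappa$ need not diverge and $\vtheta$ itself need not stay close to $\vtheta^0$ (hence the ``not $\ll 1$'' annotation), yet the input weights remain lazy because every explicit power of $\kappa'$ in $p_a$ and in the Gram‑matrix perturbation is compensated by the $1/\kappa'$ hidden in the effective learning rate of the $a$-layer. Keeping these cancellations straight---together with the case split on ASI versus $\gamma>\frac{1}{2}$ for controlling $\RS(\vtheta^0)$---is the delicate part of the argument.
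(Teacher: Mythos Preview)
Your proposal is correct and follows essentially the same approach as the paper's proof: a continuation argument on $t^*_a$ using the $\vw$-lazy parameter-change bounds and the mean value theorem on $g^{[a]}_{ij}$ to control $\norm{\mG^{[a]}(\vtheta(t))-\mG^{[a]}(\vtheta^0)}_\mathrm{F}$, followed by Proposition~\ref{prop..InitialThetaNorm} for part~(c). Your identification of $m\kappa/\kappa'$ as the relevant diverging quantity and of the $\kappa'$ cancellations in $p_a$ is exactly what the paper uses; if anything, your bookkeeping is slightly cleaner than the paper's, which routes part~(a) through an unnecessary $\max\{\kappa',1/\kappa'\}$ intermediate step before arriving at the same $\frac{1}{\sqrt{m}\kappa}\log m$ bound.
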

\begin{proof}
    Let $t\in[0, t^*_a)$, $p_a=\frac{2\sqrt{2}dn\sqrt{\RS(\vtheta^0)}}{m\kappa\lambda_a/\kappa'}$, and $\xi=\sqrt{2\log\frac{8m(d+1)}{\delta}}$.
    \begin{enumerate}[(a)]
        \item From Proposition~\ref{prop:a_w} we have with probability at least $1-\delta/2$ over the choice of $\vtheta^0$
              \begin{equation*}
                  \begin{aligned}
                      \sup\limits_{t\in[0,t^*_a)}\norm{\vtheta_{\vw}(t)-\vtheta_{\vw}^0}_2
                       & \leq \sup\limits_{t\in[0,t^*_a)}\norm{\vtheta(t)-\vtheta^0}_2                                                                                                                                 \\
                       & \leq \left[\left(\frac{m}{\kappa'^2}+md\right)\left(2\sqrt{2\log\frac{8m(d+1)}{\delta}}p_a\right)^2\right]^{\frac{1}{2}}                                                                      \\
                       & =\sqrt{m(d+1)}2\frac{1}{\kappa'}\sqrt{2\log\frac{8m(d+1)}{\delta}}\frac{\sqrt{2}dn\sqrt{\RS(\vtheta^0)}}{m\kappa\lambda_a/\kappa'}                                                            \\
                       & \leq\max\left\{\kappa',\frac{1}{\kappa'}\right\}\frac{4\sqrt{d+1}dn\sqrt{\log\frac{8m(d+1)}{\delta}}\sqrt{\RS(\vtheta^0)}}{\sqrt{m}\kappa\left(\lambda_a/\kappa'+\kappa'\lambda_{\vw}\right)} \\
                       & \leq\frac{8\sqrt{d+1}dn\sqrt{\log\frac{8m(d+1)}{\delta}}\sqrt{\RS(\vtheta^0)}}{\sqrt{m}\kappa\lambda_a}.
                  \end{aligned}
              \end{equation*}

        \item We divide this proof into the following two steps.
              \begin{enumerate}[(i)]
                  \item Let
                        \begin{equation*}
                            G^{[a]}_{ij}(\vw) := \sigma(\vw^\T\vx_i)\sigma(\vw^\T\vx_j),
                        \end{equation*}
                        then
                        \begin{equation*}
                            \abs{G_{ij}^{[a]}(\vtheta(t)) - G_{ij}^{[a]}(\vtheta(0))} \leq \frac{\kappa^2}{m\kappa'}\sum_{k=1}^m \abs{g^{[a]}_{ij}(\vw_k(t)) - g^{[a]}_{ij}(\vw_k(0))}.
                        \end{equation*}
                        By the mean value theorem, for somce $c\in(0,1)$,
                        \begin{equation*}
                            \abs{g^{[a]}_{ij}(\vw_k(t)) - g^{[a]}_{ij}(\vw_k(0))} \leq \norm{\nabla g_{ij}\left(c\vw_k(t) + (1-c)\vw_k(0)\right)}_{\infty}\norm{\vw_k(t) - \vw_k(0)}_1,
                        \end{equation*}
                        where
                        \begin{equation*}
                            \nabla g_{ij}^{[a]}(\vw)=\sigma'(\vw\cdot\xi_i)\sigma(\vw^\T\vx_j)\vx_i+\sigma(\vw^\T\vx_i)\sigma'(\vw^\T\vx_j)\vx_j,
                        \end{equation*}
                        and
                        \begin{equation*}
                            \norm{\nabla g_{ij}^{[a]}(\vw)}_{\infty}\leq 2\norm{\vw}_1.
                        \end{equation*}
                        From Proposition~\ref{prop:a_w} we have with probability at least $1-\delta/2$ over the choice of $\vtheta^0$,
                        \begin{align*}
                            \norm{\vw_k(t)-\vw_k(0)}_{\infty} & \leq p_a\alpha(t)\kappa'\leq 2\xi p_a, \\
                            \norm{\vw_k(t)-\vw_k(0)}_1        & \leq 2d\xi p_a.
                        \end{align*}
                        Thus
                        \begin{equation*}
                            \begin{aligned}
                                \norm{c\vw_k(t) + (1-c)\vw_k(0)}_1
                                 & \leq d\left(\norm{\vw_k(0)}_{\infty} + \norm{\vw_k(t) - \vw_k(0)}_{\infty}\right) \\
                                 & \leq d\left(\xi+2\xi p_a\right)                                                   \\
                                 & \leq 2d\xi.
                            \end{aligned}
                        \end{equation*}
                        Then
                        \begin{equation*}
                            \abs{G_{ij}^{[a]}(\vtheta(t)) - G_{ij}^{[a]}(\vtheta(0))} \leq 8d^2\frac{\kappa^2}{\kappa'}\xi^2p_a,
                        \end{equation*}
                        and
                        \begin{equation*}
                            \begin{aligned}
                                \norm{\mG^{[a]}(\vtheta(t)) - \mG^{[a]}(\vtheta(0))}_\mathrm{F}
                                 & \leq 16d^2n\left(\log\frac{8m(d+1)}{\delta}\right)\frac{\kappa^2}{\kappa'}p_a                              \\
                                 & \leq\frac{32\sqrt{2}d^3n^2\left(\log\frac{8m(d+1)}{\delta}\right)\sqrt{\RS(\vtheta^0)}\kappa}{m\lambda_a}.
                            \end{aligned}
                        \end{equation*}
                        If
                        \begin{equation*}
                            \frac{m\kappa}{\kappa'}\geq\frac{256\sqrt{2}d^3n^2\left(\log\frac{8m(d+1)}{\delta})\right)\sqrt{\RS(\vtheta^0)}}{\lambda_a^2},
                        \end{equation*}
                        then we have
                        \begin{equation}\label{thm-proof:w-step1}
                            \norm{\mG^{[a]}(\vtheta(t)) - \mG^{[a]}(\vtheta(0))}_\mathrm{F}\leq\frac{1}{8}\frac{\kappa^2}{\kappa'}\lambda_a.
                        \end{equation}
                  \item For $t\in[0,t^*_a)$ by Prop.~\ref{prop:exp_RS},
                        \begin{equation*}
                            \RS(\vtheta(t))\leq\exp\left(-\frac{m\kappa^2\lambda_at}{n\kappa'}\right)\RS(\vtheta^0).
                        \end{equation*}
                        Suppose that $t^*_a<+\infty$ then one can take the limit $t\to t^*_a$ in~\eqref{thm-proof:w-step1}. This will lead to a contradiction with the definition of $t^*_a$. Therefore $t^*_a=+\infty$.
              \end{enumerate}
        \item By Proposition \ref{prop..InitialThetaNorm}, we have with probability at least $1-2\exp(-\frac{C_0m(d+1)}{4C_{\psi,1}^2})$ over the choice of $\vtheta^0$,
              \begin{equation*}
                  \norm{\vtheta^0}^2_2\geq\frac{d+1}{2}m.
              \end{equation*}
              So with probability at least $1-\delta-2\exp\left(-\frac{C_0m(d+1)}{4C^2_{\psi,1}}\right)$ over the choice of $\vtheta^0$, we have
              \begin{equation*}
                  \begin{aligned}
                      \sup\limits_{t\in[0,+\infty)}\frac{\norm{\vtheta(t)-\vtheta^0}_2}{\norm{\vtheta^0}_2}
                       & \leq \sqrt{\frac{2}{m(d+1)}}\sup\limits_{t\in[0,+\infty)}\norm{\vtheta(t)-\vtheta^0}_2                                          \\
                       & \leq  \sqrt{\frac{2}{m(d+1)}} \frac{8\sqrt{d+1}dn\sqrt{\log\frac{8m(d+1)}{\delta}}\sqrt{\RS(\vtheta^0)}}{\sqrt{m}\kappa\lambda} \\
                       & \leq \frac{1}{m\kappa}\frac{8\sqrt{2}dn\sqrt{\log\frac{8m(d+1)}{\delta}}\sqrt{\RS(\vtheta^0)}}{\lambda_a}                       \\
                       & \lesssim \frac{1}{m\kappa}\log m.
                  \end{aligned}
              \end{equation*}
              Similarly, by Proposition \ref{prop..InitialThetaNorm}, we have with probability at least $1-2\exp(-\frac{C_0m(d+1)}{4C_{\psi,1}^2})$ over the choice of $\vtheta^0$,
              \begin{equation*}
                  \norm{\vtheta_{\vw}^0}^2_2\geq\frac{d}{2}m.
              \end{equation*}
              So with probability at least $1-\delta-2\exp\left(-\frac{C_0md}{4C^2_{\psi,1}}\right)$ over the choice of $\vtheta^0$, we have
              \begin{equation*}
                  \begin{aligned}
                      \sup\limits_{t\in[0,+\infty)}\frac{\norm{\vtheta_{\vw}(t)-\vtheta_{\vw}^0}_2}{\norm{\vtheta_{\vw}^0}_2}
                       & \leq \sqrt{\frac{2}{dm}}\sup\limits_{t\in[0,+\infty)}\norm{\vtheta_{\vw}(t)-\vtheta_{\vw}^0}_2                                           \\
                       & \leq  \sqrt{\frac{2}{dm}}\frac{\kappa'}{\sqrt{m}\kappa}\frac{8\sqrt{d}dn\sqrt{\log\frac{8m(d+1)}{\delta}}\sqrt{\RS(\vtheta^0)}}{\lambda} \\
                       & \lesssim \frac{\kappa'}{m\kappa}\log m.
                  \end{aligned}
              \end{equation*}
    \end{enumerate}
\end{proof}
We remark that in fact we can prove a similar result about the change of parameter $a_k$'s. We state this result as follows without proof.
\begin{proposition}[$a$-lazy training]\label{prop..ALazyRegime}
    Given $\delta\in(0,1)$ and the sample set $S = {\{(\vx_i, y_i)\}}_{i=1}^n\subset\Omega$ with $\vx_i$'s drawn i.i.d.\ from some unknown distribution $\fD$. Suppose that Assumption~\ref{assump..lambda} and Assumption~\ref{assump..gammagamma'} hold.  Suppose that $\gamma'<\gamma-1$, $\gamma'<0$, and the dynamics \eqref{eq..MainDynamics}--\eqref{eq..MainInitialization} is considered. Then for sufficiently large $m$, with probability at least $1-\delta$ over the choice of $\vtheta^0$, we have
    \begin{enumerate}[(a)]
        \item \begin{equation*}
                  \begin{aligned}
                      \sup\limits_{t\in[0,+\infty)}\norm{\vtheta_{a}(t)-\vtheta_{a}^0}_2
                      \leq \sup\limits_{t\in[0,+\infty)}\norm{\vtheta(t)-\vtheta^0}_2
                      \lesssim\frac{1}{\sqrt{m}\kappa}\log m.
                  \end{aligned}
              \end{equation*}
        \item $\RS(\vtheta(t))\leq\exp\left(-\frac{m\kappa^2\kappa'\lambda_{\vw} t}{n}\right)\RS(\vtheta^0)$. \\
              Moreover we have with probability at least $1-\delta-2\exp\left(-\frac{C_0m(d+1)}{4C_{\psi,1}^2}\right)$ over the choice of $\vtheta^0$, we have
        \item \begin{equation*}
                  \begin{aligned}
                      \sup\limits_{t\in[0,+\infty)}\frac{\norm{\vtheta(t)-\vtheta^0}_2}{\norm{\vtheta^0}_2}
                       & \lesssim\frac{1}{m\kappa}\log m, \quad (\text{not }\ll 1), \\
                      \sup\limits_{t\in[0,+\infty)}\frac{\norm{\vtheta_{a}(t)-\vtheta_{a}^0}_2}{\norm{\vtheta_{a}}_2}
                       & \lesssim\frac{1}{m\kappa\kappa'}\log m, \quad (\ll 1).
                  \end{aligned}
              \end{equation*}
    \end{enumerate}
\end{proposition}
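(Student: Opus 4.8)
The plan is to prove Proposition \ref{prop..ALazyRegime} by running the argument of Proposition \ref{prop..WLazyRegime} \emph{mutatis mutandis}, interchanging the roles of $a_k$ and $\vw_k$: the mobility factor on $a$ is now $1/\kappa'\ll1$ (since $\gamma'<0$ forces $\kappa'\gg1$), so $a$ plays the part that $\vw$ played in the $\vw$-lazy case, $\lambda_a$ is replaced by $\lambda_{\vw}$, and the ``slow'' Gram matrix $\mG^{[a]}$ is replaced by $\mG^{[\vw]}$. First I would set
\[
  \fN_{\vw}(\vtheta^0):=\left\{\vtheta\mid\norm{\mG^{[\vw]}(\vtheta)-\mG^{[\vw]}(\vtheta^0)}_\mathrm{F}\leq\tfrac14\kappa^2\kappa'\lambda_{\vw}\right\},\qquad t^*_{\vw}:=\inf\{t\mid\vtheta(t)\notin\fN_{\vw}(\vtheta^0)\}.
\]
Applying Theorem \ref{thm:sub_exp} and Lemma \ref{lem..subexponentialnorm}(ii) to the entries of $\mG^{[\vw]}$ --- exactly as in Proposition \ref{prop:exp_a} but with $\mK^{[\vw]}$ in place of $\mK^{[a]}$ --- gives $\lambda_{\min}(\mG^{[\vw]}(\vtheta^0))\geq\tfrac34\kappa^2\kappa'\lambda_{\vw}$ with the stated probability once $m\gtrsim\tfrac{n^2d^2C_{\psi,d}^2}{\lambda_{\vw}^2C_0}\log\tfrac{n^2}{\delta}$, hence $\lambda_{\min}(\mG^{[\vw]}(\vtheta))\geq\tfrac12\kappa^2\kappa'\lambda_{\vw}$ on $\fN_{\vw}(\vtheta^0)$. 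Since $\mG=\mG^{[a]}+\mG^{[\vw]}$ with $\mG^{[a]}$ positive semidefinite, the identity $\frac{\D}{\D t}\RS(\vtheta(t))=-\tfrac{m}{n^2}\ve^\T\mG\ve$ yields $\frac{\D}{\D t}\RS\leq-\tfrac{m}{n^2}\ve^\T\mG^{[\vw]}\ve\leq-\tfrac{2m}{n}\lambda_{\min}(\mG^{[\vw]})\RS$, and an integration gives, for $t\in[0,t^*_{\vw})$, the local-in-time bound $\RS(\vtheta(t))\leq\exp(-\tfrac{m\kappa^2\kappa'\lambda_{\vw}}{n}t)\RS(\vtheta^0)$, which is part (b).

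Next I would bound the parameter increments on $[0,t^*_{\vw})$ exactly as in Proposition \ref{prop:a_w}. From $\abs{\nabla_{a_k}\RS}\leq\sqrt2\,d\kappa\,\omega(t)\sqrt{\RS}$, $\norm{\nabla_{\vw_k}\RS}_\infty\leq\sqrt2\,\kappa\,\alpha(t)\sqrt{\RS}$, the mobility factors $1/\kappa'$ (on $a$) and $\kappa'$ (on $\vw$), and the decay rate above, one gets $\abs{a_k(t)-a_k^0}\lesssim\tfrac{p_{\vw}}{\kappa'}\omega(t)$ and $\norm{\vw_k(t)-\vw_k^0}_\infty\lesssim\kappa'p_{\vw}\,\alpha(t)$ with $p_{\vw}:=\tfrac{2\sqrt2\,dn\sqrt{\RS(\vtheta^0)}}{m\kappa\kappa'\lambda_{\vw}}$; crucially the $\kappa'$ in the $\vw$-mobility cancels against the $\kappa'$ in the rate, so $\kappa'p_{\vw}\lesssim\tfrac{dn\sqrt{\RS(\vtheta^0)}}{m\kappa\lambda_{\vw}}$ and the \emph{absolute} displacement of $\vw$ stays small even though $\kappa'\gg1$. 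A bootstrap together with Lemma \ref{lem..InitialParameter} (valid once $m\kappa\gtrsim dn\sqrt{\RS(\vtheta^0)}/\lambda_{\vw}$, so that $p_{\vw}\leq\tfrac12$ and, using $\kappa'\geq1$, $\kappa'p_{\vw}\leq\tfrac12$) then gives $\alpha(t),\omega(t)\lesssim\sqrt{\log\tfrac{m(d+1)}{\delta}}$. Summing the increments over $k$ gives $\sup_t\norm{\vtheta(t)-\vtheta^0}_2\lesssim\tfrac{1}{\sqrt m\kappa}\log m$ (part (a)), and dividing by $\norm{\vtheta^0}_2\geq\sqrt{m(d+1)/2}$ and $\norm{\vtheta_a^0}_2\geq\sqrt{m/2}$ from Proposition \ref{prop..InitialThetaNorm} (each holding with probability $1-2\exp(-\tfrac{C_0m(d+1)}{4C^2_{\psi,1}})$) yields the two relative-change bounds in part (c); the precise sub-region of validity is pinned down by the requirements $m\kappa\to\infty$ and $m\kappa\kappa'\to\infty$ (so that those bounds are $\ll1$).

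The hard part is showing $t^*_{\vw}=+\infty$, i.e.\ that $\mG^{[\vw]}$ never leaves $\fN_{\vw}(\vtheta^0)$, and this is where the argument genuinely departs from the $\vw$-lazy case: whereas $\mG^{[a]}$ depends only on $\vw$, the matrix $G^{[\vw]}_{ij}=\tfrac{\kappa^2\kappa'(\vx_i\cdot\vx_j)}{m}\sum_k a_k^2\,\sigma'(\vw_k^\T\vx_i)\sigma'(\vw_k^\T\vx_j)$ depends on $a$ (through $a_k^2$) \emph{and} on the ReLU sign pattern. I would therefore split $\abs{G^{[\vw]}_{ij}(\vtheta(t))-G^{[\vw]}_{ij}(\vtheta^0)}$ into (i) the contribution of the change of $a_k^2$, bounded per neuron by the increments above, and (ii) the contribution of flips of $\sigma'(\vw_k^\T\vx_i)$, which occur only when $\abs{\vw_k^0\cdot\vx_i}\lesssim\norm{\vw_k(t)-\vw_k^0}_\infty$; since $\vx_i\in[0,1]^d$ with $(\vx_i)_d=1$ forces $\norm{\vx_i}_2\geq1$, anti-concentration of a one-dimensional standard Gaussian bounds the probability of such a flip by $O(\norm{\vw_k(t)-\vw_k^0}_\infty)$, and a Markov-inequality-plus-union-bound argument identical to step (ii) in the proof of Proposition \ref{prop..ThetaLazyRegime} then gives $\norm{\mG^{[\vw]}(\vtheta(t))-\mG^{[\vw]}(\vtheta^0)}_\mathrm{F}\leq\tfrac18\kappa^2\kappa'\lambda_{\vw}$ throughout $[0,t^*_{\vw})$ once $m$ and $m\kappa$ are large enough. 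Letting $t\to t^*_{\vw}$ then contradicts the definition of $t^*_{\vw}$ unless $t^*_{\vw}=+\infty$, so all estimates are global in time. I expect this ``step (ii) for $\mG^{[\vw]}$'' --- the interplay of the small displacement of $\vw$ (which itself relies on the rate carrying a factor $\kappa'$), the anti-concentration estimate for sign flips, and the controlled growth of $a_k^2$ --- to be the only genuinely nonlinear ingredient and hence the main obstacle.
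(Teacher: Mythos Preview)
The paper supplies no proof for this proposition: immediately before it, the text says ``We remark that in fact we can prove a similar result about the change of parameter $a_k$'s. We state this result as follows without proof.'' Your plan --- run the argument of Proposition~\ref{prop..WLazyRegime} with the roles of $a_k$ and $\vw_k$ swapped, $\lambda_a\leftrightarrow\lambda_{\vw}$, and the controlled Gram block $\mG^{[a]}$ replaced by $\mG^{[\vw]}$ --- is precisely the route the paper has in mind, and the structure of your sketch is sound.

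You also correctly spot the one place where the symmetry is imperfect: $\mG^{[a]}$ depends on $\vw$ only through the Lipschitz map $\sigma$, so in the $\vw$-lazy proof a mean-value estimate suffices to keep it near its initial value; by contrast $\mG^{[\vw]}$ involves the discontinuous $\sigma'(\vw_k^{\T}\vx_i)$, so proving $t^*_{\vw}=+\infty$ genuinely needs the anti-concentration~/~Markov argument of step~(ii) in Proposition~\ref{prop..ThetaLazyRegime}, and this in turn forces $\kappa'p_{\vw}\to 0$ (i.e.\ $m\kappa\to\infty$, that is $\gamma<1$) on top of $p_{\vw}\to 0$ (i.e.\ $m\kappa\kappa'\to\infty$, that is $\gamma'<1-\gamma$). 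Your identification of these as the effective hypotheses is accurate; note that the Remark following the proof of Theorem~\ref{thm..CondensedRegime} invokes this proposition under the condition $\gamma'<1-\gamma$, which is the natural mirror of $\gamma'>\gamma-1$ and suggests the stated hypothesis $\gamma'<\gamma-1$ is a misprint.
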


\section{Proof of Theorem~\ref{thm..CondensedRegime}}
In order to characterize the condensed regime, we need a crucial proposition that ravels a natrual relation between $a_k(t)$ and $\vw_k(t)$ during the GD training dynamics.
\begin{proposition}\label{prop..a-w-est}
    Consider the GD training dynamics~\eqref{eq..MainDynamics}--\eqref{eq..MainInitialization}, then we have
    \begin{equation}
        \abs{a_k(t)}\leq \frac{1}{\kappa'}\norm{\vw_k(t)}_2+\abs{a_k^0},
    \end{equation}
    which holds for any $t\geq 0$ and $k\in[m]$.
\end{proposition}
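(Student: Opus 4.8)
The plan is to produce the estimate as an immediate consequence of a conserved quantity along the gradient flow, exploiting the positive $1$-homogeneity of $\ReLU$ (the familiar ``layer balancedness'' phenomenon). No concentration argument and no hypothesis on $\gamma,\gamma'$ will enter, which is consistent with the claim holding for all $\kappa,\kappa'$ and all $t\ge 0$.

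First I would recall the explicit equations of motion from \eqref{eq..MainDynamics}, namely $\dot a_k = -\frac{\kappa}{\kappa' n}\sum_{i=1}^n e_i\sigma(\vw_k^\T\vx_i)$ and $\dot\vw_k = -\frac{\kappa\kappa'}{n}\sum_{i=1}^n e_i a_k\sigma'(\vw_k^\T\vx_i)\vx_i$. The key algebraic fact is the pointwise identity $z\,\sigma'(z)=\sigma(z)$ for every $z\in\sR$, using the convention $\sigma'(z)=1$ for $z>0$ and $\sigma'(z)=0$ for $z\le 0$. Applying it to $\vw_k^\T\vx_i$ gives
\begin{align*}
    \frac{\D}{\D t}\norm{\vw_k(t)}_2^2 = 2\vw_k\cdot\dot\vw_k
     & = -\frac{2\kappa\kappa'}{n}\sum_{i=1}^n e_i a_k\,\sigma'(\vw_k^\T\vx_i)\,\vw_k^\T\vx_i \\
     & = -\frac{2\kappa\kappa'}{n}\sum_{i=1}^n e_i a_k\,\sigma(\vw_k^\T\vx_i),
\end{align*}
whereas $\frac{\D}{\D t}a_k^2(t) = 2a_k\dot a_k = -\frac{2\kappa}{\kappa' n}\sum_{i=1}^n e_i a_k\sigma(\vw_k^\T\vx_i)$. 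Comparing the two right-hand sides shows $\frac{\D}{\D t}\big(a_k^2(t) - \tfrac{1}{\kappa'^2}\norm{\vw_k(t)}_2^2\big)=0$, so $a_k^2 - \tfrac{1}{\kappa'^2}\norm{\vw_k}_2^2$ is constant in $t$ for each $k\in[m]$.

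I would then integrate this conservation law from $0$ to $t$ and drop the non-negative term $\tfrac{1}{\kappa'^2}\norm{\vw_k^0}_2^2$:
\[
    a_k^2(t) = \frac{1}{\kappa'^2}\norm{\vw_k(t)}_2^2 + a_k^2(0) - \frac{1}{\kappa'^2}\norm{\vw_k^0}_2^2 \le \frac{1}{\kappa'^2}\norm{\vw_k(t)}_2^2 + \abs{a_k^0}^2 .
\]
Taking square roots and using $\sqrt{x^2+y^2}\le \abs{x}+\abs{y}$ yields $\abs{a_k(t)}\le \tfrac{1}{\kappa'}\norm{\vw_k(t)}_2 + \abs{a_k^0}$, which is the assertion.

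The only point requiring care is the non-smoothness of $\ReLU$: the vector field in \eqref{eq..MainDynamics} contains $\sigma'$, which jumps where some $\vw_k^\T\vx_i$ vanishes. Here the relevant remark is that the identity $z\,\sigma'(z)=\sigma(z)$ holds for \emph{every} $z$ (the value at $z=0$ is $0$ on both sides regardless of the subgradient convention), so the computation of $\frac{\D}{\D t}\norm{\vw_k}_2^2$ above is valid at every time at which the dynamics is defined; since $t\mapsto(a_k(t),\vw_k(t))$ is absolutely continuous, the conserved quantity extends to all $t\ge 0$. This bookkeeping, rather than any genuinely hard estimate, is the main thing to get right.
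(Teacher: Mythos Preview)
Your argument is correct and is essentially identical to the paper's own proof: both exploit the $\ReLU$ identity $z\sigma'(z)=\sigma(z)$ to obtain the conserved quantity $a_k^2(t)-\tfrac{1}{\kappa'^2}\norm{\vw_k(t)}_2^2$, then drop the nonnegative initial term and use $\sqrt{x^2+y^2}\le\abs{x}+\abs{y}$. Your additional remark on the non-smoothness of $\ReLU$ is a welcome bit of care that the paper leaves implicit.
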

\begin{proof}
    Multiplying equations in~\eqref{eq..MainDynamics} by $\kappa'a_k$ and $\frac{\vw_k}{\kappa'}$ respectively, we obtain
    \begin{equation}\label{eq..energy}
        \left \{
        \begin{aligned}
            \kappa'\dot{a}_k a_k              & = -\frac{\kappa}{n}\sum_{i=1}^{n}e_i a_k\sigma(\vw_k^\T\vx_i),              \\
            \frac{1}{\kappa'}\dot{\vw}_k\vw_k & = -\frac{\kappa}{n}\sum_{i=1}^n e_i a_k\sigma'(\vw_k^\T\vx_i)\vw_k^\T\vx_i.
        \end{aligned}
        \right.
    \end{equation}
    Notice that for ReLU activation $\sigma(z)=z\sigma'(z)$, $z\in\sR$.
    by comparing the right hand side of~\eqref{eq..energy}, one can obtain
    \begin{equation*}
        \kappa'^2\frac{\D}{\D t}\abs{a_k}^2 = \frac{\D}{\D t}\norm{\vw_k}^2_2.
    \end{equation*}
    Integrating this from $0$ to $t$ leads to
    \begin{equation*}
        \kappa'^2\left(\abs{a_k(t)}^2-\abs{a_k^0}^2\right) = \norm{\vw_k(t)}^2_2-\norm{\vw_k^0}^2_2,
    \end{equation*}
    which then can be written as
    \begin{equation}
        \abs{a_k(t)}^2 = \frac{1}{\kappa'^2}\left(\norm{\vw_k(t)}^2_2-\norm{\vw_k^0}^2_2\right) + \abs{a_k^0}^2.
    \end{equation}
    Finally we have
    \begin{equation*}
        \abs{a_k(t)}\leq\sqrt{\frac{1}{\kappa'^2}\norm{\vw_k(t)}^2_2+\abs{a_k^0}^2}
        \leq
        \frac{1}{\kappa'}\norm{\vw_k(t)}_2+\abs{a_k^0}.
    \end{equation*}
\end{proof}
\begin{proof}[Proof of Theorem~\ref{thm..CondensedRegime}]
    By Assumption \ref{assump..well-trained}, there exits a $T^*>0$ such that
    \begin{equation*}
        \RS(\vtheta(T^*))\leq \frac{1}{32n}.
    \end{equation*}
    Without loss of generality, we assume $f(\vx_1)\geq \frac{1}{2}$.
    Therefore
    \begin{equation*}
        \frac{1}{2n}e_1(T^*)^2\leq \frac{1}{2n}\ve(T^*)^\T\ve(T^*)=\RS(\vtheta(T^*))\leq\frac{1}{32n},
    \end{equation*}
    which means
    \begin{equation*}
        \abs{e_1(T^*)}\leq \frac{1}{4}.
    \end{equation*}
    Recalling the definition that $e_1=\kappa f_{\vtheta}(\vx_1) - f(\vx_1)$, we have
    \begin{equation*}
        \kappa\sum_{k=1}^m a_k(T^*)\sigma(\vw_k(T^*)^\T\vx_1) \geq f(\vx_1) - \frac{1}{4} \geq \frac{1}{4}.
    \end{equation*}
    So
    \begin{equation*}
        \begin{aligned}
            \frac{1}{4\kappa}
             & \leq\sum_{k=1}^m a_k(T^*)\sigma(\vw_k(T^*)^\T\vx_1)                                                                                \\
             & \leq \sqrt{d}\sum_{k=1}^m\abs{a_k(T^*)}\norm{\vw_k(T^*)}_2                                                                         \\
             & \leq  \sqrt{d}\sum_{k=1}^m\left(\frac{1}{\kappa'}\norm{\vw_k(T^*)}_2+\abs{a_k^0}\right)\norm{\vw_k(T^*)}_2                         \\
             & = \sqrt{d}\left(\frac{1}{\kappa'}\sum_{k=1}^m\norm{\vw_k(T^*)}_2^2+\sum_{k=1}^m\abs{a_k^0}\norm{\vw_k(T^*)}_2\right)               \\
             & \leq
            \sqrt{d}\left(\frac{1}{\kappa'}\norm{\vtheta_{\vw}(T^*)}_2^2+\frac{1}{4}\norm{\vtheta_{a}^0}_2^2+\norm{\vtheta_{\vw}(T^*)}_2^2\right) \\
             & \leq
            2\sqrt{d}\max\left\{\frac{1}{\kappa'},1\right\}\norm{\vtheta_{\vw}(T^*)}_2^2+\frac{\sqrt{d}}{4}\norm{\vtheta_{a}^0}_2^2,
        \end{aligned}
    \end{equation*}
    where we have used Proposition~\ref{prop..a-w-est}.
    By Proposition \ref{prop..InitialThetaNorm}, we have with probability at least $1-2\exp(-\frac{C_0m(d+1)}{4C_{\psi,1}^2})$ over the choice of $\vtheta^0$,
    \begin{align*}
        \norm{\vtheta^0_{a}}
         & \leq \sqrt{\frac{3}{2}m},  \\
        \norm{\vtheta^0_{\vw}}
         & \leq \sqrt{\frac{3}{2}dm}.
    \end{align*}
    If $m\kappa\leq \frac{1}{3\sqrt{d}}$, then $\frac{3\sqrt{d}m}{8}\leq \frac{1}{8\kappa}$ and
    \begin{align*}
        \frac{1}{8\kappa}
        \leq \frac{1}{4\kappa}-\frac{3\sqrt{d}m}{8}
         & \leq \frac{1}{4\kappa}-\frac{\sqrt{d}}{4}\norm{\vtheta_{a}^0}_2^2                  \\
         & \leq 2\sqrt{d}\max\left\{\frac{1}{\kappa'},1\right\}\norm{\vtheta_{\vw}(T^*)}_2^2.
    \end{align*}
    Thus
    \begin{equation*}
        \frac{\min\{1,\kappa'\}}{16\sqrt{d}\kappa}\leq \norm{\vtheta_{\vw}(T^*)}_2^2.
    \end{equation*}
    Therefore
    \begin{equation*}
        \begin{aligned}
            \sup\limits_{t\in[0,+\infty)}\frac{\norm{\vtheta_{\vw}(t)-\vtheta_{\vw}^0}_2}{\norm{\vtheta^0_{\vw}}_2}
             & \geq \frac{\norm{\vtheta_{\vw}(T^*)-\vtheta_{\vw}^0}_2}{\norm{\vtheta^0}_2} \\
             & \geq \sqrt{\frac{\frac{\min\{1,\kappa'\}}{16\sqrt{d}\kappa}
            }{\frac{3}{2}dm}} - 1                                                          \\
             & \gtrsim \sqrt{\frac{\min\{1,\kappa'\}}{\kappa m}}.
        \end{aligned}
    \end{equation*}
    If $\gamma'<\gamma-1$, then
    \begin{equation*}
        \frac{\min\{1,\kappa'\}}{\kappa m}\gg 1,
    \end{equation*}
    which completes the proof.
\end{proof}
\begin{remark}
    Suppose that Assumption \ref{assump..lambda} and \ref{assump..gammagamma'} hold. If $\gamma>1$ and $\gamma'<1-\gamma$, then Theorem \ref{thm..CondensedRegime} can hold without taking Assumption \ref{assump..well-trained}. Actually, for any $\delta\in(0,1)$, Proposition \ref{prop..ALazyRegime} guarantees the Assumption \ref{assump..well-trained} with probability at least $1-\delta$ over the choice of $\vtheta^0$, when $m$ is sufficiently large. Therefore, under Assumptions \ref{assump..lambda} and \ref{assump..gammagamma'}, if $m$ is sufficiently large, then we have with probability at least $1-\delta$ over the choice of $\vtheta^0$, the relative change $\sup\limits_{t\in[0,+\infty)}\mathrm{RD}(\vtheta_{\vw}(t))\gg 1$.
\end{remark}

\section{Relative deviation of parameters}\label{sec:relapara}
For completion, we can also similarly define the slope of the relative deviation for $\vtheta$ and $a$ denoted by $S_{\vtheta}$ and $S_{a}$, respectively. As shown in
Fig.~\ref{fig:theta_a_slope} (a), the boundary for the $\vtheta$ is $\gamma=1$, regardless of $\gamma'$, that is, all parameters are close to their initialization after training. For output weight $a$, as shown in Fig.~\ref{fig:theta_a_slope}(b), the boundary consists of two rays, one is  $\gamma=1$ and
$\gamma'\geq 0$, the other is $\gamma+\gamma'=1$ and $\gamma'\leq 0$. This verifies that, in the area between  $\gamma=1$ and $\gamma-\gamma'=1$ of  $\gamma'\leq 0$, the change of scatter plot from a Gaussian initialization is induced by the change of $a$.

\begin{figure}
    \begin{centering}
        \subfloat[]{\begin{centering}
                \includegraphics[scale=0.4]{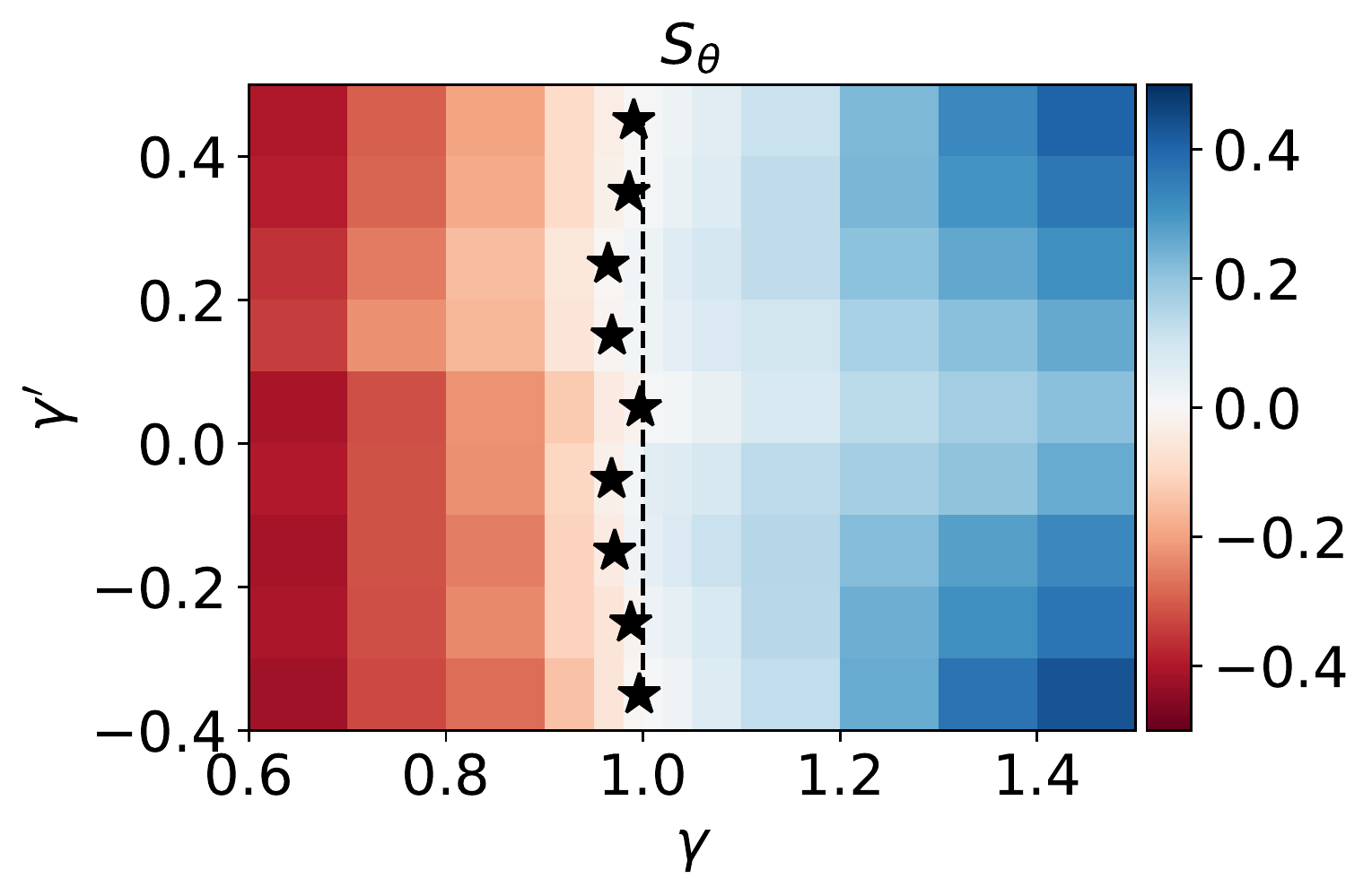}
                \par\end{centering}
        }\subfloat[]{\begin{centering}
                \includegraphics[scale=0.4]{pic/systemexplarg3/scalestudy3/rescale_w_slope.pdf}
                \par\end{centering}
        }
        \par\end{centering}
    \caption{$S_{\vtheta}$ in (a) and $S_{\vw}$ in (b) estimated on NNs of $1000,5000,10000,20000,40000$ neurons over $\gamma$ (ordinate) and $\gamma'$ (abscissa). The stars are zero points obtained by the linear interpolation over different $\gamma$ for each fixed $\gamma'$. Dashed lines are auxiliary lines.   \label{fig:theta_a_slope} }
\end{figure}

\bibliography{dl}
\end{document}